
\documentclass[nohyperref]{article}

\usepackage[accepted]{icml2022}


\usepackage[utf8]{inputenc}   
\usepackage[T1]{fontenc}    	
\usepackage{url}              		 
\usepackage{booktabs}       	
\usepackage{amsfonts}       	
\usepackage{nicefrac}       	  
\usepackage{microtype}      	
\usepackage[english]{babel}

\usepackage{graphicx}
\usepackage{color}
\usepackage{rotating}
\usepackage{tabularx}
\usepackage{pdflscape}
\usepackage{amsmath,amsthm,amssymb}
\usepackage{algorithm, algorithmic}
\usepackage{bbm,dsfont}
\usepackage{subfig}
\usepackage{wrapfig}
\usepackage{cancel}
\usepackage{enumerate, cases}
\usepackage{thmtools,thm-restate}
\usepackage{mathtools}

\usepackage[none]{hyphenat}
\usepackage{multirow}
\usepackage{makecell}
\usepackage{setspace}
\usepackage{pifont}
\usepackage{array}
\usepackage{enumitem}

\setlength{\emergencystretch}{3em}
\allowdisplaybreaks


\usepackage{hyperref}		 
\hypersetup{
	colorlinks	= true,		 
	urlcolor     = blue,	 
	linkcolor	 = purple, 	 
	citecolor    = violet    
}
\usepackage[capitalize]{cleveref}



\newcommand{\Regret}{\kR}


\newcommand{\Prob}[1]{\bP\left\{#1\right\}}

\newcommand{\R}{\bR}

\renewcommand{\phi}{\varphi}
\renewcommand{\epsilon}{\varepsilon}

\newcommand{\norm}[1]{\left\Vert #1 \right\Vert}



\newcommand{\al}[1]{ \begin{align} #1  \end{align}}
\newcommand{\eq}[1]{ \begin{equation} #1  \end{equation}}
\newcommand{\als}[1]{ \begin{align*} #1  \end{align*}}
\newcommand{\eqs}[1]{ \begin{equation*} #1  \end{equation*}}

\newcommand{\Lp}{\left(}
\newcommand{\Rp}{\right)}
\newcommand{\Lb}{\left[}
\newcommand{\Rb}{\right]}

\newcommand{\el}{\end{flushleft}}
\newcommand{\bl}{\begin{flushleft}}

\newcommand{\argmax}{\arg\!\max}


\newcommand{\bE}{\mathbb{E}}

\newcommand{\bN}{\mathbb{N}}

\newcommand{\bP}{\mathbb{P}}

\newcommand{\bR}{\mathbb{R}}

\newcommand{\cB}{\mathcal{B}}

\newcommand{\cD}{\mathcal{D}}

\newcommand{\cQ}{\mathcal{Q}}

\newcommand{\cZ}{\mathcal{Z}}

\newcommand{\kR}{\mathfrak{R}}

\theoremstyle{plain}

\newtheorem{lem}{Lemma}

\newtheorem{defi}{Definition}

\icmltitlerunning{Bayesian Optimization under Stochastic Delayed Feedback}

\begin{document}
	\twocolumn[
	\icmltitle{Bayesian Optimization under Stochastic Delayed Feedback} 

	\icmlsetsymbol{equal}{*}
	
	\begin{icmlauthorlist}
		\icmlauthor{Arun Verma}{equal,nus}
		\icmlauthor{Zhongxiang Dai}{equal,nus}
		\icmlauthor{Bryan Kian Hsiang Low}{nus}
	\end{icmlauthorlist}
	
	\icmlaffiliation{nus}{Department of Computer Science, National University of Singapore, Republic of Singapore}
	
	\icmlcorrespondingauthor{Arun Verma}{arunverma100@gmail.com}
	\icmlcorrespondingauthor{Zhongxiang Dai}{daiz9109@gmail.com}
	
	\icmlkeywords{Bayesian Optimization, Stochastic Delayed Feedback, Batch Bayesian Optimization, Contextual Gaussian Process Bandits}
	
	\vskip 0.3in
	]
	
	\printAffiliationsAndNotice{\icmlEqualContribution} 
	
	\begin{abstract}
		Bayesian optimization (BO) is a widely-used sequential method for zeroth-order optimization of complex and expensive-to-compute black-box functions. The existing BO methods assume that the function evaluation (feedback) is available to the learner immediately or after a fixed delay. Such assumptions may not be practical in many real-life problems like online recommendations, clinical trials, and hyperparameter tuning where feedback is available after a random delay. To benefit from the experimental parallelization in these problems, the learner needs to start new function evaluations without waiting for delayed feedback. In this paper, we consider the BO under stochastic delayed feedback problem. We propose algorithms with sub-linear regret guarantees that efficiently address the dilemma of selecting new function queries while waiting for randomly delayed feedback. Building on our results, we also make novel contributions to batch BO and contextual Gaussian process bandits. Experiments on synthetic and real-life datasets verify the  performance of our algorithms.	
	\end{abstract}

	\section{Introduction}
	\label{sec:introduction}

Bayesian optimization (BO) \citep{IEEE15_shahriari2015taking,Arxiv18_frazier2018tutorial,Book_garnett2021bayesian} is a popular and widely-used sequential method for zeroth-order optimization of unknown black-box functions that are complex and expensive to compute. The existing BO methods assume that the function evaluation (feedback) is available to the learner immediately or after a fixed delay. 
However, these assumptions are impractical in many real-life problems where feedback is available after a \emph{random delay}. To take advantage of the experimental parallelization in these problems, the learner needs to start new function evaluations without waiting for the randomly delayed feedback. 
We refer to this new BO problem as {\it `Bayesian Optimization under Stochastic Delayed Feedback'} (BO-SDF). In this paper, we propose algorithms that efficiently address the problem of selecting new function queries while waiting for randomly delayed feedback. Specifically, we answer the following question:
\newline
{\bf \emph{How to start a new function query when the observations of the past function queries are randomly delayed?}}

Many real-life applications can be cast as BO-SDF problems. 
For example, when performing clinical trials to discover new medicines, we need to optimize the amount of different raw materials in order to find their most effective composition using a small number of clinical trials \citep{Book_chow2006adaptive}, which is a natural application for BO due to its sample efficiency.
However, the evaluation of the effectiveness of a medicine needs to take into account the side effects which are usually not observed immediately but are instead revealed \emph{after a random period} due to the varying physiology of different patients.
Therefore, a natural challenge here is how to use BO to choose the next composition for testing, while accounting for the stochastically delayed observations from some previous clinical trials.
A similar challenge also arises when we aim to find the most effective dose of a new medicine through clinical trials \citep{PS21_takahashi2021bayesian}.

Another motivating example arises from online product recommendation \citep{KDD14_chapelle2014modeling,ADKDD17_diemert2017attribution}. Most online platforms make recommendations in milliseconds, but the user's response (i.e., watching a movie or buying a product) generally happens after a random time period which may range from hours to days or even weeks. Furthermore, the issue of randomly delayed feedback also plagues other common applications of BO such as hyperparameter tuning of machine learning (ML) models \citep{NIPS12_snoek2012practical} (e.g., the training time of a neural network depends on the number of layers, network width, among others) and material discovery \citep{MD16_ueno2016combo}.

The closest BO setting to our BO-SDF is batch BO \citep{desautels2014parallelizing,daxberger2017distributed,chowdhury2019batch} which also allows multiple queries to be performed on the black-box function in parallel. 
Both problems require choosing an input query to evaluate the black-box function while the observations of some previously selected inputs are not available (i.e., delayed).
However, there are important differences because the delays are fixed in a certain way in batch BO but can be random in BO-SDF.
Interestingly, batch BO can be considered as a special case of our BO-SDF problem and hence, our algorithms proposed for BO-SDF can be applied to batch BO while achieving important improvements (\cref{sec:batch_bo}).

Following the practice of BO, in order to choose the input queries to quickly approach the global optima, we employ a \emph{Gaussian process} (GP) \citep{Book_rasmussen2003gaussian} to model the black-box function, which builds a posterior belief of the function using the history of function queries and their feedback.
However, when constructing this posterior belief in BO-SDF problems, we are faced with the important question of {\it how to handle the randomly delayed feedback whose observations are not available}.
To this end, we replace the unavailable delayed feedback by the minimum function value,\footnote{In many problems, the minimum value of a function is known, e.g., a user's minimum response in online recommendation systems is `no click' (i.e., $0$) and the minimum accuracy for hyperparameter tuning of machine learning models is `$0$'.} 
which we refer to as \emph{censored feedback.}
The use of censored feedback, interestingly, improves the exploration in BO-SDF problems (\cref{subsec:gp:and:censored:feedback}) and 
hence leads to better theoretical and empirical performances. With the posterior belief built using censored feedback, we propose algorithms using upper confidence bound (UCB) and Thompson sampling (TS), both of which enjoy sub-linear regret guarantees.
Specifically, our contributions are as follows:
\vspace{-3mm}
\begin{itemize}
	\setlength\itemsep{-0.07em}
	\item We introduce and formalize the notion of \emph{censored feedback} in the BO-SDF problem, and propose UCB- and TS-based algorithms with sub-linear regret guarantees (\cref{sec:bo}). 
	
	\item Since batch BO is a special case of BO-SDF, we apply our proposed algorithms (\cref{sec:bo}) to batch BO and show that our algorithms enjoy tighter regret upper bounds than classic batch BO algorithms (\cref{sec:batch_bo}). This gain is mainly attributed to the censored feedback, which leads to a better exploration policy.
	
	\item We extend our algorithms for the BO-SDF problem to contextual Gaussian process bandits \citep{NIPS11_krause2011contextual} with stochastic delayed feedback in \cref{sec:contx_bandits}. This new contribution is itself of independent interest.

	\item Our experimental results in \cref{sec:experiments} validate the different performance aspects of our proposed algorithms on synthetic and real-life datasets.
\end{itemize}

	\section{Problem Setting}		
	\label{sec:problem_setting}

This paper considers the Bayesian optimization (BO) problem where the function evaluations (feedback) are available to the learner after a random delay.
Let $\cQ \subset \R^n$ be a finite domain of all function queries where $n \ge 1$.\footnote{We assume $\cQ$ to be finite, but it is straightforward to extend our theoretical results to a compact domain using a suitable discretization scheme \citep[Lemma 2]{Arxiv21_li2021gaussian}.}
The learner selects a new function query as soon as enough resources are available (or when experiment parallelization is possible). 
We denote the $t$-th query by $x_t \in \cQ$.
After the learner selects the query $x_t$ to evaluate the unknown black-box function $f$, the environment generates a noisy function evaluation, denoted by feedback $y_t = f(x_t) + \epsilon_t$.  
We assume that $\epsilon_t$ is an $R$-sub-Gaussian noise. The learner observes $y_t$ only after a stochastic delay $d_t$, which is generated from an unknown distribution $\cD$.

We refer to this new BO problem as \emph{`Bayesian Optimization under Stochastic Delayed Feedback'} (BO-SDF).
The unknown function $f$, query space $\cQ$, and unknown delay distribution $\cD$ identify an instance of the BO-SDF problem. The optimal query ($x^\star$) has the maximum function value (global maximizer), i.e., $x^\star \in \argmax_{x \in \cQ} f(x)$. After selecting a query $x_t$, the learner incurs a penalty (or \emph{instantaneous regret}) $p_t = f(x^\star) - f(x_t)$.

Since the optimal function query is unknown, we sequentially estimate this using the available information of the selected queries and observed feedback. 
Our goal is to learn a sequential policy for selecting queries that finds the optimal query (or global maximizer) as quickly as possible. There are two common performance measures for evaluating a sequential policy. 
The first performance measure is \emph{simple regret}. Let $x_t$ be the $t$-th function query selected by the policy. Then, after observing $T$ function evaluations, the simple regret is 
$
    r_T = f(x^\star) - \max_{t \in \{1,\ldots, T\}}f(x_t).
$ 
Any good policy must have no regret, i.e., $\lim_{T \to \infty}{r_T} = 0$. 
The second performance measure of the policy is \emph{cumulative regret} which is the sum of total penalties incurred by the learner.
After observing $T$ function evaluations, the cumulative regret of a policy is given by 
$
    \Regret_T = \sum_{t=1}^T \left(f(x^\star) - f(x_t)\right).
$
Any good policy should have sub-linear regret, i.e., $\lim_{T \to \infty}{\Regret_T}/T = 0.$
Even though simple regret and cumulative regret are different performance measures, having a policy with no regret or sub-linear regret implies that the policy will eventually converge to the optimal query (or global optimizer).

	\section{BO under Stochastic Delayed Feedback}
	\label{sec:bo}

A function estimator is the main component of any BO problem for achieving good performance. We use a Gaussian process (GP) as a surrogate for the posterior belief of the unknown function \citep{Book_rasmussen2003gaussian,ICML10_srinival2010gaussian}. To deal with the delayed feedback, we  will introduce the notion of \emph{censored feedback} in GPs, in which the delayed feedback is replaced by the minimum function value. Finally, we will propose UCB- and TS-based algorithms with sub-linear regret guarantees.

\subsection{Estimating Function using Gaussian Processes}
\label{subsec:gp:and:censored:feedback}
We assume that the unknown function $f$ belongs to the \emph{reproducing kernel Hilbert space} (RKHS) associated with a kernel $k$, i.e., $\norm{f}_{k}\leq \cB_f$ where $\cB_f>0$. By following standard practice in the literature, we assume without loss of generality that $k(x,x')\leq 1$ for all $x,x'\in\mathcal{Q}$. We also assume that feedback is bounded, i.e., $|y_t| \leq \cB_y$ for all $t \ge 1$. This assumption is not restrictive since any bounded function can be re-scaled to satisfy this boundedness requirement. For example, the validation accuracy of machine learning models in our hyperparameter tuning experiment in \cref{sec:experiments} is bounded in $[0,1]$.

\paragraph{Censored Feedback.} 
The learner has to efficiently exploit available information about queries and feedback to achieve a better performance. The estimated GP posterior belief has two parts, i.e., mean function and covariance function. The posterior covariance function only needs the function queries for building tighter confidence intervals, whereas the posterior mean function needs both queries and their feedback. 
Incorporating the queries with delayed feedback in updating the posterior mean function leads to a better estimate, but the question is \emph{how to do it}. One possible solution is to replace the delayed feedback with some value, but then the question is \emph{what should that value be}.

To pick the suitable value for replacing delayed feedback, we motivate ourselves from real-life applications like the online movies recommendations problem. 
When an online platform recommends a movie, it expects the following responses from the users -- not watching the movie, watching some part of the movie, and watching the entire movie. 
The platform can map these responses to $[0, 1]$ where $0$ is assigned for `not watching the movie', $1$ for `watching the entire movie', and an appropriate value in $(0,1)$ for `watching some part of the movie.'
There are two reasons for the delay in the user's response: The user does not like the recommended movie or does not have time to watch the movie. 
Before the user starts watching the movie, the platform can replace the user's delayed response with `not watching the movie' (i.e., `$0$') and update it later when more information is available. Therefore, we can replace the delayed feedback with the minimum function value.\footnote{
It is possible to replace the delayed feedback with other values. When it is replaced by the current GP posterior mean, it has the same effect as the \emph{hallucination} in batch BO (more details in \cref{sec:batch_bo}). Another possible value for replacing the delayed feedback is the $k$-nearest posterior means where $k>0$.}
We refer to this replaced feedback as \emph{censored feedback}. 
The idea of censored feedback is also used in other problems such as censoring delayed binary feedback \citep{ICML20_vernade2020linear} and censoring losses \citep{NeurIPS19_verma2019censored, Arxiv21_verma2021censored} or rewards \citep{INFOCOM20_verma2020stochastic} in online resource allocation problems.

When the minimum function value replaces the delayed feedback, the posterior mean around that query becomes small, consequently ensuring that the learner will not select the queries around the queries with delayed feedback before exploring the other parts of the query's domain. Therefore, the censored feedback leads to better exploration than replacing delayed feedback with a larger value (e.g., current posterior mean). 
However, the queries with delayed feedback need to be stored so that the posterior mean can be updated when feedback is revealed. 
Due to several reasons (e.g., limited storage, making resources available to new queries), the learner cannot keep all queries with delayed feedback and replace the old queries with the latest ones.\footnote{Our motivation for removing older queries comes from online recommendation problems where the chance of observing feedback diminishes with time. The learner can also use a more complicated scheme for query removal.} It can degrade the algorithm's performance, as discussed in \cref{ssec:regret_analysis} and demonstrated by experiments in \cref{sec:experiments}.

Let an \emph{incomplete query} represent the function query with unobserved feedback and $m$ be the number of incomplete queries the learner can store. 
We assume that the random delay for a query is the number of new queries that need to be started (i.e., number of iterations elapsed) before observing its feedback. Therefore, the delay $d_t$ only takes values of non-negative integers.\footnote{We have introduced the delay based on \emph{iterations} here to bring out the main ideas of censored feedback. Any suitable discrete distribution (e.g., Poisson) can model such delays. We have also introduced the censored feedback with \emph{time-based} delay, which is more practical and can be modeled using a suitable continuous distribution (see \cref{assec:censoredTime} for more details).} 
The censored feedback of the $s$-th query before selecting the $t$-th query is denoted by $\tilde{y}_{s,t} = y_s\mathbbm{1}\{d_s \leq \min(m, t - s)\}$. That is, the learner censors the incomplete queries by assigning $0$ to them.\footnote{We can make the minimum value of any bounded function $0$ by adding to it the negative of the minimum function value. It does not change the optimizer since the optimizer of a function is invariant to the constant function shift. By doing this, we can represent censored feedback by an indicator function, making censored feedback easier to handle in theoretical analysis.}
Now, the GP posterior mean and covariance functions can be expressed using available function queries and their censored feedback, as follows:
\al{
	\label{eq:gp_posterior}
	\begin{split}
		\mu_{t-1}(x) &= \mathbf{k}_{t-1}(x)^\top\mathbf{K}_{t,\lambda}^{-1}\ \tilde{\mathbf{y}}_{t-1}\ ,\\
		\sigma_{t-1}^2(x,x') &= k(x,x')-\mathbf{k}_{t-1}(x)^\top\mathbf{K}_{t,\lambda}^{-1}\ \mathbf{k}_{t-1}(x')
	\end{split}
}
where $\mathbf{K}_{t,\lambda} = \mathbf{K}_{t-1}+\lambda\mathbf{I}$, $\mathbf{k}_{t-1}(x)= (k(x, x_{t'}))^{\top}_{t'\in[t-1]}$, $\tilde{\mathbf{y}}_{t-1}= (\tilde{y}_{s,t})^{\top}_{s\in[t-1]}$, $\mathbf{K}_{t-1}= (k(x_{t'}, x_{t''}))_{t',t''\in[t-1]}$, and  
$\lambda$ is a regularization parameter that ensures $\mathbf{K}_{t,\lambda}$ is a positive definite matrix. We denote $\sigma_{t-1}^2(x)=\sigma_{t-1}^2(x,x)$.

\subsection{Algorithms for the BO-SDF problem}
After having a posterior belief of the unknown function, the learner can decide which query needs to be selected for the subsequent function evaluation. Since the feedback is only observed for the selected query, the learner needs to deal with the exploration-exploitation trade-off. We use UCB- and TS-based algorithms for BO-SDF problems that handle the exploration-exploitation trade-off efficiently. 

{\bf UCB-based Algorithm for BO-SDF problems.} 
UCB is an effective and widely-used technique for dealing with the exploration-exploitation trade-off in various sequential decision-making problems \citep{ML02_auer2002finite,COLT11_garivier2011kl}. We propose a UCB-based algorithm named \ref{alg:GP-UCB-SDF} for the BO-SDF problems. It works as follows: When the learner is ready to start the $t$-th function query, he firstly updates the GP posterior mean and standard deviation defined in \cref{eq:gp_posterior} using available censored noisy feedback. Then, he selects the input $x_t$ for the next function evaluation by maximizing the following UCB value:
\eq{
	\label{eq:ucb_value}
	\textstyle x_t = \argmax_{x\in\mathcal{Q}} \left(\mu_{t-1} + \nu_t\sigma_{t-1}(x)\right)
}
where $\nu_t = \cB_y\sum^{t-1}_{s=t-m}\sigma_{t-1}(x_s) + \beta_t$, $\beta_t = \cB_f + (R + \cB_y)\sqrt{2(\gamma_{t-1}+1+\log(2/\delta))}$, $\delta\in(0,1)$, and $\gamma_t$ is the maximum information gain about the function $f$ from any set of $t$ function queries \citep{ICML10_srinival2010gaussian}.

\begin{algorithm}[!ht]
	\renewcommand{\thealgorithm}{GP-UCB-SDF}
	\floatname{algorithm}{}
	\caption{UCB-based Algorithm for BO-SDF}
	\label{alg:GP-UCB-SDF}
	\begin{algorithmic}[1]
		\STATE \textbf{Input:} $\lambda > 0$, $m \in \bN$
		\FOR{$t=1,2,3,...$}
			\STATE Update GP posterior defined in \cref{eq:gp_posterior} using available censored noisy feedback
			\STATE Select input $x_t$ using \cref{eq:ucb_value} and compute $f$ at $x_t$
			\STATE Keep observing the noisy feedback until the starting of next function evaluation
		\ENDFOR
	\end{algorithmic}
\end{algorithm}

After selecting the input $x_t$, the learner queries the unknown function $f(x_t)$ and the environment generates a noisy feedback $y_t$ with an associated delay $d_t \sim \cD$. The learner observes $y_t$ only after a delay of $d_t$ iff $d_t \le m$. Before starting a new function evaluation, the learner keeps observing the noisy feedback of past queries. The same process is repeated for the subsequent function evaluations.

{\bf TS-based Algorithm for BO-SDF problems.} 
In contrast to the deterministic UCB, TS \citep{JSTOR33_thompson1933likelihood,COLT12_agrawal2012analysis,AISTATS13_agrawal2013further,ALT12_kaufmann2012thompson} is based on Bayesian updates that select inputs according to their probability of being the best input. Many works have shown that TS is empirically superior to UCB-based algorithms \citep{NIPS11_chapelle2011empirical,chowdhury2019batch}. We name our TS-based algorithm \ref{alg:GP-TS-SDF} which works similarly to \ref{alg:GP-UCB-SDF}, except that the $t$-th function query is selected as follows:
\eq{
	\label{eq:ts_value}
	\textstyle x_t = \argmax_{x \in \cQ}f_t(x)
}
where $f_t \sim \mathcal{GP}(\mu_{t-1}(\cdot),\nu_t^2\sigma_{t-1}^2(\cdot))$.

\begin{algorithm}[!ht]
	\renewcommand{\thealgorithm}{GP-TS-SDF}
	\floatname{algorithm}{}
	\caption{TS-based Algorithm for BO-SDF}
	\label{alg:GP-TS-SDF}
	\begin{algorithmic}[1]
	    \STATE \textbf{Input:} $\lambda > 0$, $m \in \bN$
		\FOR{$t=1,2,3,...$}
			\STATE Update GP posterior defined in \cref{eq:gp_posterior} using available censored noisy feedback 
			\STATE Select input $x_t$ using \cref{eq:ts_value} and compute $f$ at $x_t$
			\STATE Keep observing the noisy feedback until the starting of next function evaluation%
		\ENDFOR
	\end{algorithmic}
\end{algorithm}

\subsection{Regret Analysis}
\label{ssec:regret_analysis}
Let $\rho_m=\mathbb{P}(d_s\leq m)$ for $s\geq1$ be the probability of observing delayed feedback within the next $m$ iterations.
The following result gives the confidence bounds of GP posterior mean function and is important to our subsequent regret analysis:

\begin{restatable}[Confidence Ellipsoid]{thm}{confBound}
	\label{thm:confBound} Let $x\in\cQ$, $\lambda>1$, and $t\geq 1$. Then, with probability at least $1-\delta$, 
	\eqs{
	    \left|\mu_{t-1}(x) - \rho_m f(x)\right|\leq \nu_t \sigma_{t-1}(x).
	}
\end{restatable}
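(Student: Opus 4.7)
The plan is to decompose $\mu_{t-1}(x) - \rho_m f(x)$ into three pieces by expanding each censored observation $\tilde{y}_{s,t}$ around its ``target'' $\rho_m f(x_s)$, then bounding each piece separately. Writing $\tilde{y}_{s,t} = Z_s(f(x_s) + \epsilon_s)$ with $Z_s := \mathbbm{1}\{d_s \le \min(m, t-s)\}$, I would split $Z_s = \rho_m + b_s + (Z_s - \bE[Z_s])$ where $b_s := \bE[Z_s] - \rho_m$. Since $\bE[Z_s] = \rho_{\min(m,t-s)}$, we have $b_s = 0$ whenever $s \le t-m$, while $|b_s| \le 1$ otherwise, so the bias contribution $b_s f(x_s)$ is nonzero only for the most recent $m$ iterations and is bounded in absolute value by $\cB_y$. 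Substituting into \cref{eq:gp_posterior} then yields $\mu_{t-1}(x) - \rho_m f(x) = T_1 + T_2 + T_3$, where $T_1$ is the noiseless RKHS approximation error for predicting $\rho_m f$ from the vector $\rho_m \mathbf{f}_{t-1}$, $T_2$ is the contribution from the mean-zero sequence $\tilde{\epsilon}_s := (Z_s - \bE[Z_s]) f(x_s) + Z_s \epsilon_s$, and $T_3$ is the contribution of the bias vector $(b_s f(x_s))_s$.

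For $T_1$, I would use the standard RKHS estimate: because $\|\rho_m f\|_k \le \cB_f$, the reproducing property together with the identity $\| k(x,\cdot) - \mathbf{k}_{t-1}(x)^\top \mathbf{K}_{t,\lambda}^{-1} \mathbf{k}_{t-1}(\cdot) \|_k \le \sigma_{t-1}(x)$ gives $|T_1| \le \cB_f\, \sigma_{t-1}(x)$. For $T_2$, I would invoke the self-normalized RKHS concentration inequality: conditional on the past, $\tilde{\epsilon}_s$ is mean-zero, and combining the $R$-sub-Gaussianity of $\epsilon_s$ (passed through $|Z_s|\le 1$) with a Hoeffding bound for the bounded term $(Z_s - \bE[Z_s])f(x_s)$ gives an overall sub-Gaussian parameter of $R + \cB_y$. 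The concentration inequality then produces $|T_2| \le (R+\cB_y)\sqrt{2(\gamma_{t-1}+1+\log(2/\delta))}\,\sigma_{t-1}(x)$ with probability at least $1-\delta$.

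The novel step is bounding $T_3$. Here I would exploit the identity $\mathbf{K}_{t,\lambda}\mathbf{e}_s = \mathbf{k}_{t-1}(x_s) + \lambda \mathbf{e}_s$, which rearranges to $\mathbf{k}_{t-1}(x)^\top \mathbf{K}_{t,\lambda}^{-1} \mathbf{e}_s = \sigma_{t-1}^2(x,x_s)/\lambda$. Combining this with the Cauchy--Schwarz bound $|\sigma_{t-1}^2(x,x_s)| \le \sigma_{t-1}(x)\sigma_{t-1}(x_s)$ for the posterior covariance, together with the hypothesis $\lambda > 1$, gives $|T_3| \le \cB_y\, \sigma_{t-1}(x)\sum_{s=t-m}^{t-1}\sigma_{t-1}(x_s)$. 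Summing $|T_1| + |T_2| + |T_3|$ and recalling the definitions of $\beta_t$ and $\nu_t$ produces exactly $\nu_t\, \sigma_{t-1}(x)$, as required.

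The main obstacle I anticipate is not any individual estimate but rather setting up the filtration so that the self-normalized concentration applies rigorously to $\tilde{\epsilon}_s$. The censoring indicator $Z_s$ depends on the delay $d_s$, which may only be partially observed by iteration $t$, and on the time-varying cap $\min(m, t-s)$, so some care is needed to specify a filtration with respect to which the $\tilde{\epsilon}_s$ form a genuine martingale-difference sequence that is jointly conditionally sub-Gaussian with parameter $R + \cB_y$; this involves a mild independence assumption between $\epsilon_s$ and $d_s$ given the past. All remaining steps reduce to linear-algebraic identities and standard RKHS/GP tools.
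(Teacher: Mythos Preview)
Your three–term decomposition and the individual bounds are essentially the paper's proof: both arrive at the $\cB_f$ RKHS–approximation term, a recent–window term $\cB_y\sum_{s=t-m}^{t-1}\sigma_{t-1}(x_s)$, and a self–normalized noise term with parameter $R+\cB_y$. Your route to the recent–window bound via the kernel identity $\mathbf{k}_{t-1}(x)^{\top}\mathbf{K}_{t,\lambda}^{-1}\mathbf{e}_s=\lambda^{-1}\sigma_{t-1}^2(x,x_s)$ together with Cauchy--Schwarz on the posterior covariance is a clean alternative to the paper's feature–map/$V_t(\lambda)^{-1}$–norm computation, and lands on the same expression.

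The one substantive difference is precisely the filtration issue you flag. You peel off only the \emph{deterministic} bias $b_s=\bE[Z_s]-\rho_m$ for $s>t-m$ and leave the random part $(Z_s-\bE[Z_s])f(x_s)$ inside $\tilde\epsilon_s$; since $Z_s=\mathbbm{1}\{d_s\le\min(m,t-s)\}$ depends on $t$ for those indices, the noise sequence $\{\tilde\epsilon_s\}_{s<t}$ changes as $t$ grows, so it is not a single martingale–difference sequence to which the self–normalized bound applies uniformly in $t$. The paper sidesteps this by peeling off the full \emph{random} correction $y_s\bigl(\mathbbm{1}\{d_s\le t-s\}-\mathbbm{1}\{d_s\le m\}\bigr)$ over the last $m$ indices and bounding it deterministically by $\cB_y$; the residual noise then involves only $\mathbbm{1}\{d_s\le m\}$, which is $t$–independent, and the concentration applies with no filtration gymnastics. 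If you push your argument through, this is exactly the fix you will be forced into, at which point the two proofs coincide. A secondary cosmetic difference: the paper applies the self–normalized bound twice (once for $\epsilon_s$ with parameter $R$, once for $y_s(\mathbbm{1}\{d_s\le m\}-\rho_m)$ with parameter $\cB_y$) and union–bounds over $\delta/2$ each, whereas you merge them into a single $(R+\cB_y)$–sub-Gaussian sequence; both yield the same $\beta_t$.
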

The detailed proofs of \cref{thm:confBound} and other results below are given in Appendix~\ref{app:sec:proof}. Now, we state the regret upper bounds of our proposed \ref{alg:GP-UCB-SDF} and \ref{alg:GP-TS-SDF}:
\begin{restatable}[\ref{alg:GP-UCB-SDF}]{thm}{ucbRegret}
    \label{thm:ucbRegret}
    Let $C_1=\sqrt{{2}/{\log(1+\lambda^{-1})}}$. Then, with probability at least $1-\delta$,
    \eqs{
        \Regret_T \leq \frac{2}{\rho_m}\left( C_1\beta_T\sqrt{T\gamma_T} + m\cB_y C_1^2 \gamma_T \right).
    }
\end{restatable}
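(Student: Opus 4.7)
The proof follows the standard UCB regret decomposition, but adapted to two features peculiar to BO-SDF: the confidence ellipsoid from \cref{thm:confBound} controls $\mu_{t-1}(x)-\rho_m f(x)$ rather than $\mu_{t-1}(x)-f(x)$, and the confidence width $\nu_t$ carries an extra term $\cB_y\sum_{s=t-m}^{t-1}\sigma_{t-1}(x_s)$ induced by censoring that must be bounded on top of the usual GP-UCB summation.

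First I would establish a per-round regret bound. Conditioning on the $(1-\delta)$-event of \cref{thm:confBound}, I chain the confidence ellipsoid at $x^\star$, the UCB selection rule \eqref{eq:ucb_value}, and the confidence ellipsoid at $x_t$ to obtain
\[
\rho_m f(x^\star)\leq \mu_{t-1}(x^\star)+\nu_t\sigma_{t-1}(x^\star)\leq \mu_{t-1}(x_t)+\nu_t\sigma_{t-1}(x_t)\leq \rho_m f(x_t)+2\nu_t\sigma_{t-1}(x_t),
\]
so that $f(x^\star)-f(x_t)\leq 2\nu_t\sigma_{t-1}(x_t)/\rho_m$. Summing over $t$ and expanding $\nu_t=\beta_t+\cB_y\sum_{s=t-m}^{t-1}\sigma_{t-1}(x_s)$ yields
\[
\Regret_T\leq \tfrac{2}{\rho_m}\Bigl(\underbrace{\textstyle\sum_{t=1}^T \beta_t\sigma_{t-1}(x_t)}_{(A)}+\cB_y\underbrace{\textstyle\sum_{t=1}^T\sigma_{t-1}(x_t)\sum_{s=t-m}^{t-1}\sigma_{t-1}(x_s)}_{(B)}\Bigr).
\]

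Term $(A)$ is bounded by the standard argument: $\beta_t$ is non-decreasing in $t$, Cauchy--Schwarz gives $\sum_t\sigma_{t-1}(x_t)\leq \sqrt{T\sum_t\sigma_{t-1}^2(x_t)}$, and the classical information-gain lemma gives $\sum_{t=1}^T\sigma_{t-1}^2(x_t)\leq 2\gamma_T/\log(1+\lambda^{-1})=C_1^2\gamma_T$; together these produce $(A)\leq C_1\beta_T\sqrt{T\gamma_T}$. For term $(B)$, the main technical step, I reindex by the lag $j=t-s$ to write $(B)=\sum_{j=1}^m\sum_{t>j}\sigma_{t-1}(x_t)\sigma_{t-1}(x_{t-j})$. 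For each fixed $j$, Cauchy--Schwarz bounds the inner sum by $\sqrt{\sum_t\sigma_{t-1}^2(x_t)}\cdot\sqrt{\sum_t\sigma_{t-1}^2(x_{t-j})}$. The first factor is $\leq C_1\sqrt{\gamma_T}$ as before. For the second factor I use monotonicity of the posterior variance---$\sigma_{t-1}(x_{t-j})\leq \sigma_{t-j-1}(x_{t-j})$, because $\mathbf{K}_{t-1,\lambda}$ contains all queries used to form $\mathbf{K}_{t-j-1,\lambda}$, and posterior variance only shrinks as queries are added (it depends only on the queries, not their censored values)---so substituting $s=t-j$ the sum reduces to $\sum_{s=1}^{T-j}\sigma_{s-1}^2(x_s)\leq C_1^2\gamma_T$. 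Hence each slice contributes at most $C_1^2\gamma_T$, and summing over the $m$ values of $j$ gives $(B)\leq m C_1^2\gamma_T$. Plugging the bounds on $(A)$ and $(B)$ into the decomposition yields $\Regret_T\leq \tfrac{2}{\rho_m}\bigl(C_1\beta_T\sqrt{T\gamma_T}+m\cB_y C_1^2\gamma_T\bigr)$.

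The main obstacle is controlling term $(B)$, which arises purely from the censored-feedback width and has no counterpart in vanilla GP-UCB. The crucial idea is the lag-slicing reindexing $j=t-s$ coupled with posterior-variance monotonicity: it converts each of the $m$ slices of the double sum back into the familiar quantity $\sum_s\sigma_{s-1}^2(x_s)$, so that the standard information-gain lemma can be applied uniformly and produces the additive $m\cB_y C_1^2\gamma_T$ term (which is benign compared to $C_1\beta_T\sqrt{T\gamma_T}$ whenever $m$ is not too large).
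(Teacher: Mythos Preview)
Your argument is correct and follows the same overall architecture as the paper: the per-round bound via \cref{thm:confBound} and the UCB rule, the split $\nu_t=\beta_t+\cB_y\sum_{s=t-m}^{t-1}\sigma_{t-1}(x_s)$ into terms $(A)$ and $(B)$, the standard information-gain bound for $(A)$, and the use of posterior-variance monotonicity $\sigma_{t-1}(x_s)\leq\sigma_{s-1}(x_s)$ to reduce $(B)$ to $\sum_t\sigma_{t-1}^2(x_t)$. The only local difference is in how you collapse the double sum in $(B)$: you reindex by lag and apply Cauchy--Schwarz slice by slice, whereas the paper applies the AM--GM inequality $2ab\leq a^2+b^2$ directly to each product $\sigma_{t-1}(x_t)\sigma_{t-1}(x_s)$, obtaining $\sum_{t}\sum_{s=t-m}^{t-1}(\sigma_{t-1}^2(x_t)+\sigma_{s-1}^2(x_s))\leq 2m\sum_t\sigma_{t-1}^2(x_t)$. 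Both routes land on exactly the same bound $m C_1^2\gamma_T$ for $(B)$; the paper's AM--GM step is marginally shorter since it avoids the reindexing, while your Cauchy--Schwarz slicing makes the role of the $m$ lags a bit more transparent.
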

As expected, the regret inversely depends on $\rho_m$, i.e., the probability of observing delayed feedback within the next $m$ iterations.
If we ignore the logarithmic factors and constants in \cref{thm:ucbRegret}, then the regret of \ref{alg:GP-UCB-SDF} is upper bounded by $\tilde{O}\Lp {\rho^{-1}_m} \Lp \gamma_T \sqrt{T} + m\gamma_T \Rp \Rp$.

\begin{restatable}[\ref{alg:GP-TS-SDF}]{thm}{tsRegret}
    \label{thm:tsRegret}
    With probability at least $1-\delta$,
    \eqs{
        \Regret_T = \tilde{O}\Lp \frac{1}{\rho_m}\Lp \sqrt{T\gamma_T} (\sqrt{\gamma_T} + 1) + m(\gamma_T + \sqrt{T}) \Rp \Rp.
    }
\end{restatable}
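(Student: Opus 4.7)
The plan is to adapt the standard GP Thompson-sampling regret analysis to our setting, tracking the two novel ingredients: the $\rho_m$-scaled confidence from \cref{thm:confBound} and the inflated width $\nu_t = \cB_y\sum_{s=t-m}^{t-1}\sigma_{t-1}(x_s) + \beta_t$. I first condition on two high-probability events. Event $E_A$: the confidence ellipsoid of \cref{thm:confBound} holds for all $x \in \cQ$ and all $t \leq T$ (by a union bound over $t$). Event $E_B$: the posterior samples concentrate, $|f_t(x) - \mu_{t-1}(x)| \leq q_t \nu_t \sigma_{t-1}(x)$ for all $x \in \cQ$ and $t \leq T$, with $q_t = \Theta(\sqrt{\log(|\cQ|t^2/\delta)})$ obtained from Gaussian tail bounds and a union bound.

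Next, I establish an anti-concentration lower bound for the TS sample at $x^\star$. On $E_A$, $\rho_m f(x^\star) - \mu_{t-1}(x^\star) \leq \nu_t \sigma_{t-1}(x^\star)$, so standard Gaussian anti-concentration gives $\Pr[f_t(x^\star) \geq \rho_m f(x^\star) \mid H_{t-1}] \geq p$ for an absolute constant $p > 0$. Define the saturated set $S_t = \{x : \rho_m \Delta(x) > (q_t+1)\nu_t\sigma_{t-1}(x)\}$ with $\Delta(x) = f(x^\star) - f(x)$. For any saturated $x$, events $E_A$ and $E_B$ imply $f_t(x) \leq \rho_m f(x) + (q_t+1)\nu_t\sigma_{t-1}(x) < \rho_m f(x^\star)$, so whenever $f_t(x^\star) \geq \rho_m f(x^\star)$ the TS maximizer $x_t$ must lie outside $S_t$. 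Hence $\Pr[x_t \notin S_t \mid H_{t-1}] \geq p' > 0$ on $E_A \cap E_B$.

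I next bound the per-round regret. Let $\bar{x}_t \in \argmin_{x \notin S_t}\sigma_{t-1}(x)$, so $\rho_m \Delta(\bar{x}_t) \leq (q_t+1)\nu_t\sigma_{t-1}(\bar{x}_t)$. Combining this with $f_t(x_t) \geq f_t(\bar{x}_t)$ and $E_A, E_B$ yields $\rho_m \Delta(x_t) \leq C(q_t+1)\nu_t(\sigma_{t-1}(\bar{x}_t) + \sigma_{t-1}(x_t))$. The estimate $\mathbb{E}[\sigma_{t-1}(x_t)\mid H_{t-1}] \geq p'\sigma_{t-1}(\bar{x}_t)$, which follows from the unsaturated-selection probability above, converts this to $\tilde{O}((q_t+1)\nu_t\sigma_{t-1}(x_t)/p')$ in conditional expectation. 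An Azuma--Hoeffding bound on the martingale difference between the realized and conditional regret, each term $O(\nu_t)$-bounded, introduces an extra $\tilde{O}(\nu_T\sqrt{T})$ correction.

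Finally I sum over $t$. By the elliptical-potential bound used in the proof of \cref{thm:ucbRegret}, $\sum_t \sigma_{t-1}(x_t) \leq C_1\sqrt{T\gamma_T}$. Pairing $\sigma_{t-1}(x_t)$ with the $\beta_t$-piece of $\nu_t$ gives $\tilde{O}((q_T+1)\beta_T\sqrt{T\gamma_T}) = \tilde{O}(\sqrt{T\gamma_T}(\sqrt{\gamma_T}+1))$, since $\beta_T = \tilde{O}(\sqrt{\gamma_T})$ and $q_T = \tilde{O}(1)$. Pairing $\sigma_{t-1}(x_t)$ with the delay piece $\cB_y\sum_{s=t-m}^{t-1}\sigma_{t-1}(x_s)$, and reusing the rearrangement/potential argument from the UCB proof, gives $\tilde{O}(m\gamma_T)$. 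The Azuma correction contributes $\tilde{O}(\nu_T\sqrt{T}) = \tilde{O}(m\sqrt{T} + \sqrt{T\gamma_T})$. Adding these and dividing by $\rho_m$ yields the stated rate. The main obstacle is controlling the cross-sum $\sum_t \sum_{s=t-m}^{t-1}\sigma_{t-1}(x_s)\sigma_{t-1}(x_t)$ so that it contributes $m\gamma_T$ rather than the naive $m\sqrt{T\gamma_T}$; this requires importing the rearrangement trick from the UCB analysis into the TS decomposition while carrying the sampling-width factor $q_t$ through correctly.
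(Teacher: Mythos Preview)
Your proposal follows essentially the same route as the paper's proof: define the confidence-ellipsoid event (paper's $E^f(t)$) and the sample-concentration event (paper's $E^{f_t}(t)$), introduce the saturated set, use Gaussian anti-concentration to lower-bound the probability of choosing an unsaturated point, derive a conditional per-round bound in terms of $\nu_t\sigma_{t-1}(x_t)$, wrap this into a super-martingale and apply Azuma--Hoeffding, then reuse the UCB summation (including the $ab\le(a^2+b^2)/2$ rearrangement for the delay cross-term) to get $\tilde O(\rho_m^{-1}(\gamma_T\sqrt{T}+m\gamma_T+m\sqrt{T}))$.

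One point to tighten: you write ``$\Pr[x_t\notin S_t\mid H_{t-1}]\ge p'$ on $E_A\cap E_B$'', but $E_B$ is an event in the \emph{same} randomness (the sample $f_t$) as the probability you are computing, so conditioning on it can spoil the anti-concentration lower bound. The paper avoids this by \emph{not} conditioning on the sample-concentration event; instead it keeps $E^{f_t}(t)$ as a per-iteration event with failure probability $1/t^2$ and subtracts that off, yielding $\Pr[x_t\notin S_t\mid\mathcal F_{t-1}]\ge p-1/t^2$ under $E^f(t)$ alone. Also, no union bound over $t$ is needed for $E_A$: the self-normalized concentration behind \cref{thm:confBound} already holds simultaneously for all $t$. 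With these two adjustments your argument matches the paper's.
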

The regret bounds in both Theorems~\ref{thm:ucbRegret} and~\ref{thm:tsRegret} are \emph{sub-linear} for the commonly used squared exponential (SE) kernel, for which $\gamma_T=O(\log^{n+1}(T))$~\cite{ICML10_srinival2010gaussian}. 

By following the common practice in the BO  literature \citep{contal2013parallel,kandasamy2018parallelised}, we can upper-bound the simple regret by the average cumulative regret, i.e., $r_T \le \Regret_T/T$.

{\bf Discussion on the trade-off regarding $m$.}
A large value of $m$ ensures that fewer queries are discarded and favors the regret bound (i.e., via the term $1/\rho_m$) by making $\rho_m$ large.
However, a large $m$ also means that our algorithm allows larger delays, which consequently forces us to enlarge the width of the confidence interval (\cref{thm:confBound}) to account for them. 
It makes the regret bound worse, which is also reflected by the linear dependence on $m$ in the second term of the regret upper bounds (Theorems~\ref{thm:ucbRegret} and~\ref{thm:tsRegret}). 
As $m$ is directly related to the storage requirement of input queries whose feedback are yet to be observed, a large value of $m$ is constrained by the resources available to the learner.

{\bf Input-dependent random delays.}
In some applications, the random delay can depend on the function query: for example, a larger number of hidden layers increases the training time of a neural network. 
Our regret bounds still hold with input-dependent random delays by redefining $\rho_m$ for $T$ queries as $\rho_m = \min_{t \in [T]} \Prob{d_t \le m}$. 

{\bf Limitations of censored feedback.}
Censoring, i.e., replacing values of incomplete queries with the minimum function values, leads to an aggressive exploration strategy. However, censoring is required to ensure that our BO method does not unnecessarily explore the incomplete queries or queries near them. 
Moreover, we can reduce the effect of this aggressive exploration by increasing the value of $m$, since a larger $m$ leads to less aggressive censoring. Another limitation of the censoring method is that it needs to know the minimum function value. However, when the minimum value is unknown, we can use a suitable lower bound as a proxy for the minimum function value. However, regret analysis of such a method can be challenging and left for future research.

	\section{Improved Algorithms for Batch BO}
	\label{sec:batch_bo}

Notably, our BO-SDF problem subsumes batch BO as a special case.
Specifically, the function queries in batch BO can be viewed as being sequentially selected, in which some queries need to be selected while some other queries of the same batch are still \emph{incomplete} \cite{desautels2014parallelizing}.
Therefore, in batch BO with a batch size of $\cB_x$, the incomplete queries can be viewed as delayed feedback with fixed delays s.t.~$d_s \leq \cB_x-1$. In this case, by choosing $m=\cB_x-1$, we can ensure that $\rho_m = \bP(d_s \leq \cB_x-1)=1$.
As a result, our method of censoring (Section~\ref{subsec:gp:and:censored:feedback}) gives a better treatment to the incomplete queries than the classic technique from batch BO, which we will demonstrate next via both intuitive justification and regret comparison.

The classic technique to handle the incomplete queries in batch BO is \emph{hallucination} which was proposed by the GP-BUCB algorithm from \citep{desautels2014parallelizing}. It has also been adopted by a number of its extensions \citep{daxberger2017distributed,chowdhury2019batch}.
In particular, the feedback of incomplete queries is hallucinated to be (i.e., censored with) the GP posterior mean computed using only the completed queries (excluding the incomplete queries). It is equivalent to keeping the GP posterior mean unchanged (i.e., unaffected by the incomplete queries) while updating the GP posterior variance using all selected queries (including the incomplete queries).
Note that in contrast to the hallucination, our censoring technique sets the incomplete observations to the minimum function value (i.e., $0$) (Section~\ref{subsec:gp:and:censored:feedback}).
As a result, compared with the hallucination where the GP posterior mean is unaffected by the incomplete queries, our censoring can \emph{reduce the value of the GP posterior mean at the incomplete queries} (because we have treated their observations as if they are $0$). As a result, our censoring can \emph{discourage these incomplete queries from being unnecessarily queried again}. It encourages the algorithm to explore other unexplored regions, hence leading to better exploration.

Interestingly, the better exploration resulting from our censoring technique also translates to a tighter regret upper bound.
Specifically, in batch BO with a batch size of $\cB_x$, after plugging in $m=\cB_x-1$ and $\rho_m=1$, our next result gives the regret upper bound of \ref{alg:GP-UCB-SDF}.
\begin{restatable}{prop}{ucbRegretBatchBO}
    \label{prop:ucbRegretBatchBO}
    With probability at least $1-\delta$,
    \eqs{
        \Regret_T=\tilde{O}\Lp \gamma_T \sqrt{T} + \cB_x\gamma_T \Rp.
    }
\end{restatable}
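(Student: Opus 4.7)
The plan is to specialize Theorem~\ref{thm:ucbRegret} to the batch BO setting, since the excerpt has already indicated that batch BO fits into the BO-SDF framework as a special case. First I would formalize this embedding: view a batch of size $\cB_x$ as a sequence of $\cB_x$ function queries selected one after another, where the feedback of every query within a batch is revealed only at the end of the batch. Under this sequential view, the delay $d_s$ of any query is a \emph{deterministic} integer in $\{0,1,\ldots,\cB_x-1\}$, so in particular $d_s \leq \cB_x-1$ almost surely.

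Second, I would run \ref{alg:GP-UCB-SDF} with the buffer parameter $m=\cB_x-1$. With this choice, every incomplete query is stored until its feedback arrives (nothing is dropped), and furthermore $\rho_m = \Prob{d_s \leq m} = 1$. This is the key simplification: the inverse factor $1/\rho_m$ appearing in Theorem~\ref{thm:ucbRegret} collapses to $1$, and the confidence bound in Theorem~\ref{thm:confBound} degenerates to the clean inequality $|\mu_{t-1}(x)-f(x)| \leq \nu_t\sigma_{t-1}(x)$, with $\nu_t$ still enlarged by the censoring correction $\cB_y\sum_{s=t-m}^{t-1}\sigma_{t-1}(x_s)$ to absorb the effect of the incomplete queries.

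Third, I would simply substitute $m=\cB_x-1$ and $\rho_m=1$ into the conclusion of Theorem~\ref{thm:ucbRegret}, yielding
\eqs{
    \Regret_T \leq 2\bigl( C_1\beta_T\sqrt{T\gamma_T} + (\cB_x-1)\cB_y C_1^2 \gamma_T \bigr).
}
Since $\beta_T = \tilde{O}(\sqrt{\gamma_T})$ by its definition, the first term is $\tilde{O}(\gamma_T\sqrt{T})$ and the second is $\tilde{O}(\cB_x \gamma_T)$, giving the claimed bound $\tilde{O}(\gamma_T\sqrt{T}+\cB_x\gamma_T)$.

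I do not expect any serious obstacle: all the technical machinery (confidence ellipsoid with censored feedback, information-gain-based sum-of-variances bound, enlargement of $\nu_t$ to handle incomplete queries) is already packaged inside Theorem~\ref{thm:ucbRegret}. The only thing to verify with care is that the proof of that theorem genuinely goes through when the delay distribution is a point mass concentrated on $\{0,\ldots,\cB_x-1\}$ rather than a general stochastic distribution $\cD$; this should be immediate because the proof only uses $\rho_m$ as a lower bound on $\Prob{d_s\leq m}$, which is trivially $1$ here. Hence the proposition follows by direct specialization.
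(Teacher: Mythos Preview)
Your proposal is correct and matches the paper's approach exactly: the proposition is obtained by directly substituting $m=\cB_x-1$ and $\rho_m=1$ into Theorem~\ref{thm:ucbRegret}, and then reading off the asymptotic orders using $\beta_T=\tilde{O}(\sqrt{\gamma_T})$. The paper gives no separate proof beyond this specialization, so there is nothing further to add.
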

As long as $\gamma_T=\Omega(\log T)$ which holds for most commonly used kernels such as the squared exponential (SE) and Mat\'ern kernels, both terms in \cref{prop:ucbRegretBatchBO} grow slower than $\Regret_T=\tilde{O}(\gamma_T\sqrt{T}\exp(\gamma_{\cB_x-1}))$ which is the regret upper bound of the GP-BUCB algorithm based on hallucination~\cite{desautels2014parallelizing}.
Importantly, to achieve a sub-linear regret upper bound (i.e., to ensure that $\exp(\gamma_{\cB_x-1})$ can be upper-bounded by a constant), GP-BUCB \citep{desautels2014parallelizing} and its extensions \citep{daxberger2017distributed,kandasamy2018parallelised,chowdhury2019batch} all require a special initialization scheme (i.e., uncertainty sampling). It is often found unnecessary in practice \citep{kandasamy2018parallelised} and hence represents a gap between theory and practice.
Interestingly, our tighter regret upper bound from \cref{prop:ucbRegretBatchBO} has removed the dependence on $\exp(\gamma_{\cB_x-1})$ and hence eliminated the need for uncertainty sampling, thereby closing this gap between theory and practice.

Note that our discussions above regarding the advantage of censoring over hallucination also apply to TS-based algorithms. In particular, in batch BO with a batch size of $\cB_x$, the regret upper bound of \ref{alg:GP-TS-SDF} is given by the following result.
\begin{restatable}{prop}{tsRegretBatchBO}
    \label{prop:tsRegretBatchBO}
    With probability at least $1-\delta$,
    \eqs{
        \Regret_T=\tilde{O}\Lp \sqrt{T\gamma_T} (\sqrt{\gamma_T} + 1) + \cB_x \gamma_T + \cB_x\sqrt{T} \Rp.
    }
\end{restatable}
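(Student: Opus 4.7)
The plan is to derive this proposition as a direct corollary of \cref{thm:tsRegret}, exploiting the fact that batch BO is a special case of BO-SDF in which the stochastic delay is deterministically bounded. The observation that drives everything is the one already made for \cref{prop:ucbRegretBatchBO}: in a batch BO procedure of batch size $\cB_x$, any query that is selected but whose feedback has not yet been revealed must come from within the current batch, hence its delay satisfies $d_s \le \cB_x - 1$ deterministically. This lets us pick $m = \cB_x - 1$ as the storage / censoring parameter in \ref{alg:GP-TS-SDF}, and with this choice every incomplete query is guaranteed to have its feedback revealed before it would need to be discarded.

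Next, I would argue that with $m = \cB_x - 1$, the quantity $\rho_m = \Prob{d_s \le m}$ that appears in the denominator of \cref{thm:tsRegret} equals $1$. This follows immediately from $d_s \le \cB_x - 1 = m$ almost surely in the batch setting. With $\rho_m = 1$ in hand, the hypothesis of \cref{thm:tsRegret} applies unchanged, so with probability at least $1 - \delta$,
\[
    \Regret_T = \tilde{O}\Lp \sqrt{T\gamma_T}(\sqrt{\gamma_T} + 1) + m(\gamma_T + \sqrt{T}) \Rp.
\]
Substituting $m = \cB_x - 1$ and absorbing the $-1$ into the $\tilde{O}$ yields $m\gamma_T = O(\cB_x \gamma_T)$ and $m\sqrt{T} = O(\cB_x \sqrt{T})$, giving exactly the stated bound.

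There is essentially no new analytical obstacle here, since the heavy lifting (the confidence ellipsoid of \cref{thm:confBound} and the full TS regret decomposition used for \cref{thm:tsRegret}) has already been done in the BO-SDF setting; the proposition is pure bookkeeping on top of it. The only point that deserves a sentence of care in the write-up is verifying that the specialization $m = \cB_x - 1$ is legitimate, i.e., that the algorithm with this parameter choice behaves exactly like batch BO with the censoring rule described in \cref{sec:batch_bo}, and that the derivation of \cref{thm:tsRegret} does not implicitly rely on $\rho_m < 1$ (it does not: $\rho_m = 1$ is merely the favorable boundary case). Once these points are noted, the conclusion follows by direct substitution.
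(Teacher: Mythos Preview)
Your proposal is correct and matches the paper's approach exactly: the paper likewise obtains \cref{prop:tsRegretBatchBO} by specializing \cref{thm:tsRegret} to the batch BO setting via the substitution $m=\cB_x-1$ and $\rho_m=1$, with no additional argument beyond this bookkeeping.
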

All three terms in \cref{prop:tsRegretBatchBO} grow slower than $\Regret_T=\tilde{O}(\exp(\gamma_{\cB_x-1})\sqrt{T\gamma_T}(\sqrt{\gamma_T}+1))$ which is the regret upper bound of the GP-BTS algorithm based on hallucination \citep{chowdhury2019batch} as long as $\gamma_T=\Omega(\log T)$.

To summarize, though batch BO is only a special case of our BO-SDF problem, we have made non-trivial contributions to the algorithm and theory of batch BO. Our \ref{alg:GP-UCB-SDF} and \ref{alg:GP-TS-SDF} algorithms and our theoretical analyses may serve as inspiration for future works on batch BO.

	\section{Contextual Gaussian Process Bandits under Stochastic Delayed Feedback}
	\label{sec:contx_bandits}

Before making a decision, sometimes additional information is available to the learner in many real-life scenarios (e.g., users' profile information for an online platform and patients' medical history before clinical trials). This additional information is referred to as \emph{context} in the literature, and the value of a function depends on the context \citep{ICML13_agrawal2013thompson,AISTATS11_chu2011contextual,NIPS11_krause2011contextual,WWW10_li2010contextual}. The learner can design the algorithms to exploit the contextual information to make a better decision. We can extend our results for the BO-SDF problem to the contextual Gaussian process bandits \citep{NIPS11_krause2011contextual} with stochastic delayed feedback where a non-linear function maps a context to the feedback.

Let $\cQ \subset \R^n$ be a finite set of available actions (or queries) and $\cZ \subset \R^{n^\prime}$ be a set of all contexts where $n, n^\prime \ge 1$.
In round $t$, the environment generates a context $z_t \in \cZ$. Then, the learner selects an action $x_t \in \cQ$ and observes noisy feedback denoted by $y_t = g(z_t, x_t) + \epsilon_t$. 
We assume that $g: \cZ \times \cQ \rightarrow \R$ and $\epsilon_t$ is an $R$-sub-Gaussian noise. The $y_t$ is only observed after a random delay $d_t$, which is generated from an unknown distribution $\cD$.

We refer to this new problem as \emph{`Contextual Gaussian process Bandits under Stochastic Delayed Feedback'} (CGB-SDF). 
The unknown function $g$, context space $\cZ$, action space $\cQ$, and unknown delay distribution $\cD$ identify an instance of the CGB-SDF problem. 
The optimal action $x_t^\star$ for a given context $z_t$ is the action where the function $g$ has its maximum value, i.e., $x_t^\star = \argmax_{x \in \cQ} g(z_t, x)$.
The learner incurs a penalty of $\left(g(z_t, x_t^\star) - g(z_t, x_t)\right)$ for a context $z_t$ and action $x_t$. 
We aim to learn a policy that achieves the minimum cumulative penalties or \emph{regret} which is given by $\Regret_T = \sum_{t=1}^T \left(g(z_t, x_t^\star) - g(z_t, x_t)\right)$.

Our goal is to learn a policy that has a small sub-linear regret, i.e., $\lim_{T \rightarrow \infty}{\Regret_T}/T \rightarrow 0$. 
The sub-linear regret here implies that the policy will eventually select the optimal action for a given context. 
We have adapted our algorithms for the BO-SDF problem to the CGB-SDF problem and shown that they also enjoy similar sub-linear regret guarantees; see Appendix~\ref{app:sec:contextual:gp:bandit} for more details.

	\section{Experiments}
	\label{sec:experiments}

We compare with previous baseline methods that can be applied to BO-SDF problems after modifications. Firstly, we compare with standard GP-UCB~\cite{ICML10_srinival2010gaussian} which ignores the delayed observations when selecting the next query. Note that GP-UCB is likely to repeat previously selected queries in some iterations when the GP posterior remains unchanged (i.e., if no observation is newly collected between two iterations). Next, we also compare with the batch BO method of \emph{asy-TS}~\cite{kandasamy2018parallelised} which, similarly to GP-UCB, ignores the delayed observations when using TS to choose the next query. Lastly, we compare with the batch BO methods of \emph{GP-BUCB}~\cite{desautels2014parallelizing} and \emph{GP-BTS}~\cite{chowdhury2019batch} which handle the delayed observations via hallucination (Section~\ref{sec:batch_bo}). Following the suggestion from~\cite{ICML20_vernade2020linear}, if not specified otherwise, we set $m$ to be $2\mu$ where $\mu$ is the mean of the delay distribution. However, this is for convenience only since we will demonstrate (Section~\ref{subsec:exp:synth}) that our algorithms consistently achieve competitive performances as long as $m$ is large enough. Therefore, $\mu$ does not need to be known in practice. We defer some experimental details to Appendix~\ref{app:sec:more:experimenta:detail} due to space constraint. Our code is released at \url{https://github.com/daizhongxiang/BO-SDF}.

\begin{figure*}
	\centering
	\begin{tabular}{cccc}
		\hspace{-3mm}
		\includegraphics[width=0.25\linewidth]{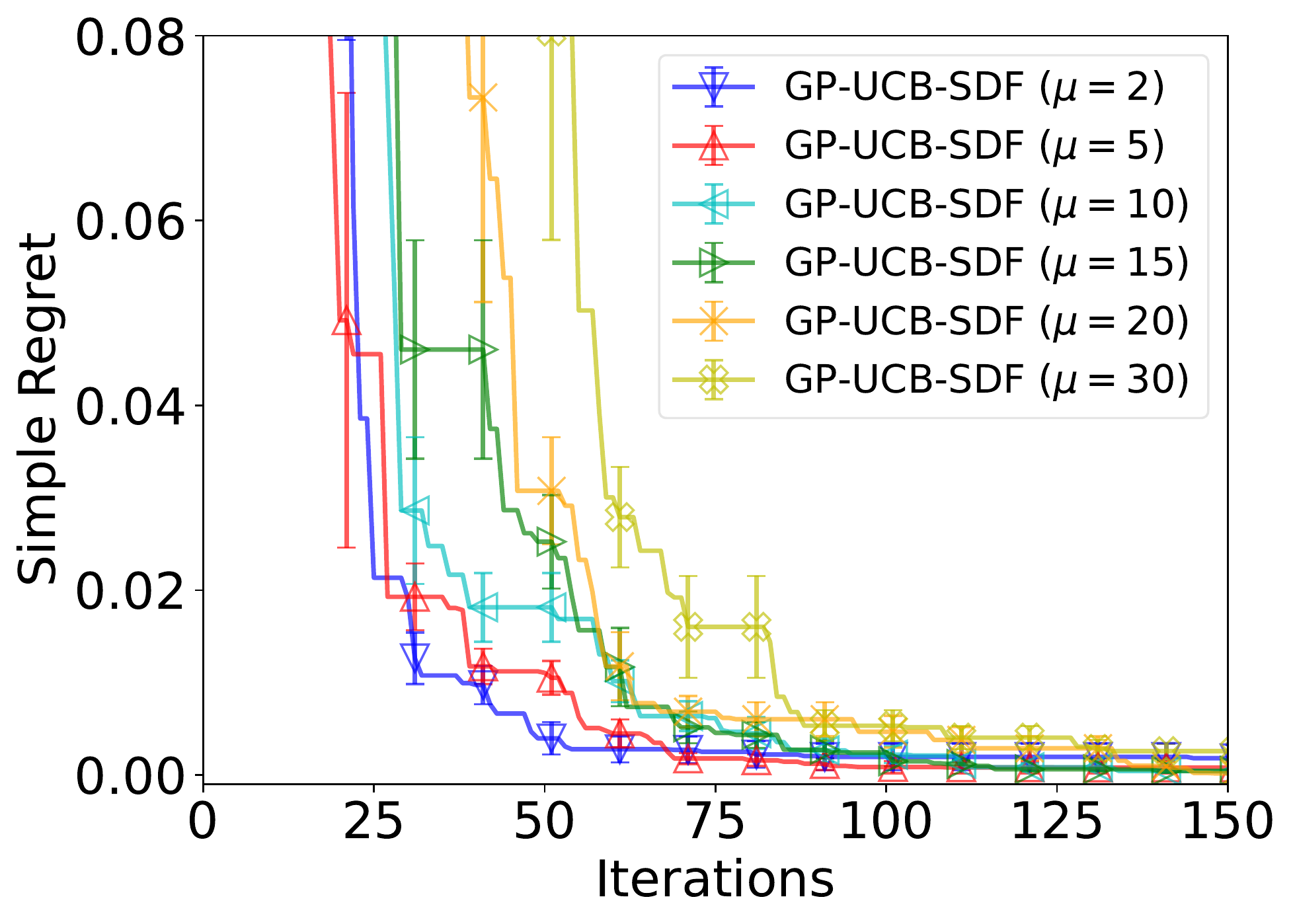}& \hspace{-5mm} 
		\includegraphics[width=0.25\linewidth]{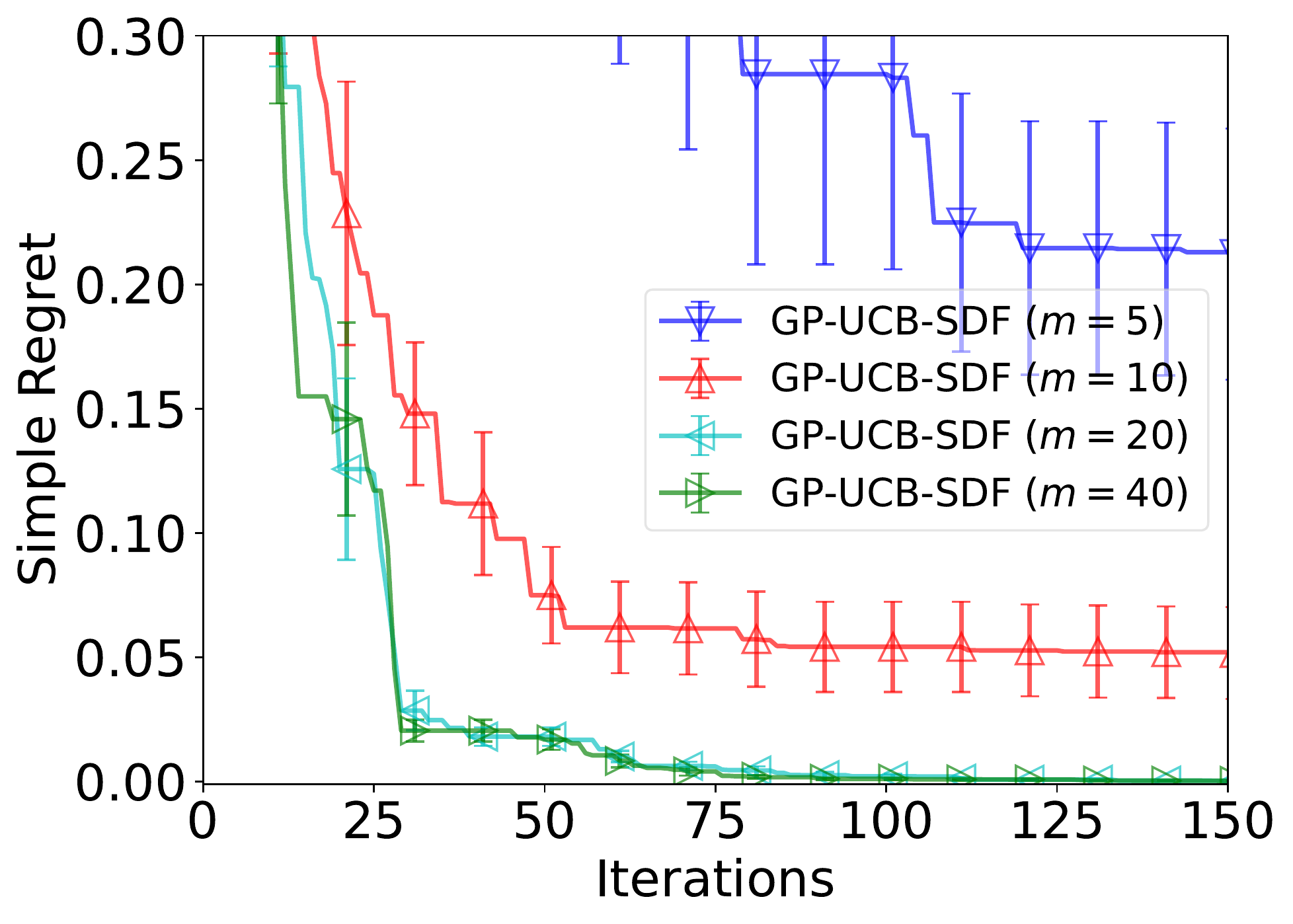}& \hspace{-5mm} 
		\includegraphics[width=0.25\linewidth]{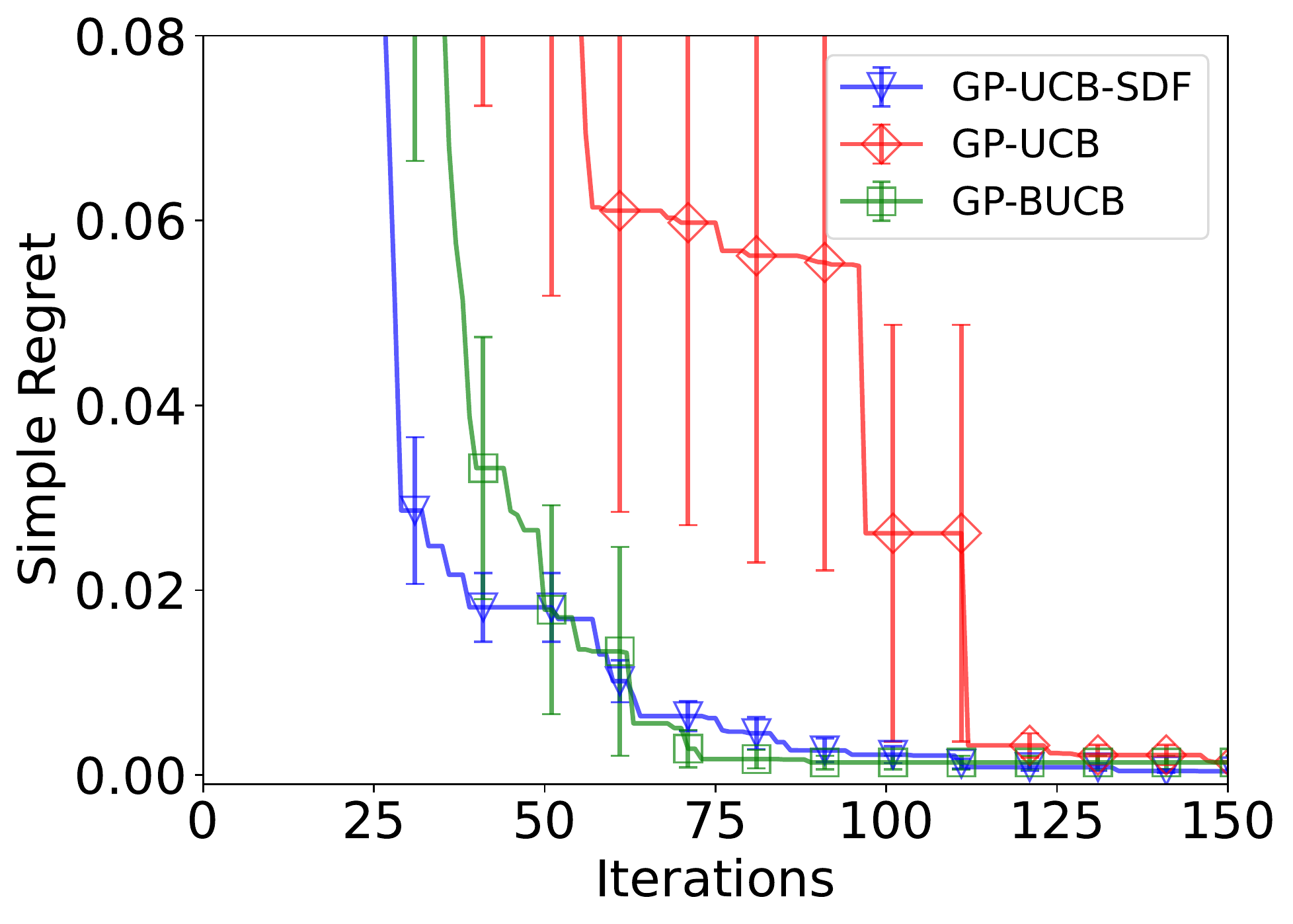}& \hspace{-5mm}
		\includegraphics[width=0.25\linewidth]{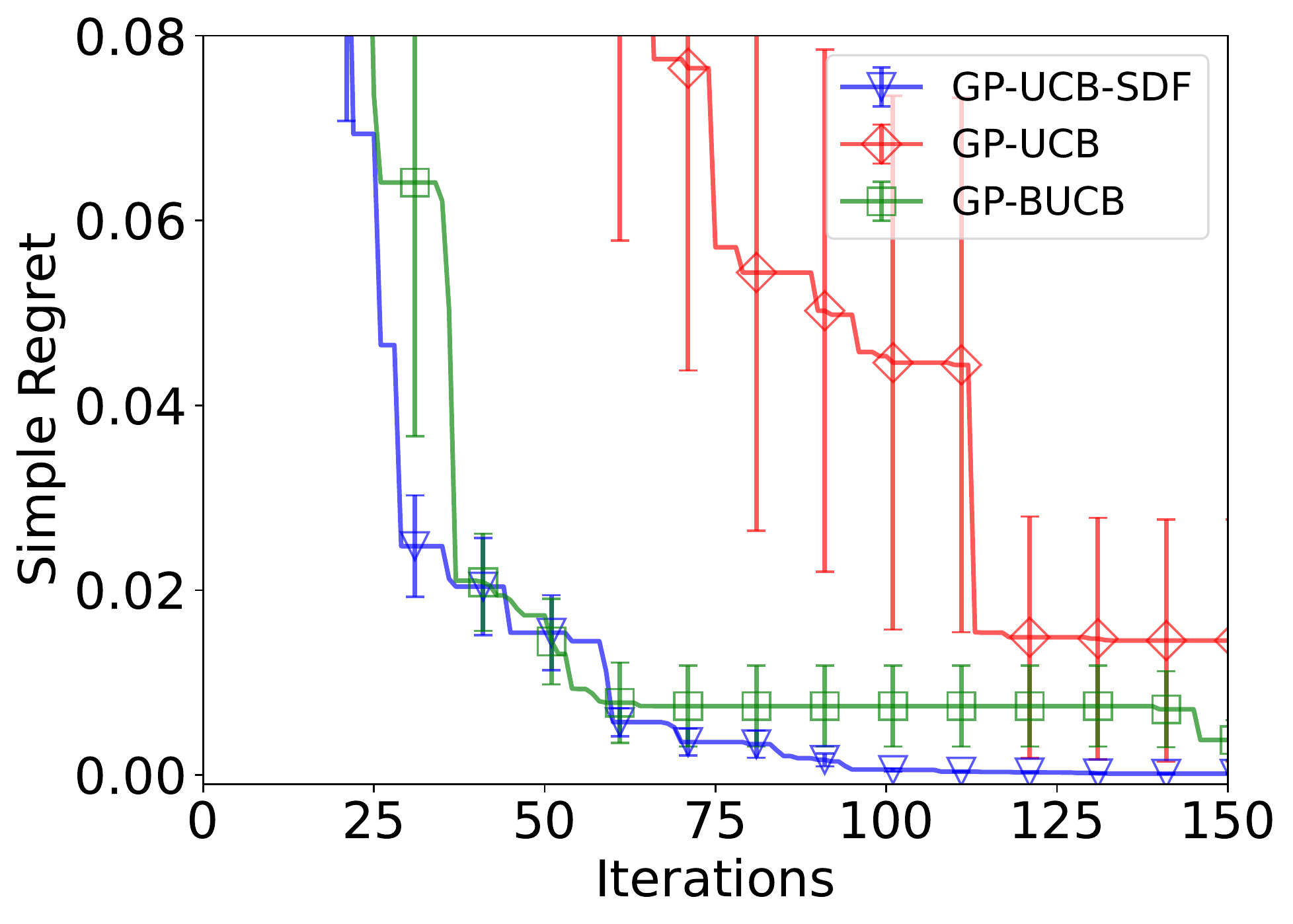}\\
		{(a)} & {(b)} & {(c)} & {(d)}
	\end{tabular}
	\caption{
		Performances of our \ref{alg:GP-UCB-SDF} for (a) different delay distributions and (b) different values of $m$ ($\mu$ is fixed at $\mu=10$). 
		Performances of \ref{alg:GP-UCB-SDF} and baseline methods for (c) stochastic and (d) deterministic delay distributions.
	}
	\label{fig:exp:synth}
\end{figure*}

\begin{figure*}[t]
     \centering
     \vspace{3mm}
     \begin{tabular}{cccc}
         \hspace{-3mm}
         \includegraphics[width=0.25\linewidth]{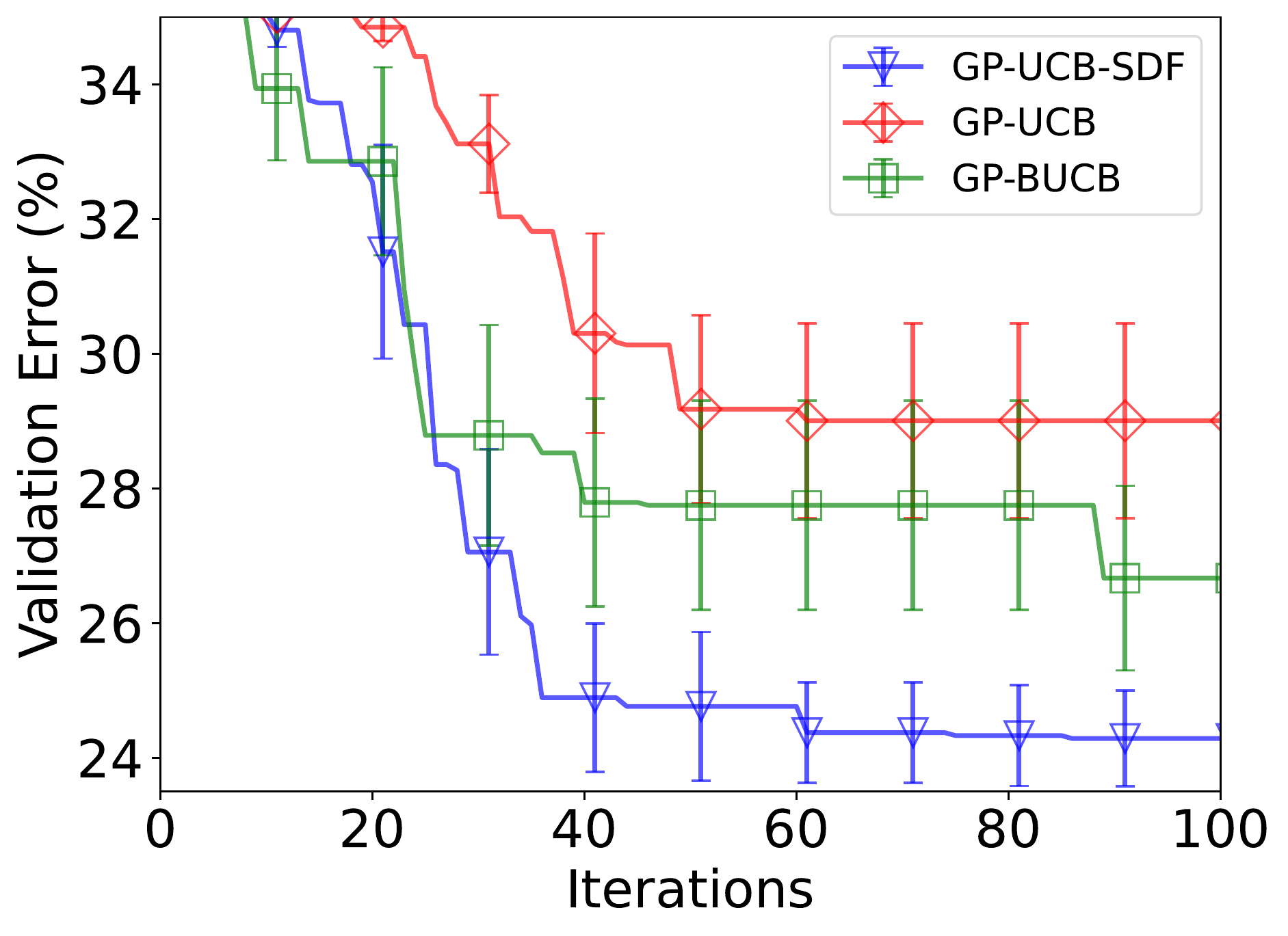}& \hspace{-5mm} 
         \includegraphics[width=0.25\linewidth]{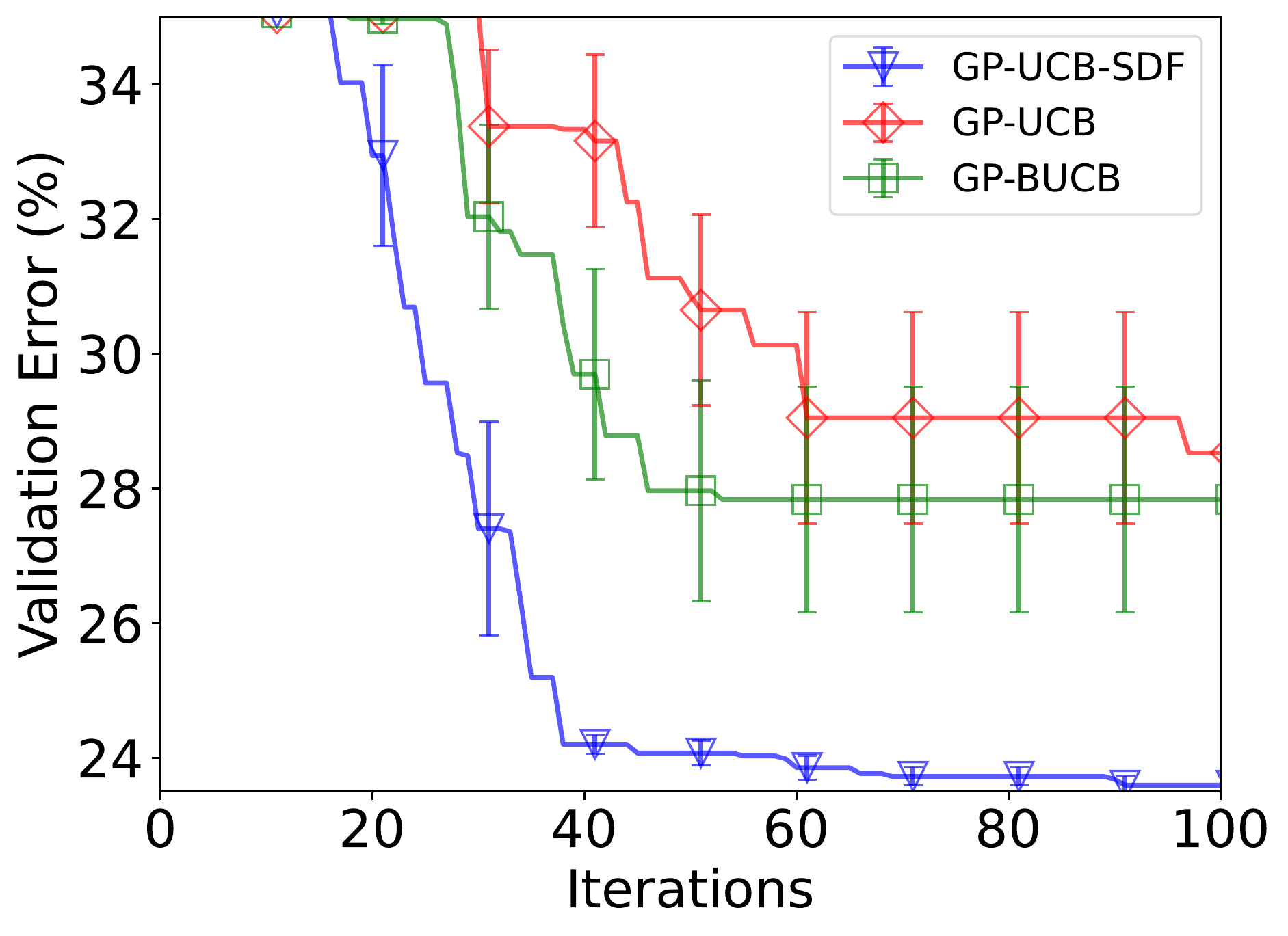}& \hspace{-5mm} 
         \includegraphics[width=0.25\linewidth]{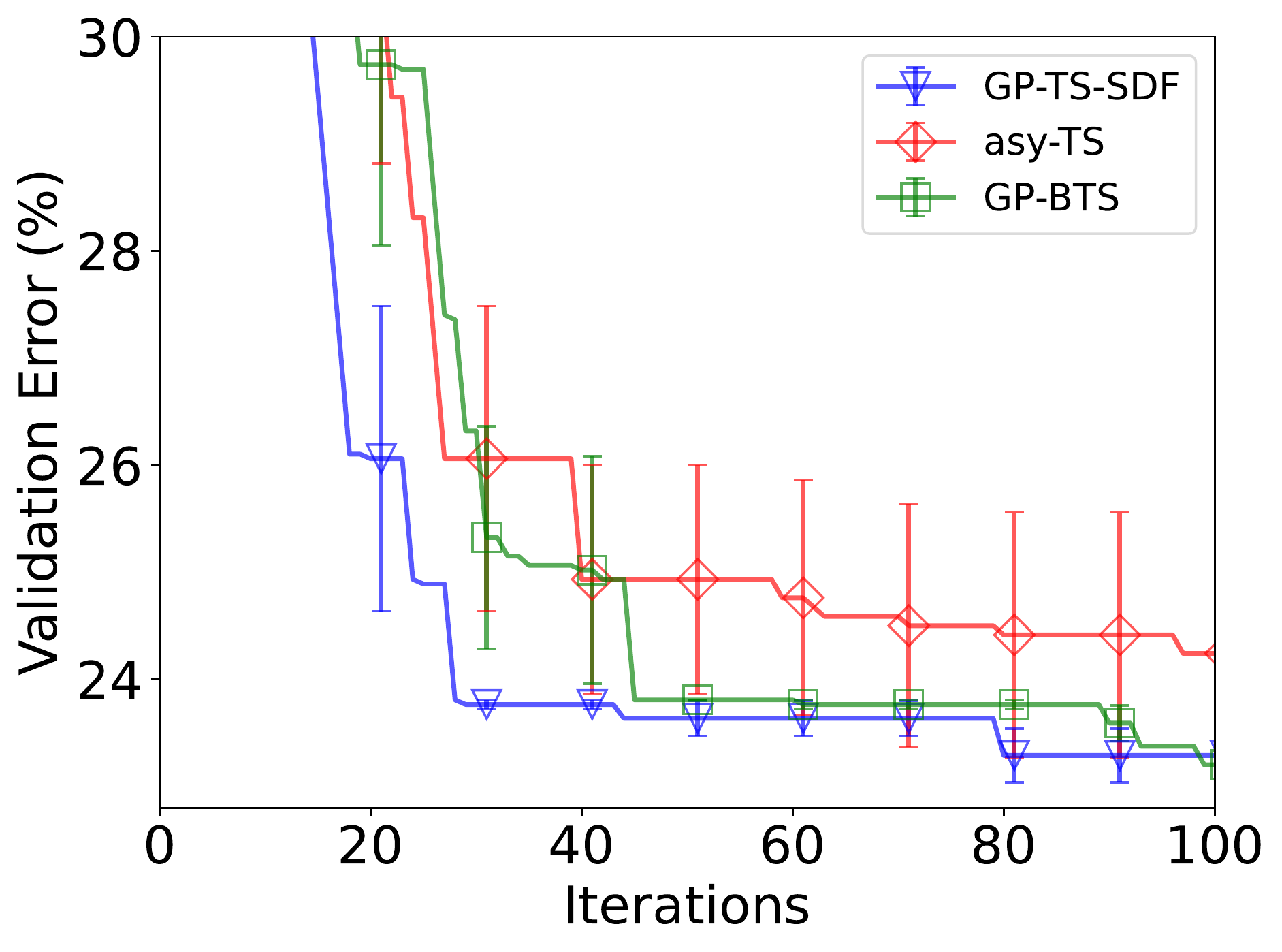}& \hspace{-5mm} 
         \includegraphics[width=0.25\linewidth]{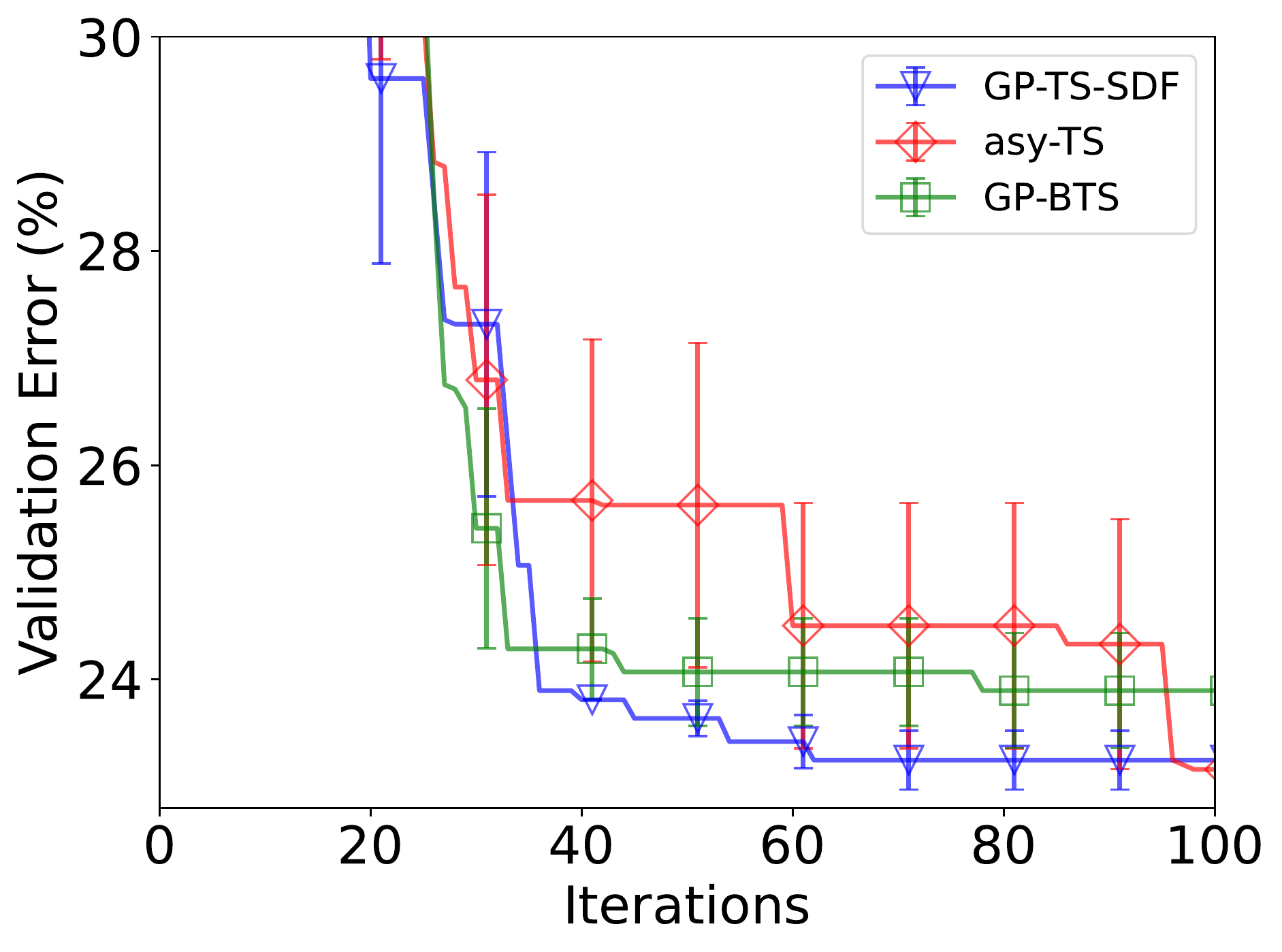}\\
         {(a)} & {(b)} & {(c)} & {(d)}
         \\~\\
         \hspace{-3mm}
         \includegraphics[width=0.25\linewidth]{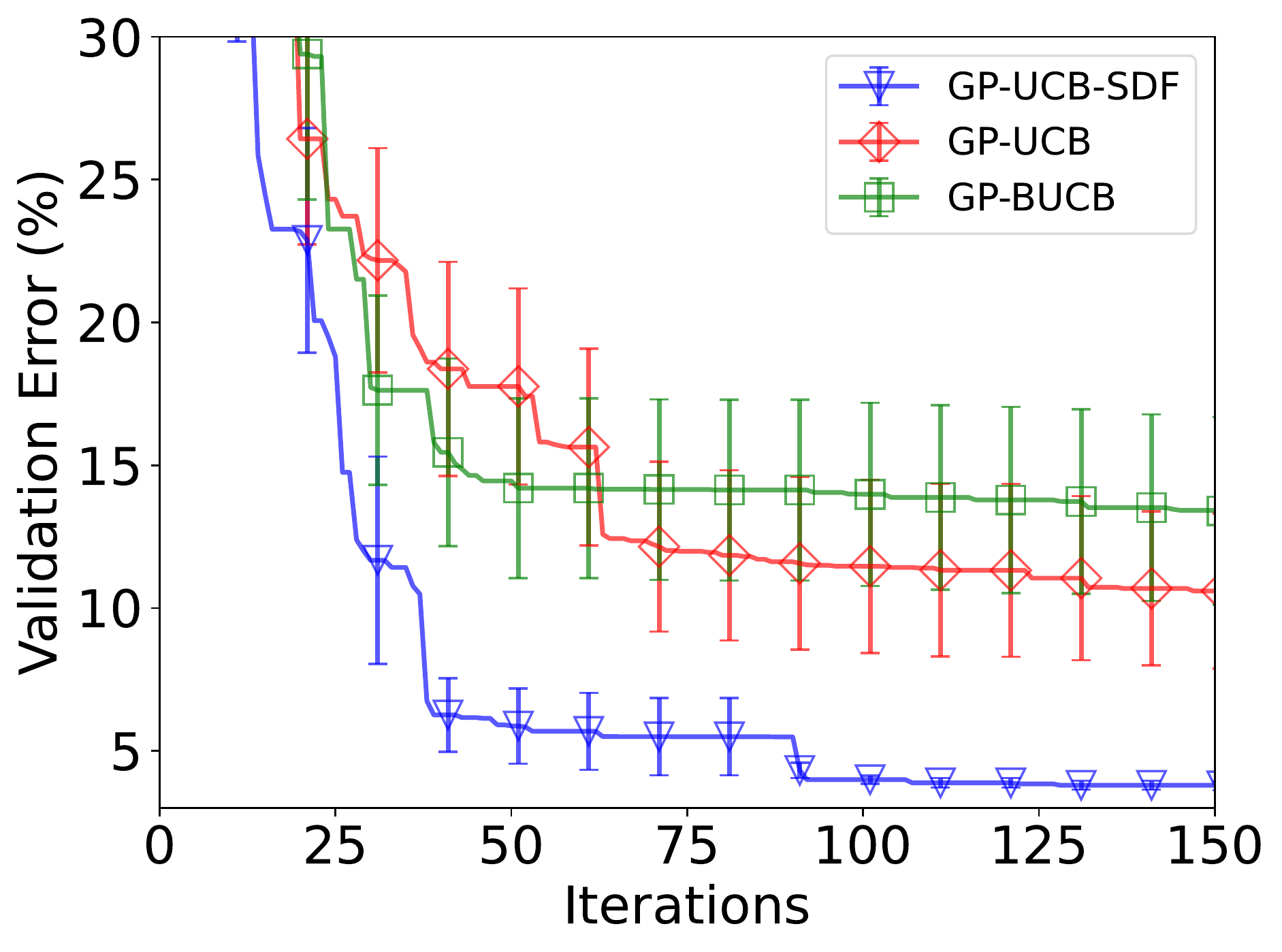}& \hspace{-5mm} 
         \includegraphics[width=0.25\linewidth]{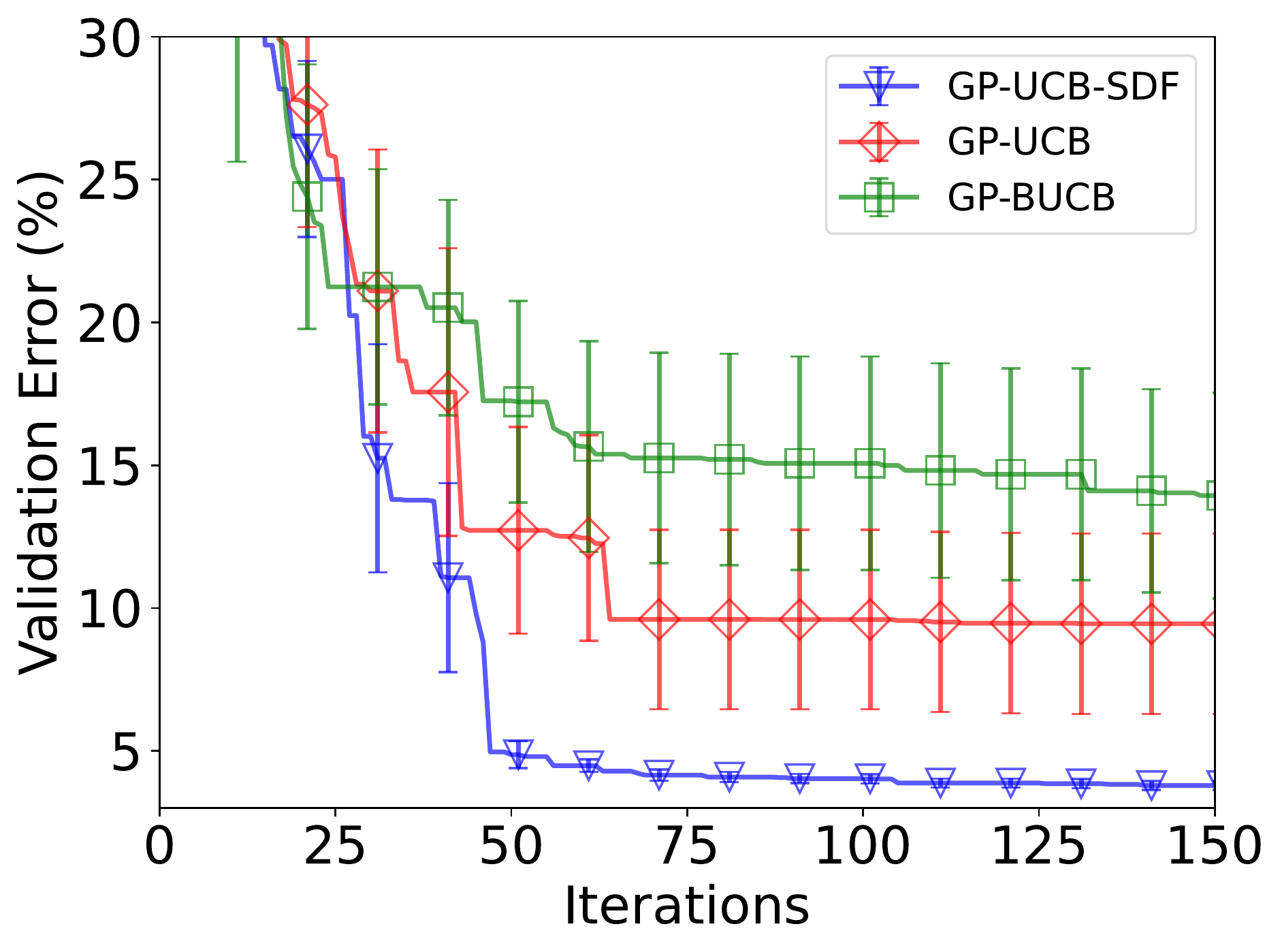}& \hspace{-5mm} 
         \includegraphics[width=0.25\linewidth]{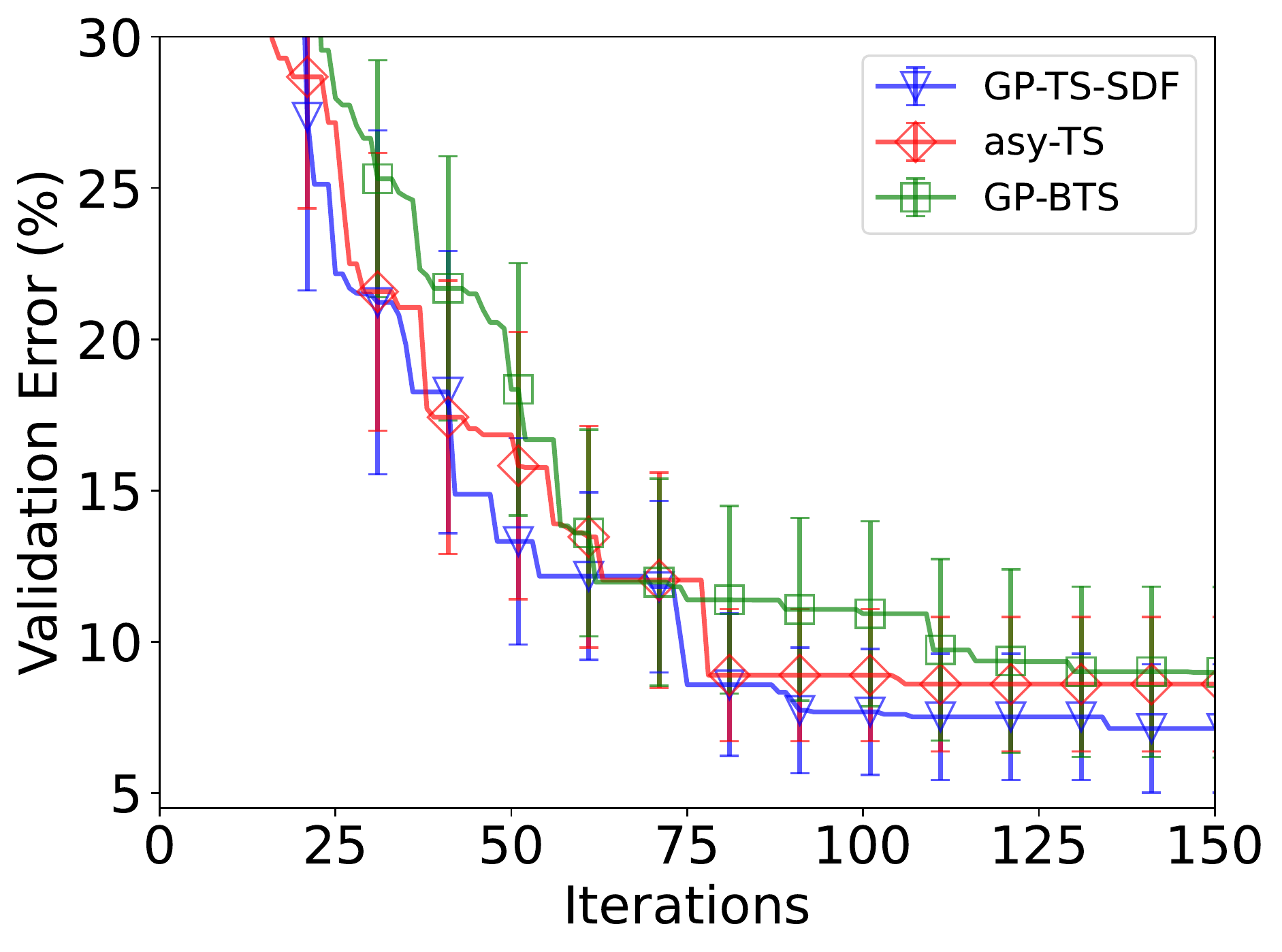}& \hspace{-5mm} 
         \includegraphics[width=0.25\linewidth]{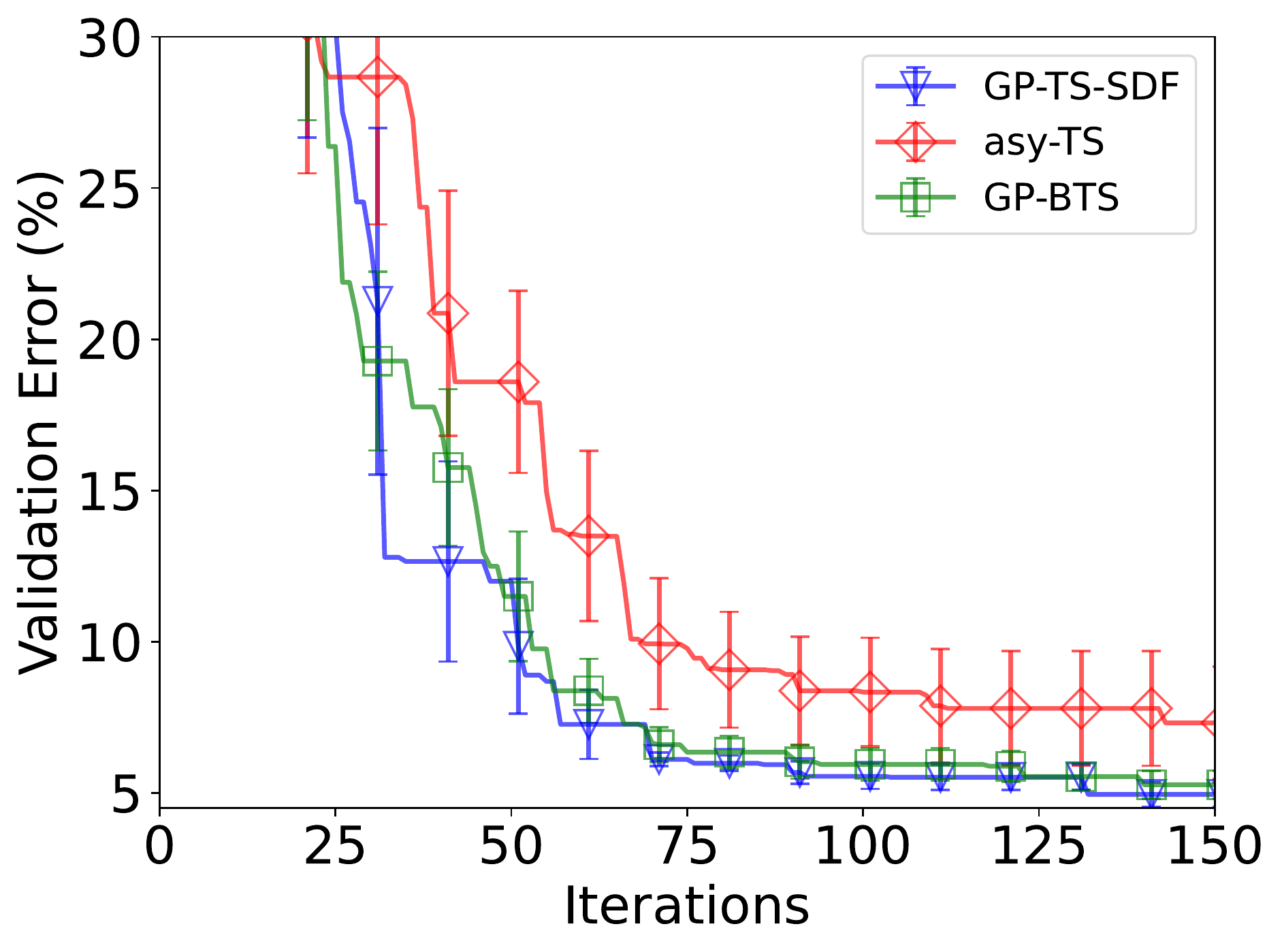}\\
         {(e)} & {(f)} & {(g)} & {(h)}
     \end{tabular}
     \caption{
        Performances for hyperparameter tuning of SVM: UBC-based methods with (a) stochastic and (b) deterministic delay distributions, and TS-based methods with (c) stochastic and (d) deterministic delay distributions.
        Performances for hyperparameter tuning of CNN: UBC-based methods with (e) stochastic and (f) deterministic delay distributions, and TS-based methods with (g) stochastic and (h) deterministic delay distributions.
     }
     \label{fig:exp:automl}
\end{figure*}

\subsection{Synthetic Experiments}
\label{subsec:exp:synth}
For this experiment, we use a Poisson distribution (with a mean parameter $\mu$) as the delay distribution and sample the objective function $f$ from a GP with an SE kernel defined on a discrete $1$-dimensional domain. Fig.~\ref{fig:exp:synth}a plots the performances of our \ref{alg:GP-UCB-SDF} for different delay distributions and shows that larger delays (i.e., larger $\mu$'s) result in worse performances. Fig.~\ref{fig:exp:synth}b shows that for a fixed delay distribution (i.e., $\mu=10$), an overly small value of $m$ leads to large regrets since it causes our algorithm to ignore a large number of delayed observations (i.e., overly aggressive censoring). On the other hand, a large $m$ is desirable since $m=20=2\mu$ and $m=40$ produce comparable performances.

However, an overly large value of $m$ may exert an excessive resource requirement since we may need to cater to many pending function evaluations. Therefore, when the delay distribution (hence $\mu$) may be unknown, we recommend using a large value of $m$ that is allowed by the resource constraints. We now compare the performances of our \ref{alg:GP-UCB-SDF} with the baselines of GP-UCB and GP-BUCB for both stochastic ($\mu=10$, Fig.~\ref{fig:exp:synth}c) and deterministic (delays fixed at $10$, Fig.~\ref{fig:exp:synth}d) delay distributions. The figures show that our \ref{alg:GP-UCB-SDF} achieves smaller simple regret than the baselines in both settings.

\subsection{Real-world Experiments}
\label{subsec:exp:automl}
In this section, we perform real-world experiments on hyperparameter tuning for ML models under two realistic settings. In the first setting, we consider stochastic delay distributions where the delays are sampled from a Poisson distribution with a mean parameter $\mu=10$. This setting is used to simulate real-world scenarios where the evaluations of different hyperparameter configurations may be stochastically delayed because they may require a different amount of computations (e.g., training a neural network with more layers is more time-consuming), or the machines deployed in the experiment may have different computational capabilities. The second setting considers fixed delay distributions where every delay is fixed at $\cB_x-1=10$, which is used to emulate real-world asynchronous batch BO problems with a batch size of $\cB_x=11$.

\begin{figure*}
     \centering
     \begin{tabular}{cccc}
         \hspace{-3mm}
         \includegraphics[width=0.25\linewidth]{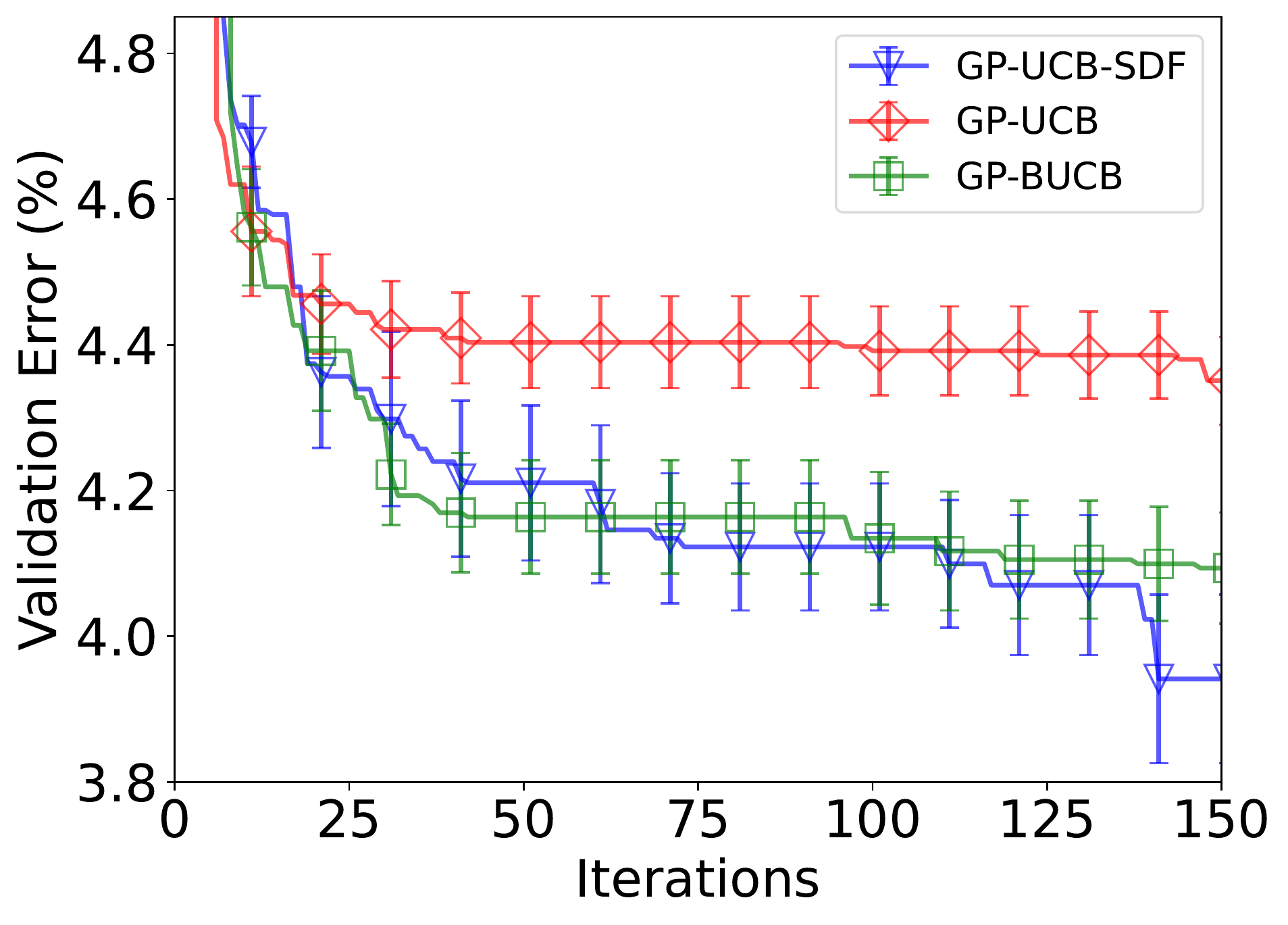}& \hspace{-5mm} 
         \includegraphics[width=0.25\linewidth]{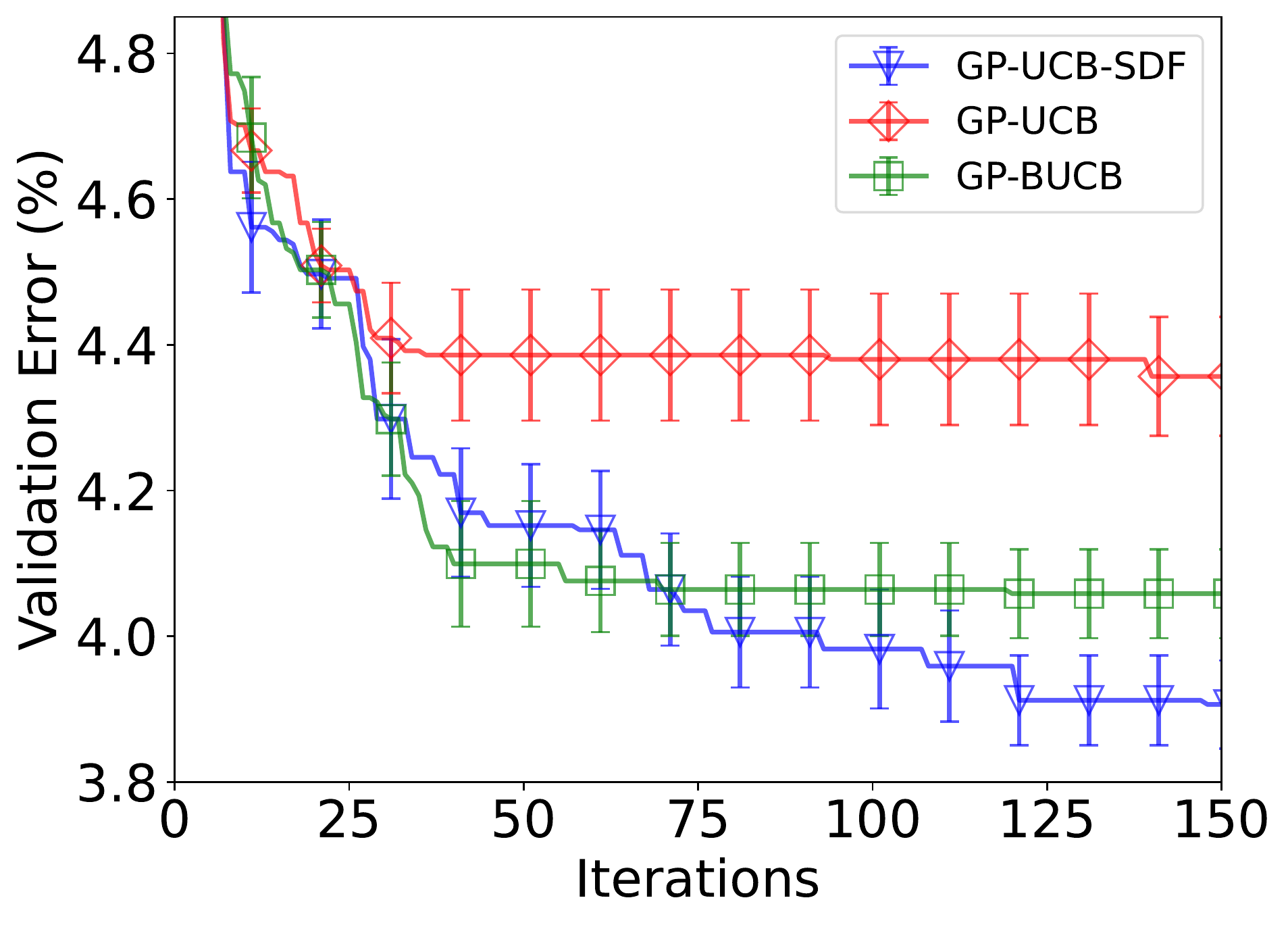}& \hspace{-5mm} 
         \includegraphics[width=0.25\linewidth]{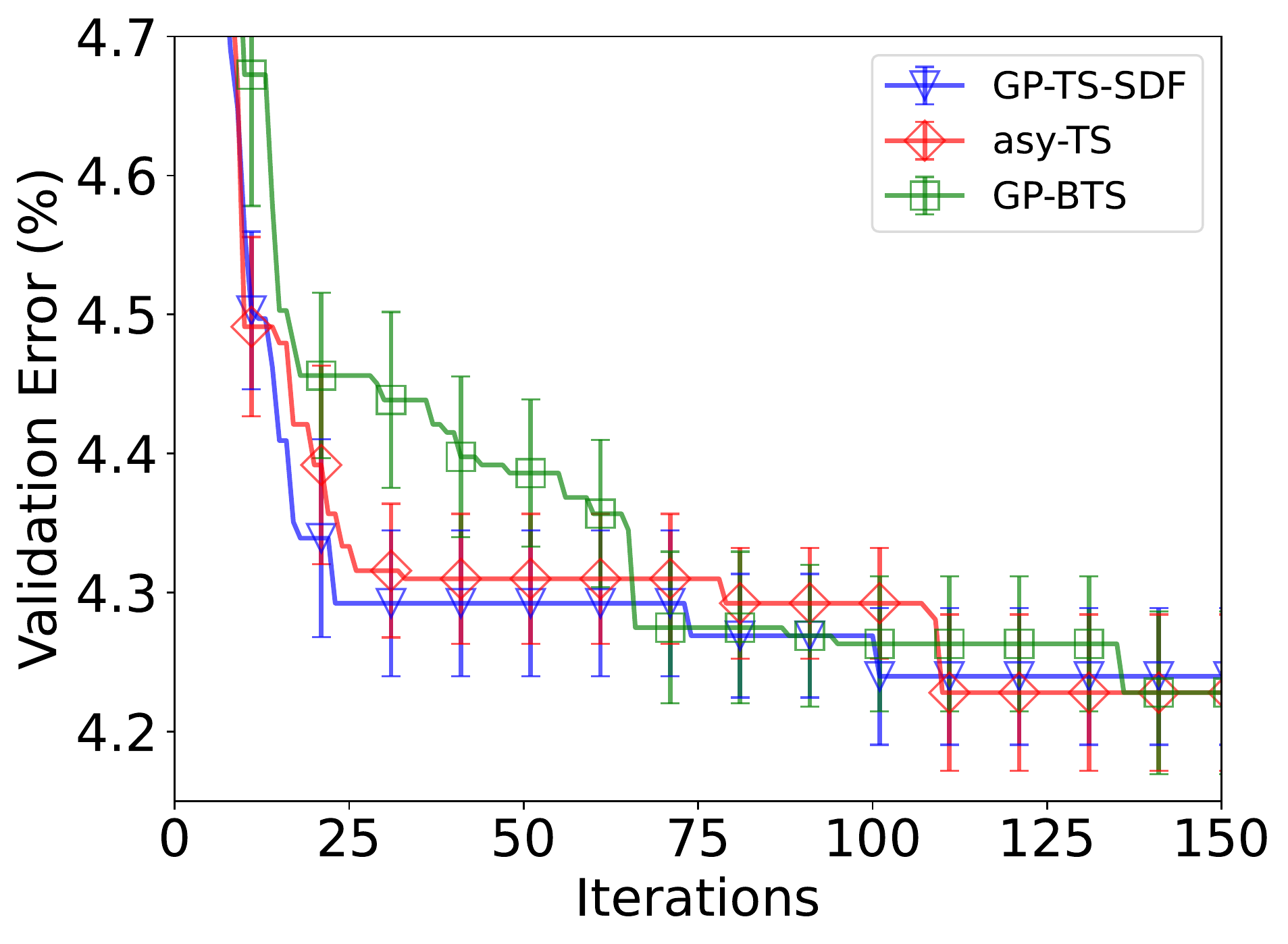}& \hspace{-5mm} 
         \includegraphics[width=0.25\linewidth]{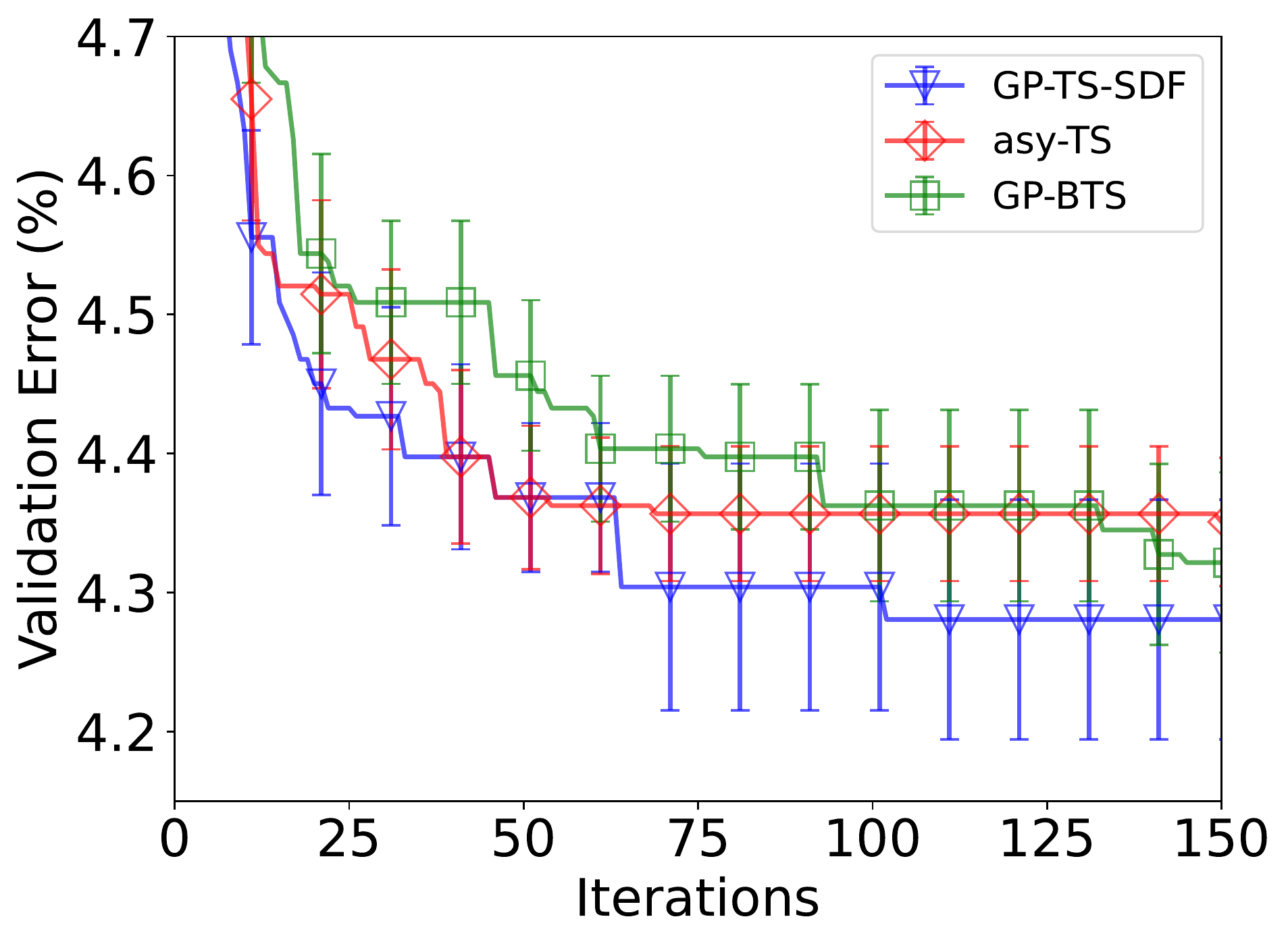}\\
         {(a)} & {(b)} & {(c)} & {(d)}
     \end{tabular}
     \caption{
     Performances for hyperparameter tuning of LR, UBC-based methods for (a) stochastic and (b) deterministic delay distributions, and TS-based methods for (a) stochastic and (b) deterministic delay distributions.
     }
     \label{fig:exp:automl_lr}
\end{figure*}

\begin{figure*}
     \centering
     \begin{tabular}{cccc}
         \hspace{-3mm}
         \includegraphics[width=0.25\linewidth]{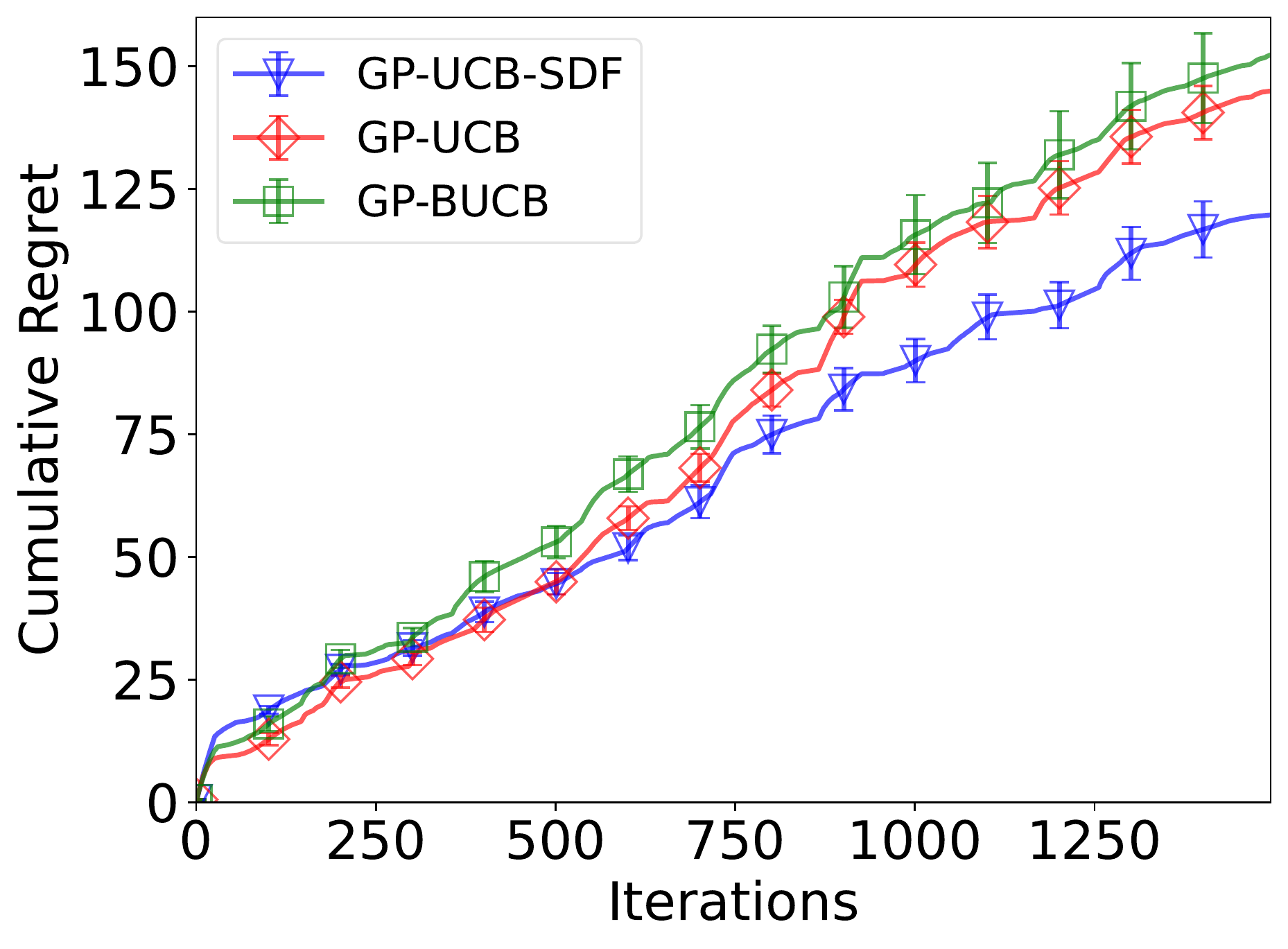}& \hspace{-5mm} 
         \includegraphics[width=0.25\linewidth]{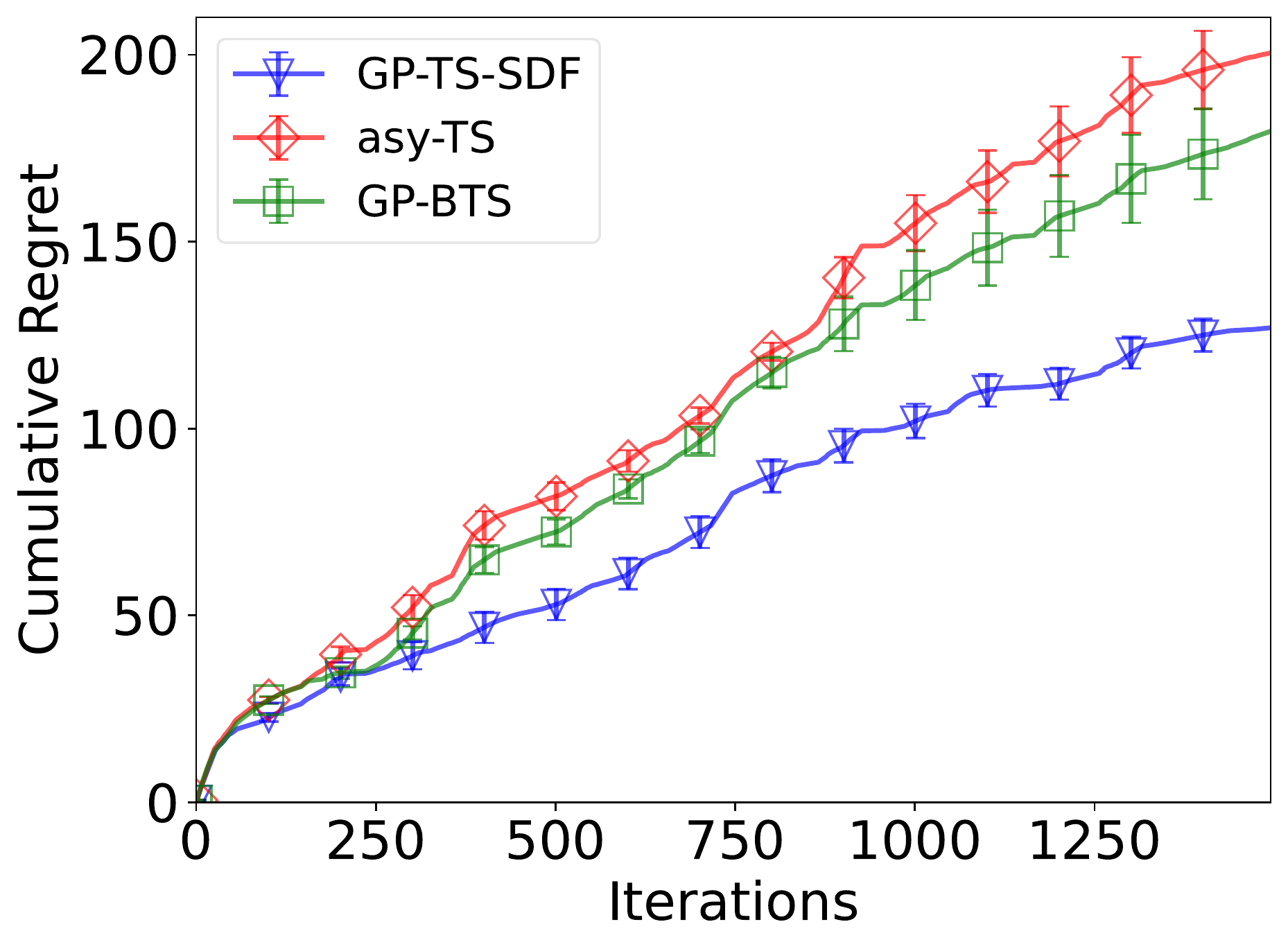}& \hspace{-5mm} 
         \includegraphics[width=0.25\linewidth]{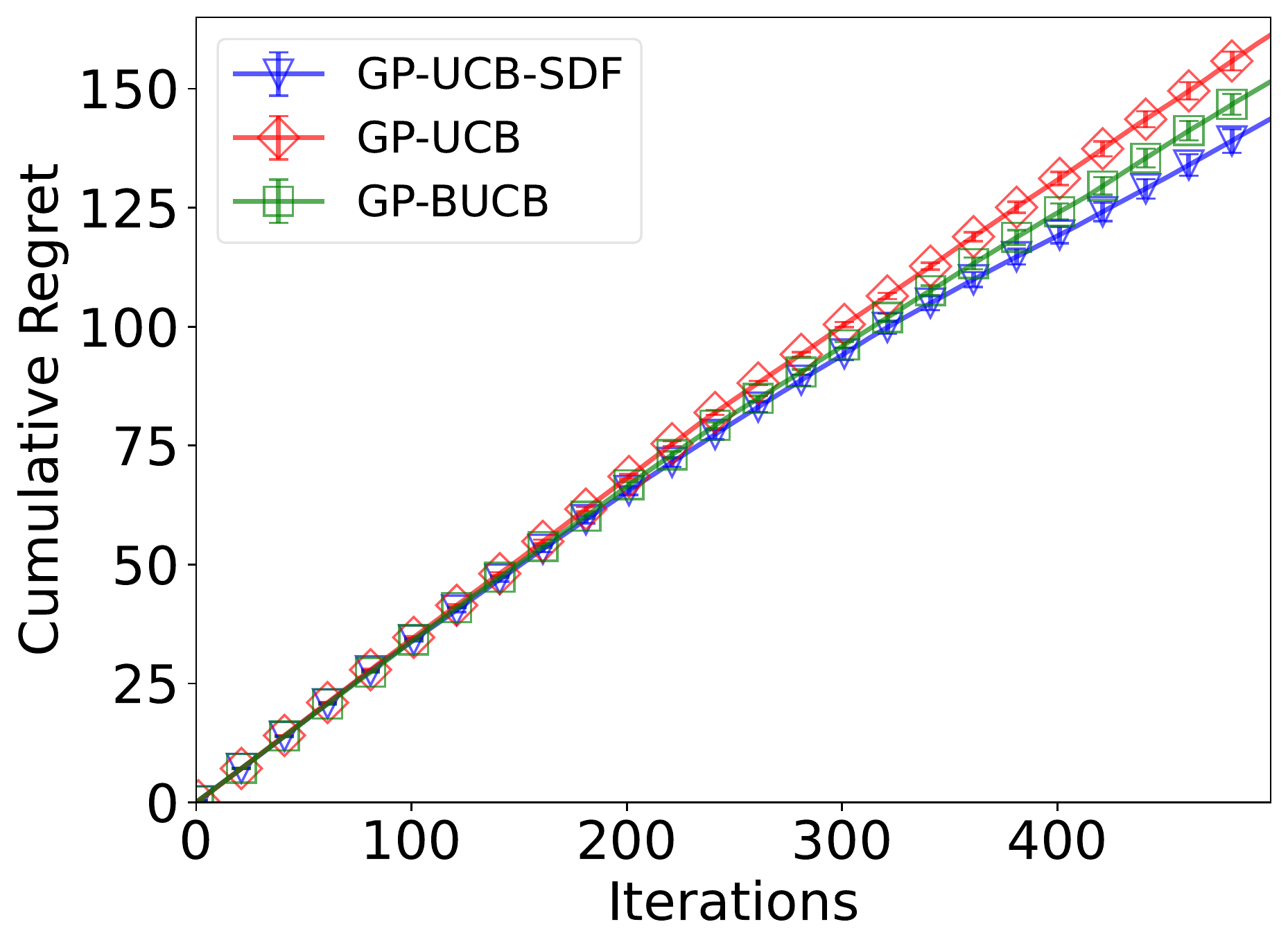}& \hspace{-5mm} 
         \includegraphics[width=0.25\linewidth]{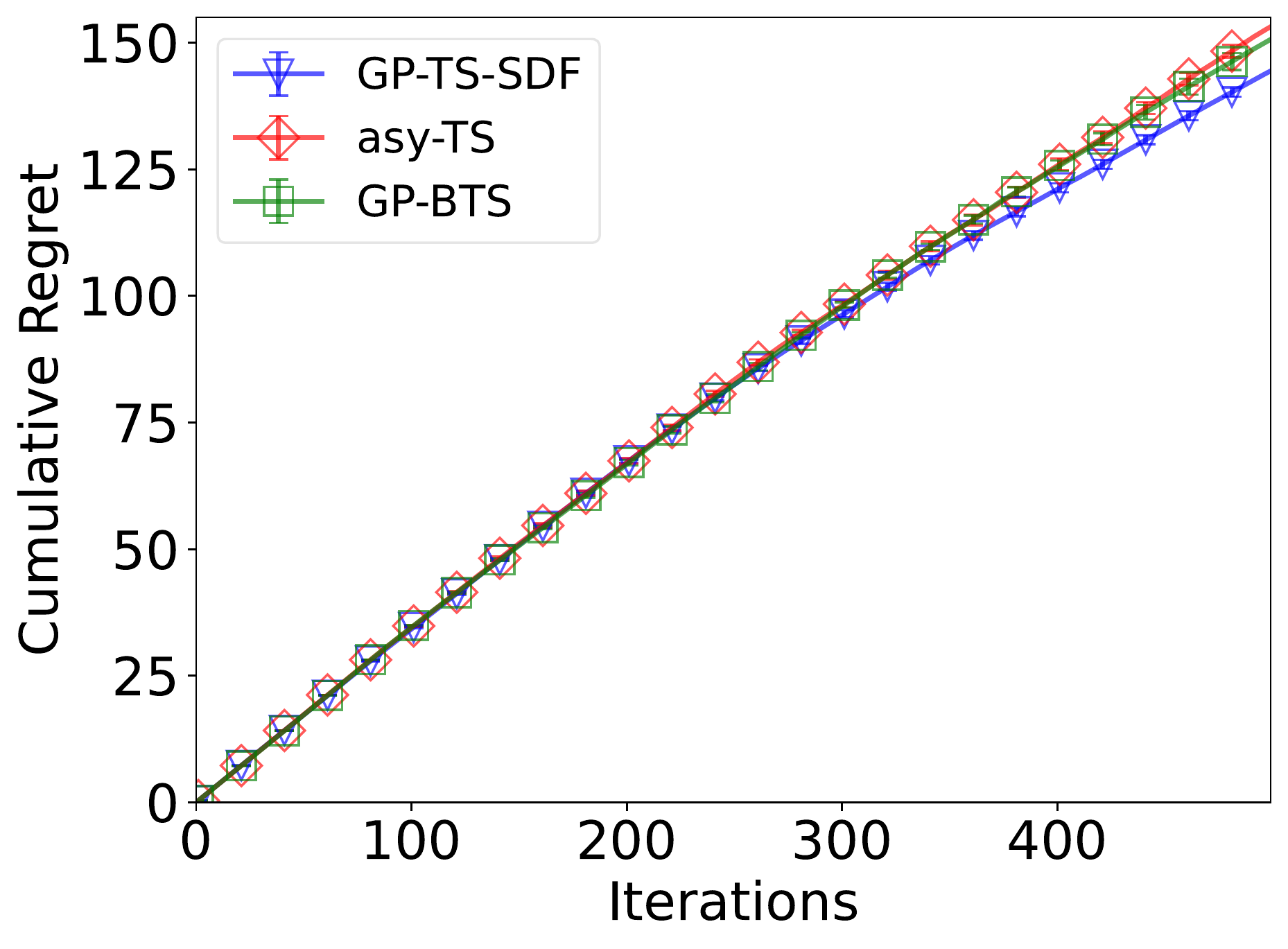}\\
         {(a)} & {(b)} & {(c)} & {(d)}
     \end{tabular}
     \caption{
     Cumulative regret of contextual GP bandit for (a) UCB-based and (b) TS-based methods for multi-task BO, and (c) UCB-based and (d) TS-based methods for non-stationary BO.
     }
     \label{fig:exp:contextual}
\end{figure*}

We now tune the hyperparameters of three different ML models to demonstrate the consistency of our algorithms. Specifically, we tune two hyperparameters of SVM (i.e., the penalty parameter and RBF kernel parameter) used for diabetes diagnosis, three hyperparameters of convolution neural networks (CNNs) (i.e., the batch size, learning rate, and learning rate decay) using the MNIST dataset, and the same three hyperparameters of logistic regression (LR) used for breast cancer detection. The experimental results on hyperparameter tuning for SVM and CNN are plotted in Fig.~\ref{fig:exp:automl}, including the results for UCB- and TS-based algorithms for both settings. The results for hyperparameter tuning of LR are shown in Fig.~\ref{fig:exp:automl_lr}. The figures show that our \ref{alg:GP-UCB-SDF} and \ref{alg:GP-TS-SDF} consistently outperform the corresponding baselines in both settings with stochastic and deterministic delay distributions. The results here verify the practical effectiveness of our proposed algorithms in real-world problems.

\subsection{Real-world Experiments on Contextual Gaussian Process Bandits}
In this section, we apply our algorithms to two real-world contextual GP bandit problems (Section~\ref{sec:contx_bandits}). Our first experiment performs multi-task BO in the contextual GP bandit setting following the work of~\citet{NIPS11_krause2011contextual}. We use the tabular benchmark dataset for SVM hyperparameter tuning from the work of~\citet{wistuba2015learning}, which consists of the validation accuracy evaluated on a discrete domain of six SVM hyperparameters for $50$ different classification tasks. Each of the $50$ tasks is associated with a set of meta-features which are used as the contexts $z_t$. We sequentially tackle the $50$ tasks and perform hyperparameter tuning for each task for $30$ iterations, i.e., the context $z_t$ for every task is repeated consecutively for $30$ times and the contexts for all tasks arrive sequentially. 

Our second experiment performs non-stationary BO in the contextual GP bandit setting. In some real-world BO problems, the objective function is non-stationary (i.e., changing over time). For example, when we tune the hyperparameters of an ML model for diabetes diagnosis (Section~\ref{subsec:exp:automl}), we may need to perform the hyperparameter tuning task periodically because the size of the dataset is progressively growing due to sequentially arriving patient data. We divide the entire dataset into $20$ progressively growing datasets and assign an index $z_t\in\{0,\ldots,19\}$ to each dataset such that a smaller dataset has a smaller index. Then, we use the indices as the contexts because two datasets whose indices are close share many common data points and are hence expected to lead to similar performances. We let the contexts arrive in the order of their indices to simulate sequentially arriving patients and again repeat each context for $30$ iterations.

For both experiments, we use a stochastic delay distribution (i.e., Poisson distribution with $\mu=3$) and set $m=6$. The cumulative regrets of different algorithms are shown in Fig.~\ref{fig:exp:contextual}, in which both \ref{alg:GP-UCB-SDF} and \ref{alg:GP-TS-SDF} incur smaller regrets than the other baselines.

	\section{Related Work}
	\label{sec:related_work}

We have developed algorithms for the BO problems with stochastic delayed feedback. In the following, we briefly review the papers relevant to our problem.

{\bf Batch BO.} 
Batch BO is the closest BO setting to our BO-SDF problem. Many algorithms have been proposed for different variants of batch BO problems in the literature \citep{desautels2014parallelizing,AIStats16_gonzalez2016batch,daxberger2017distributed,kandasamy2018parallelised,ICML19_gong2019quantile,chowdhury2019batch,NeurIPS20_balandat2020botorch}. However, the delay in observing feedback observations in all these batch BO problems is fixed. Some batch BO algorithms are based on the UCB index \citep{desautels2014parallelizing,daxberger2017distributed}, while some others are based on TS \citep{kandasamy2018parallelised,chowdhury2019batch,ICML19_gong2019quantile}. The batch BO methods like `local penalization' \citep{AIStats16_gonzalez2016batch} and `fantasize' \citep{NeurIPS20_balandat2020botorch} can not work with stochastically delayed feedback and have no theoretical guarantees. All variants of batch BO problems can be treated as special cases of the BO-SDF problem by fixing the delays appropriately. Therefore, our UCB- and TS-based algorithms can be used for any batch BO problem.

{\bf Contextual Gaussian Process (GP) Bandits.}
Several prior works consider the availability of additional information to the learner before making the decision. This line of work is popularly known as contextual bandits \citep{WWW10_li2010contextual}. It has many applications in advertising, web search, and e-commerce. In contextual bandits, the mean reward is a function of context and often parameterized (e.g., linear \citep{WWW10_li2010contextual,AISTATS11_chu2011contextual,ICML13_agrawal2013thompson}, GLM \citep{ICML17_li2017provably}, or non-linear \citep{UAI13_valko2013finite}). Contextual GP bandits model the posterior belief of a non-linear function using GPs \citep{NIPS11_krause2011contextual}. 
To the best of our knowledge, our work is the first to introduce stochastic delayed feedback into contextual GP bandits.

{\bf Stochastic Delayed Feedback.}
Due to several practical applications, some works have explored fixed or stochastic delayed feedback in the stochastic multi-armed bandits (MAB) \citep{Book_slivkins2019introduction,Book_lattimore2020bandit} which is a simpler sequential decision-making framework than BO. In the MAB literature, the problems ranging from known fixed delay \citep{ICML13_joulani2013online}, unknown delay \citep{Arxiv17_zhong2017asynchronous}, unknown stochastic delay \citep{UAI17_vernade2017stochastic,AISTATS18_grover2018best}, and unknown delays in the adversarial setting \citep{AIStats19_li2019bandit} are well-studied. The setting of the unknown stochastic delay is further extended to the linear bandits \citep{NeurIPS19_zhou2019learning,ICML20_vernade2020linear} which is closest to our BO-SDF problem. 
However, these works on linear bandits assume that the black-box function is linear and has binary feedback, which may not hold in real-life applications.

	\section{Conclusion}
	\label{sec:conclusion}

This paper has studied the Bayesian optimization (BO) problem with stochastic delayed feedback (BO-SDF). We have used Gaussian processes as a surrogate model for the posterior belief of the unknown black-box function. To exploit the information of queries with delayed feedback (i.e., incomplete queries) for updating the posterior mean function, we have used censored feedback that assigns the minimum function value (or `0') to the delayed feedback. It discourages the learner from selecting the incomplete queries (or queries near them) when choosing the next query and hence leads to more efficient exploration.  

We have proposed UCB- and TS-based algorithms for the BO-SDF problem and given their sub-linear regret guarantees. Interestingly, when delays are fixed in a certain way, batch BO becomes a special case of the BO-SDF problem. Hence, we can adapt our algorithms for the batch BO problem. We have shown that our algorithms are theoretically and empirically better and do not need the special initialization scheme required for the existing batch BO algorithms. We also extended our algorithms to contextual Gaussian process bandits with stochastic delayed feedback, which is in itself a non-trivial contribution.

As the learning environment keeps changing in real life, an interesting future direction will be considering the BO-SDF problem with a non-stationary function. We also plan to generalize our  algorithms to nonmyopic BO~\cite{dmitrii20a,ling16}, 
high-dimensional BO~\cite{NghiaAAAI18},
private outsourced BO~\cite{dmitrii20b}, preferential BO~\cite{topk}, federated/collaborative BO~\cite{dai2020federated,dai2021differentially,sim2021collaborative}, meta-BO~\cite{metaBO}, and multi-fidelity BO~\cite{yehong17,ZhangUAI19} settings, handle risks~\cite{nguyen2021value,nguyen2021conditional,SebICML22} and information-theoretic acquisition functions~\cite{LSE,TMES}, incorporate early stopping~\cite{dai2019}, and/or recursive reasoning~\cite{dai2020}, and consider its application to neural architecture search~\cite{nasi,nes} and inverse reinforcement learning~\cite{bala20}.
For applications with a huge budget of function evaluations, we like to couple our algorithms with the use of distributed/decentralized~\cite{LowUAI12,Chen13,LowRSS13,LowTASE15,HoangICML16,NghiaAAAI19,low15,Ruofei18}, online/stochastic~\cite{NghiaICML16,MinhAAAI17,LowECML14a,LowAAAI14,HaibinAPP}, or deep sparse GP models~\cite{haibinneurips19,HaibinAPP2} to represent the belief of the unknown objective function efficiently.

	\section*{Acknowledgements}
    This research/project is supported by A*STAR under its RIE$2020$ Advanced Manufacturing and Engineering (AME) Industry Alignment Fund – Pre Positioning (IAF-PP) (Award A$19$E$4$a$0101$) and by the Singapore Ministry of Education Academic Research Fund Tier $1$.

	\bibliography{ref}
	\bibliographystyle{icml2022}

	\newpage
	\onecolumn
	\centerline{\Large \bf Appendix} 
	\vskip .15in
	\hrule height1pt 
	
	\appendix
	\label{sec:appendix}

\section{Proofs of Theoretical Results}
\label{app:sec:proof}

\subsection{Proof of \cref{thm:confBound}}
We define $\varphi(x)=k(x,\cdot)$ as the feature map associated with the kernel $k$, i.e., $\varphi$ maps any input $x\in\mathcal{Q}$ into a feature in the (potentially infinite-dimensional) feature space of the RKHS associated with $k$. The reproducing property tells us that $f(x)=\langle \varphi(x),f\rangle_{k}=\varphi(x)^{\top}f$. Next, we define $V_t(\lambda)\triangleq \sum^{t-1}_{s=1}\varphi(x_s)\varphi(x_s)^{\top} + \lambda I$, $B_t\triangleq \sum^{t-1}_{s=1}\varphi(x_s)\tilde{y}_{s,t}$, and $\hat{\theta}^{W}_t\triangleq V_t(\lambda)^{-1}B_t$. Of note, the GP posterior mean~\eqref{eq:gp_posterior} can be equivalently expressed as $\mu_{t-1}(x)=\varphi(x)^{\top}V_t(\lambda)^{-1}B_t$, and the GP posterior variance~\eqref{eq:gp_posterior} can be equivalently written as $\sigma^2_{t-1}(x,x)=\sigma^2_{t-1}(x)=\lambda \varphi(x)V_t(\lambda)^{-1}\varphi(x)=\lambda \norm{\varphi(x)}_{V_t(\lambda)^{-1}}^2$. In this section, we prove \cref{thm:confBound}, which is restated below.

\confBound*
\begin{proof}
    Firstly, we can decompose $B_t$ as follows:
    \eqs{
        B_t = \sum^{t-1}_{s=1}\varphi(x_s)\tilde{y}_{s,t} = \sum^{t-1}_{s=1}\varphi(x_s)y_s\mathbbm{1}\{d_s\leq m\} + \sum^{t-1}_{s=t-m}\varphi(x_s)y_s\left(\mathbbm{1}\{d_s\leq t-s\}-\mathbbm{1}\{d_s\leq m\}\right).
    }
    Next, we can plug in this decomposition of $B_t$ to derive:
    \eq{
        \begin{split}
            |\mu_{t-1}(x) &- \rho_m f(x)| = |\varphi(x)^{\top}V_t(\lambda)^{-1}B_t - \rho_m \varphi(x)^{\top}f|\\
            &\leq \left|\varphi(x)^{\top}V_t(\lambda)^{-1}\left(\sum^{t-1}_{s=1}\varphi(x_s)y_s\mathbbm{1}\{d_s\leq m\} \right) - \rho_m \varphi(x)^{\top}f \right| + \\
            & \left| \varphi(x)^{\top}V_t(\lambda)^{-1}\left(\sum^{t-1}_{s=t-m}\varphi(x_s)y_s\left(\mathbbm{1}\{d_s\leq t-s\}-\mathbbm{1}\{d_s\leq m\}\right)\right)\right|.
        \end{split}
        \label{eq:use:Bt:decomposition}
    }
    in which the equality makes use of the expression for the GP posterior mean: $\mu_{t-1}(x)=\varphi(x)^{\top}V_t(\lambda)^{-1}B_t$, and the reproducing property: $f(x)=\varphi(x)^{\top}f$.
    Next, we will separately provide upper bounds on the two terms in \cref{eq:use:Bt:decomposition}.

    Firstly, We upper-bound the second term in \cref{eq:use:Bt:decomposition} as follows:
    \eq{
        \begin{split}
            \Big| \varphi(x)^{\top}V_t(\lambda)^{-1}&\left(\sum^{t-1}_{s=t-m}\varphi(x_s)y_s\left(\mathbbm{1}\{d_s\leq t-s\}-\mathbbm{1}\{d_s\leq m\}\right)\right)\Big|\\
            &\leq \norm{\varphi(x)}_{V_t(\lambda)^{-1}} \norm{V_t(\lambda)^{-1}\left(\sum^{t-1}_{s=t-m}\varphi(x_s)y_s\left(\mathbbm{1}\{d_s\leq t-s\}-\mathbbm{1}\{d_s\leq m\}\right)\right)}_{{V_t(\lambda)}}\\
            &\leq \sigma_{t-1}(x) \frac{1}{\sqrt{\lambda}} \norm{\sum^{t-1}_{s=t-m}\varphi(x_s)y_s\left(\mathbbm{1}\{d_s\leq t-s\}-\mathbbm{1}\{d_s\leq m\}\right)}_{V_t(\lambda)^{-1}}\\
            &\leq \sigma_{t-1}(x) \sum^{t-1}_{s=t-m} \norm{\varphi(x_s)y_s\left(\mathbbm{1}\{d_s\leq t-s\}-\mathbbm{1}\{d_s\leq m\}\right)}_{V_t(\lambda)^{-1}}\\
            &\leq \sigma_{t-1}(x) \cB_y\sum^{t-1}_{s=t-m}\norm{\varphi(x_s)}_{V_t(\lambda)^{-1}}\\
            &=\sigma_{t-1}(x)\frac{\cB_y}{\sqrt{\lambda}}\sum^{t-1}_{s=t-m}\sigma_{t-1}(x_s)\\
            &\leq \cB_y\sigma_{t-1}(x)\sum^{t-1}_{s=t-m}\sigma_{t-1}(x_s).
        \end{split}
        \label{eq:upper:bound:second:term:of:decomposition}
    }
    The second inequality and the equality have made use of the expression for the GP posterior variance:
    $\sigma^2_{t-1}(x)=\lambda \norm{\varphi(x)}_{V_t(\lambda)^{-1}}^2$, the third and last inequalities has made use of the fact that $\frac{1}{\sqrt{\lambda}} < 1$ because $\lambda > 1$ (note that if $\lambda > 0$, then there will be an additional multiplier factor of $1/\lambda$), the fourth inequality follows since we have assumed that all observations are bounded: $|y_s|\leq \cB_y,\forall s\geq1$.

    Next, define $\mu'_{t-1}(x)$ as the standard GP posterior mean \emph{without censoring the observations} (i.e., replace every $\tilde{y}_{s,t}$ by $y_s$ in \cref{eq:gp_posterior}).
    Now the first term in \cref{eq:use:Bt:decomposition} can be upper-bounded as:
    \eq{
        \begin{split}
            \Big|\varphi(x)^{\top}V_t(\lambda)^{-1}&\left(\sum^{t-1}_{s=1}\varphi(x_s)y_s\mathbbm{1}\{d_s\leq m\} \right) - \rho_m \varphi(x)^{\top}f \Big|\\
            &= \Big|\varphi(x)^{\top}V_t(\lambda)^{-1}\left(\sum^{t-1}_{s=1}\varphi(x_s)y_s(\rho_m + \eta_s) \right) - \rho_m \varphi(x)^{\top}f \Big|\\
            &\leq \Big|\varphi(x)^{\top}V_t(\lambda)^{-1}\sum^{t-1}_{s=1}\varphi(x_s)y_s\eta_s\Big| + \rho_m \Big|\varphi(x)^{\top}V_t(\lambda)^{-1}\sum^{t-1}_{s=1}\varphi(x_s)y_s -  \varphi(x)^{\top}f \Big|\\
            &= \Big|\varphi(x)^{\top}V_t(\lambda)^{-1}\sum^{t-1}_{s=1}\varphi(x_s)y_s\eta_s\Big| + \rho_m \Big|\mu'_{t-1}(x)-f(x) \Big|\\
            &\leq \Big|\varphi(x)^{\top}V_t(\lambda)^{-1}\sum^{t-1}_{s=1}\varphi(x_s)y_s\eta_s\Big| + \left(\cB_f+R\sqrt{2(\gamma_{t-1}+1+\log(2/\delta))}\right)\sigma_{t-1}(x).
        \end{split}
        \label{eq:proof:confidence:bound:1}
    }
    In the first equality, we have defined $\eta_s\triangleq \mathbbm{1}\{d_s\leq m\} - \rho_m$. In the second equality, we have again used the expression for the GP posterior mean and the reproducing property for $f$. The last inequality follows from Theorem 2 of~\citet{chowdhury2017kernelized}, which holds with the probability of $\geq 1-\delta/2$. We have also used the fact that $\rho_m\leq1$ in the last inequality.

    To bound the first term in \cref{eq:proof:confidence:bound:1}, we firstly define $\eta'_s\triangleq y_s \eta_s$, which is by definition a $\cB_y$-sub-Gaussian noise. Also define $\boldsymbol{\eta}'_{1:t-1}=[\eta'_s]_{s=1,\ldots,t-1}$ which is a $(t-1)\times1-$dimensional vector and define $\Phi_{t-1}\triangleq[\varphi(x_1),\ldots,\varphi(x_{t-1})]^{\top}$ which is a $(t-1)\times \infty$-dimensional matrix.
    These definitions imply that $\mathbf{K}_{t-1}=\Phi_{t-1}\Phi_{t-1}^{\top}$ and $V_t(\lambda)=\Phi_{t-1}^{\top}\Phi_{t-1}+\lambda I$. Also define $\lambda=1+\eta$\footnote{The value of $\lambda$ is set as $(1+2/T)$ \cite{chowdhury2017kernelized}. However, one can use a Doubling Trick \citep{Arxiv18_besson2018doubling} for unknown $T$.} with $\eta>0$.
    Based on these definitions, we can upper-bound $\norm{\sum^{t-1}_{s=1}\varphi(x_s)y_s\eta_s}_{V_t(\lambda)^{-1}}$ by
    \eq{
        \begin{split}
            \norm{\sum^{t-1}_{s=1}\varphi(x_s)y_s\eta_s}_{V_t(\lambda)^{-1}}&=\norm{\sum^{t-1}_{s=1}\varphi(x_s)\eta'_s}_{V_t(\lambda)^{-1}}=\sqrt{\left(\sum^{t-1}_{s=1}\eta'_s\varphi(x_s)^{\top}\right) V_t(\lambda)^{-1} \left(\sum^{t-1}_{s=1}\eta'_s\varphi(x_s)\right)}\\
            &=\sqrt{\boldsymbol{\eta}'^{\top}_{1:t-1}\Phi_{t-1}(\Phi_{t-1}^{\top}\Phi_{t-1}+\lambda I)^{-1}\Phi_{t-1}^{\top}\boldsymbol{\eta}'_{1:t-1}}\\
            &=\sqrt{\boldsymbol{\eta}'^{\top}_{1:t-1}\Phi_{t-1}\Phi_{t-1}^{\top}(\Phi_{t-1}\Phi_{t-1}^{\top}+\lambda I)^{-1}\boldsymbol{\eta}'_{1:t-1}}\\
            &=\sqrt{\boldsymbol{\eta}'^{\top}_{1:t-1} \mathbf{K}_{t-1}(\mathbf{K}_{t-1}+(1+\eta) I)^{-1}\boldsymbol{\eta}'_{1:t-1}}\\
            &\leq \sqrt{\boldsymbol{\eta}'^{\top}_{1:t-1} (\mathbf{K}_{t-1}+\eta I)(\mathbf{K}_{t-1}+(1+\eta) I)^{-1}\boldsymbol{\eta}'_{1:t-1}}\\
            &= \sqrt{\boldsymbol{\eta}'^{\top}_{1:t-1} 
            ((\mathbf{K}_{t-1}+\eta I)^{-1}+I)^{-1}
            \boldsymbol{\eta}'_{1:t-1}}\\
            &= \norm{\boldsymbol{\eta}'_{1:t-1}}_{((\mathbf{K}_{t-1}+\eta I)^{-1}+I)^{-1}}.
        \end{split}
        \label{eq:proof:use:concentration:from:gp:ts:paper:1}
    }
    In the equality in the third line, we have made use of the following matrix equality: $(\Phi_{t-1}^{\top}\Phi_{t-1}+\lambda I)^{-1}\Phi_{t-1}^{\top} = \Phi_{t-1}^{\top}(\Phi_{t-1}\Phi_{t-1}^{\top}+\lambda I)^{-1}$, and in the second last equality, we have used the matrix equality of $(\mathbf{K}_{t-1}+\eta I)(\mathbf{K}_{t-1}+\eta I + I)^{-1} = ((\mathbf{K}_{t-1}+\eta I)^{-1}+I)^{-1}$.
    Next, making use of Theorem 1 of~\citet{chowdhury2017kernelized} and following similar steps as those in the proof there, we have that with probability of $\geq 1-\delta/2$,
    \eq{
        \begin{split}
            \norm{\boldsymbol{\eta}'_{1:t-1}}_{((\mathbf{K}_{t-1}+\eta I)^{-1}+I)^{-1}}
            &\leq \cB_y\sqrt{ 2\log\frac{\sqrt{\text{det}((1+\eta)I + \mathbf{K}_{t-1})}}{\delta/2}}\\
            &\leq \cB_y\sqrt{2(\gamma_{t-1}+1+\log(2/\delta))}.
        \end{split}
        \label{eq:proof:use:concentration:from:gp:ts:paper:2}
    }
    
    Next, combining \cref{eq:proof:use:concentration:from:gp:ts:paper:1} and \cref{eq:proof:use:concentration:from:gp:ts:paper:2} above allows us to upper-bound the first term in \cref{eq:proof:confidence:bound:1}:
    \eq{
        \begin{split}
            \Big|\varphi(x)^{\top}V_t(\lambda)^{-1}\sum^{t-1}_{s=1}\varphi(x_s)y_s\eta_s\Big| &\leq \norm{\varphi(x)}_{V_t(\lambda)^{-1}} \norm{V_t(\lambda)^{-1}\sum^{t-1}_{s=1}\varphi(x_s)y_s\eta_s}_{V_t(\lambda)}\\
            &\leq \sigma_{t-1}(x) \frac{1}{\sqrt{\lambda}} \norm{\sum^{t-1}_{s=1}\varphi(x_s)y_s\eta_s}_{V_t(\lambda)^{-1}}\\
            &\leq \sigma_{t-1}(x) \cB_y\sqrt{2(\gamma_{t-1}+1+\log(2/\delta))},
        \end{split}
        \label{eq:proof:use:concentration:from:gp:ts:paper:3}
    }
    in which the second inequality has made use of the expression for the GP posterior variance: $\sigma^2_{t-1}(x)=\lambda \norm{\varphi(x)}_{V_t(\lambda)^{-1}}^2$, and the last inequality follows from \cref{eq:proof:use:concentration:from:gp:ts:paper:1} and \cref{eq:proof:use:concentration:from:gp:ts:paper:2} (and hence holds with probability of $\geq1-\delta/2$), as well as the fact that $\lambda>1$. Now, we can plug in \cref{eq:proof:use:concentration:from:gp:ts:paper:3} as an upper bound on the first term in \cref{eq:proof:confidence:bound:1}, and then use the resulting \cref{eq:proof:confidence:bound:1} as an upper bound on the first term in \cref{eq:use:Bt:decomposition}.
    Combining this with the upper bound on the second term of \cref{eq:use:Bt:decomposition} given by \cref{eq:upper:bound:second:term:of:decomposition}, we have that
    \als{
        |\mu_{t-1}(x) - \rho_m f(x)| &\leq \cB_y\sigma_{t-1}(x)\sum^{t-1}_{s=t-m}\sigma_{t-1}(x_s) + \left(\cB_f + R\sqrt{2(\gamma_{t-1}+1+\log(2/\delta))}\right)\sigma_{t-1}(x) +\\
        &\quad \sigma_{t-1}(x) \cB_y\sqrt{2(\gamma_{t-1}+1+\log(2/\delta))}\\
        &= \sigma_{t-1}(x) \left( \cB_y\sum^{t-1}_{s=t-m}\sigma_{t-1}(x_s) + \cB_f+ (R + \cB_y)\sqrt{2(\gamma_{t-1}+1+\log(2/\delta))} \right)\\
        &= \nu_t\sigma_{t-1}(x),
    }
    in which we have plugged in the definition of $\nu_t$ in the last equality. This completes the proof.
\end{proof}

\subsection{Proof of \cref{thm:ucbRegret}}
\label{app:subsec:proof:ucb}

\ucbRegret*
\begin{proof}
    To begin with, we can upper-bound the instantaneous regret $r_t=f(x^*)-f(x_t)$ as
    \eq{
        \begin{split}
            r_t &= f(x^*) - f(x_t) = \frac{1}{\rho_m}\left[\rho_m f(x^*) - \rho_m f(x_t)\right]\\
            &\leq \frac{1}{\rho_m}\left[ \mu_{t-1}(x^*) + \nu_t\sigma_{t-1}(x^*) - \rho_m f(x_t) \right]\\
            &\leq \frac{1}{\rho_m}\left[ \mu_{t-1}(x_t) + \nu_t\sigma_{t-1}(x_t) - \rho_m f(x_t) \right] \\
            &\leq \frac{1}{\rho_m}2\nu_t\sigma_{t-1}(x_t),
        \end{split}
        \label{app:proof:ucb:inst:regret}
    }
    in which the first and last inequalities have made use of \cref{thm:confBound}, and the second inequality follows because $x_t$ is selected using the UCB policy: $x_t = \argmax_{x\in\mathcal{Q}} \left(\mu_{t-1} + \nu_t\sigma_{t-1}(x)\right)$~\eqref{eq:ucb_value}. Now the cumulative regret can be upper-bounded as
    \eq{
        \Regret_T = \sum^T_{t=1}r_t \leq \frac{2}{\rho_m}\sum^T_{t=1}\nu_t\sigma_{t-1}(x_t) =\frac{2}{\rho_m}\sum^T_{t=1}\sigma_{t-1}(x_t)\beta_t + \frac{2}{\rho_m}\sum^T_{t=1}\sigma_{t-1}(x_t)\left(\cB_y\sum^{t-1}_{s=t-m}\sigma_{t-1}(x_s)\right),
        \label{eq:decompose:RT}
    }
    where we have plugged in the expression of $\nu_t$ in the equality.

    Next, we use $I(\mathbf{y}_{1:T};\mathbf{f}_{1:T})$ to denote the information gain from the noisy observations in the first $T$ iterations about the objective function $f$.
    The first term in \cref{eq:decompose:RT} can be upper-bounded as:
    \eq{
        \begin{split}
            \frac{2}{\rho_m}\sum^T_{t=1}\sigma_{t-1}(x_t)\beta_t &\leq \frac{2}{\rho_m}\beta_T\sum^T_{t=1}\sigma_{t-1}(x_t) \\
            &\leq \frac{2}{\rho_m}\beta_T\sqrt{T}\sqrt{\sum^T_{t=1}\sigma^2_{t-1}(x_t)}\\
            &\leq \frac{2}{\rho_m}\beta_T\sqrt{T}\sqrt{\sum^T_{t=1}\frac{1}{\log(1+\lambda^{-1})}\log(1+\lambda^{-1}\sigma^2_{t-1}(x_t))}\\
            &\leq \frac{2}{\rho_m}\sqrt{\frac{1}{\log(1+\lambda^{-1})}}\beta_T\sqrt{T}\sqrt{2\frac{1}{2}\sum^T_{t=1}\log(1+\lambda^{-1}\sigma^2_{t-1}(x_t))}\\
            &\leq \frac{2}{\rho_m}\sqrt{\frac{1}{\log(1+\lambda^{-1})}}\beta_T\sqrt{T}\sqrt{2 I(\mathbf{y}_{1:T};\mathbf{f})}\\
            &\leq \frac{2}{\rho_m}\sqrt{\frac{2}{\log(1+\lambda^{-1})}}\beta_T\sqrt{T}\sqrt{\gamma_T} \\
            &\leq \frac{2}{\rho_m}C_1\beta_T\sqrt{T\gamma_T}.
        \end{split}
        \label{eq:proof:ucb:1}
    }
    The first inequality follows since $\beta_t$ is monotonically increasing in $t$, the inequality in the second line follows because $\lambda^{-1}a \leq \frac{\lambda^{-1}}{\log(1+\lambda^{-1})} \log(1+\lambda^{-1} a)$ for $a\in(0,1)$ (substitute $a=\sigma^2_{t-1}(x_t)$), and the inequality in the fourth line follows from plugging in the expression for the information gain: $I(\mathbf{y}_{1:T};\mathbf{f})=\frac{1}{2}\sum^T_{t=1}\log(1+\lambda^{-1}\sigma^2_{t-1}(x_t))$ (Lemma 5.3 of~\citet{ICML10_srinival2010gaussian}).

    Next, we can derive an upper bound on the second term in \cref{eq:decompose:RT}:
    \eq{
        \begin{split}
            \frac{2}{\rho_m}\sum^T_{t=1}\sigma_{t-1}(x_t)&\left(\cB_y\sum^{t-1}_{s=t-m}\sigma_{t-1}(x_s)\right)=\frac{2}{\rho_m}\cB_y\sum^T_{t=1}\sum^{t-1}_{s=t-m}\sigma_{t-1}(x_t)\sigma_{t-1}(x_s)\\
            &\leq \frac{1}{\rho_m}\cB_y\sum^T_{t=1}\sum^{t-1}_{s=t-m}\left(\sigma^2_{t-1}(x_t)+\sigma^2_{t-1}(x_s)\right)\\
            &\leq \frac{1}{\rho_m}\cB_y\sum^T_{t=1}\sum^{t-1}_{s=t-m}\left(\sigma^2_{t-1}(x_t)+\sigma^2_{s-1}(x_s)\right)\\
            &\leq \frac{2m}{\rho_m}\cB_y\sum^T_{t=1}\sigma^2_{t-1}(x_t) \\
            &\leq \frac{2m}{\rho_m}\cB_y C_1^2 \gamma_T.
        \end{split}
        \label{eq:proof:ucb:2}
    }
    The first inequality has made use of the inequality of $ab \leq (a^2+b^2)/2$, the second inequality follows since $\sigma^2_{t-1}(x_s)\leq \sigma^2_{s-1}(x_s)$ which is because $\sigma^2_{t-1}(x_s)$ is calculated by conditioning on more input locations than $\sigma^2_{s-1}(x_s)$, and the last inequality follows easily from some of the intermediate steps in the derivations of \cref{eq:proof:ucb:1}.
    
    Lastly, we can plug in \cref{eq:proof:ucb:1} and \cref{eq:proof:ucb:2} as upper bounds on the two terms in \cref{eq:decompose:RT}, to obtain:
    \als{
        \Regret_T &\leq \frac{2}{\rho_m}C_1\beta_T\sqrt{T\gamma_T} + \frac{2m}{\rho_m}\cB_y C_1^2 \gamma_T =\frac{2}{\rho_m}\left( C_1\beta_T\sqrt{T\gamma_T} + m\cB_y C_1^2 \gamma_T \right).
    }
    It completes the proof.
\end{proof}

\subsection{Proof of \cref{thm:tsRegret}}
\label{app:subsec:proof:ts}
First we define $\beta_t\triangleq \cB_f+ (R + \cB_y)\sqrt{2(\gamma_{t-1}+1+\log(4/\delta))}$, $\nu_t\triangleq \cB_y\sum^{t-1}_{s=t-m}\sigma_{t-1}(x_s) + \beta_t$ and $c_t \triangleq \nu_t (1+\sqrt{2\log(|\mathcal{Q}|t^2)})$. Note that we have replaced the $\delta$ in the definition of $\beta_t$ (\cref{thm:confBound}) by $\delta/2$. We use $\mathcal{F}_{t-1}$ to denote the filtration containing the history of selected inputs and observed outputs up to iteration $t-1$. To begin with, we define two events $E^f(t)$ and $E^{f_t}(t)$ through the following two lemmas.
\begin{lem}
    \label{lemma:uniform_bound}
    Let $\delta \in (0, 1)$. Define $E^f(t)$ as the event that $|\mu_{t-1}(x) - \rho_m f(x)| \leq \nu_t \sigma_{t-1}(x)$ for all $x\in \mathcal{Q}$.
    We have that $\mathbb{P}\left[E^f(t)\right] \geq 1 - \delta / 2$ for all $t\geq 1$.
\end{lem}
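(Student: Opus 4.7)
The plan is essentially to invoke Theorem~\ref{thm:confBound} with the failure probability rescaled from $\delta$ to $\delta/2$, and then observe that the resulting confidence bound actually holds \emph{uniformly} over $x\in\mathcal{Q}$ rather than only pointwise. So the task reduces to (a) checking uniformity in $x$ inside the proof of Theorem~\ref{thm:confBound}, and (b) tracking the $\delta/2$ rescaling to match the $\beta_t$ stated in the lemma.

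First I would inspect the decomposition used in the proof of Theorem~\ref{thm:confBound}. Each of the three contributions bounding $|\mu_{t-1}(x)-\rho_m f(x)|$ arises by applying Cauchy--Schwarz with the feature map $\varphi(x)$, which extracts a factor $\norm{\varphi(x)}_{V_t(\lambda)^{-1}}=\sigma_{t-1}(x)/\sqrt{\lambda}$ and leaves a residual $V_t(\lambda)^{-1}$-norm that does \emph{not} depend on $x$: for the ``censoring'' term one gets $\sigma_{t-1}(x)\cB_y\sum_{s=t-m}^{t-1}\sigma_{t-1}(x_s)$, which is manifestly uniform in $x$; for the noise term one gets $\sigma_{t-1}(x)\cdot\norm{\sum_s\varphi(x_s)y_s\eta_s}_{V_t(\lambda)^{-1}}$ controlled by the Chowdhury--Gopalan self-normalized bound (their Theorem~1); and for the standard regression error one gets $\sigma_{t-1}(x)$ times a quantity bounded by their Theorem~2. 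Since both underlying probabilistic events are $x$-independent (they are tail bounds on $\norm{\cdot}_{V_t(\lambda)^{-1}}$ of vectors that do not involve $x$), a single high-probability event suffices to cover every $x\in\mathcal{Q}$ simultaneously.

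Next I would substitute $\delta\leftarrow\delta/2$ throughout the proof of Theorem~\ref{thm:confBound}. This turns each $\sqrt{2(\gamma_{t-1}+1+\log(2/\delta))}$ into $\sqrt{2(\gamma_{t-1}+1+\log(4/\delta))}$, which matches the $\beta_t$ used in the statement of Lemma~\ref{lemma:uniform_bound}. Each of the two failure events then has probability at most $\delta/4$, so by a union bound over these two events (not over $\mathcal{Q}$) the joint failure probability is at most $\delta/2$. On the complement, which has probability at least $1-\delta/2$, the inequality $|\mu_{t-1}(x)-\rho_m f(x)|\leq\nu_t\sigma_{t-1}(x)$ holds for every $x\in\mathcal{Q}$; this is precisely $E^f(t)$.

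No serious obstacle is expected; the one point requiring care is confirming the $x$-independence of the two concentration events, after which the lemma is immediate. As a fallback, even without the uniformity observation one could apply a naive union bound over the finite set $\mathcal{Q}$ to obtain the same conclusion, but at the cost of an additional $\log|\mathcal{Q}|$ term inside $\beta_t$; since the lemma's $\beta_t$ is $\mathcal{Q}$-independent, the uniform-in-$x$ route is the correct one to take.
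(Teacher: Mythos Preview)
Your proposal is correct and matches the paper's approach exactly: the paper simply notes that Lemma~\ref{lemma:uniform_bound} is Theorem~\ref{thm:confBound} with the error probability replaced by $\delta/2$, which is precisely your plan. Your additional check that the two concentration events in the proof of Theorem~\ref{thm:confBound} are $x$-independent (so uniformity over $\mathcal{Q}$ comes for free without a union bound) is a helpful elaboration that the paper leaves implicit.
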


Note that \cref{lemma:uniform_bound} is the same as \cref{thm:confBound} except that we have replaced the error probability of $\delta$ by $\delta/2$.
\begin{lem}
    \label{lemma:uniform_bound_t}
    Define $E^{f_t}(t)$ as the event that $|f_t(x) - \mu_{t-1}(x)| \leq \nu_t \sqrt{2\log(|\mathcal{Q}|t^2)} \sigma_{t-1}(x)$.
    We have that $\mathbb{P}\left[E^{f_t}(t) | \mathcal{F}_{t-1}\right] \geq 1 - 1 / t^2$ for any possible filtration $\mathcal{F}_{t-1}$.
\end{lem}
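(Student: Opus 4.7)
The plan is to reduce this to a Gaussian tail bound combined with a union bound over the finite query domain $\mathcal{Q}$. Everything needed is already set up by the sampling rule in \cref{eq:ts_value}: conditional on $\mathcal{F}_{t-1}$, the quantities $\mu_{t-1}(\cdot)$, $\sigma_{t-1}(\cdot)$ and $\nu_t$ are all deterministic, and for each fixed $x\in\mathcal{Q}$ we have $f_t(x)\mid\mathcal{F}_{t-1}\sim\mathcal{N}(\mu_{t-1}(x),\,\nu_t^2\sigma_{t-1}^2(x))$ by definition of the TS sample $f_t\sim\mathcal{GP}(\mu_{t-1}(\cdot),\nu_t^2\sigma_{t-1}^2(\cdot))$.

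First I would fix an arbitrary $x\in\mathcal{Q}$ and let $Z_x = (f_t(x)-\mu_{t-1}(x))/(\nu_t\sigma_{t-1}(x))$, which is a standard normal conditional on $\mathcal{F}_{t-1}$ (with the convention that the event is trivially satisfied if $\sigma_{t-1}(x)=0$). Then I would invoke the standard one-sided Gaussian concentration $\mathbb{P}(Z_x>k\mid\mathcal{F}_{t-1})\leq \tfrac{1}{2}e^{-k^2/2}$, which yields the two-sided tail bound $\mathbb{P}(|Z_x|>k\mid\mathcal{F}_{t-1})\leq e^{-k^2/2}$. Setting $k=\sqrt{2\log(|\mathcal{Q}|t^2)}$ gives
\[
  \mathbb{P}\!\left(|f_t(x)-\mu_{t-1}(x)|>\nu_t\sqrt{2\log(|\mathcal{Q}|t^2)}\,\sigma_{t-1}(x)\;\Big|\;\mathcal{F}_{t-1}\right)\leq \frac{1}{|\mathcal{Q}|t^2}.
\]

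Next I would take a union bound of this inequality over the finitely many $x\in\mathcal{Q}$, which produces $\mathbb{P}[\overline{E^{f_t}(t)}\mid\mathcal{F}_{t-1}]\leq |\mathcal{Q}|\cdot \tfrac{1}{|\mathcal{Q}|t^2}=\tfrac{1}{t^2}$, and therefore $\mathbb{P}[E^{f_t}(t)\mid\mathcal{F}_{t-1}]\geq 1-1/t^2$, as required. Since the tail bound is conditional on $\mathcal{F}_{t-1}$ and the bound $1/t^2$ does not depend on the filtration, the conclusion holds for any realization of $\mathcal{F}_{t-1}$.

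There is essentially no obstacle beyond selecting the correct constants. The only subtlety worth flagging is using the sharper Gaussian one-sided tail $\tfrac{1}{2}e^{-k^2/2}$ (valid for all $k\geq 0$) rather than the looser $e^{-k^2/2}$, so that the two-sided bound absorbs the union bound factor $|\mathcal{Q}|$ cleanly and yields $1/t^2$ instead of $2/t^2$; this mirrors the constant chosen in the standard GP-TS analysis and is precisely why the logarithmic factor in the definition of $E^{f_t}(t)$ is written with $|\mathcal{Q}|t^2$ under the square root.
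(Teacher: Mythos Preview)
Your proposal is correct and follows exactly the approach the paper points to: the paper defers this lemma to Lemma~5 of \citet{chowdhury2017kernelized}, which is precisely the Gaussian tail bound plus union bound over $\mathcal{Q}$ that you spell out. Your handling of the constants (using the sharper one-sided bound $\tfrac{1}{2}e^{-k^2/2}$ so the two-sided bound lands on $1/(|\mathcal{Q}|t^2)$) is also the right way to recover the stated $1-1/t^2$ probability.
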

\cref{lemma:uniform_bound_t} makes use of the concentration of a random variable sampled from a Gaussian distribution, and its proof follows from Lemma 5 of~\citet{chowdhury2017kernelized}. Next, we define a set of \emph{saturated points} in every iteration, which intuitively represents those inputs that lead to large regrets in an iteration.
\begin{defi}
    \label{def:saturated_set}
    In iteration $t$, define the set of saturated points as
    \eqs{
        S_t = \{ x \in \mathcal{Q} : \rho_m \Delta(x) > c_t \sigma_{t-1}(x) \},
    }
    where $\Delta(x) = f(x^*) - f(x)$ and $x^* \in \arg\max_{x\in \mathcal{Q}}f(x)$.
\end{defi}

Based on this definition, we will later prove that we can get a lower bound on the probability that the selected input in iteration $t$ is unsaturated (\cref{lemma:prob_unsaturated}). To do that, we first need the following auxiliary lemma.
\begin{lem}
\label{lemma:uniform_lower_bound}
    For any filtration $\mathcal{F}_{t-1}$, conditioned on the event $E^f(t)$, we have that $\forall x\in \mathcal{Q}$,
    \eq{
        \mathbb{P}\left(f_t(x) > \rho_m f(x) | \mathcal{F}_{t-1}\right) \geq p,
    }
    where $p = \frac{1}{4e\sqrt{\pi}}$.
\end{lem}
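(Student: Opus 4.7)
The plan is to exploit the Gaussian nature of $f_t(x)$ to reduce the claim to a standard one-sided anti-concentration inequality for the normal distribution. Since $f_t \sim \mathcal{GP}(\mu_{t-1}(\cdot), \nu_t^2 \sigma_{t-1}^2(\cdot))$, the marginal at any fixed $x$ satisfies $f_t(x) \sim \mathcal{N}(\mu_{t-1}(x), \nu_t^2 \sigma_{t-1}^2(x))$. Standardising gives $Z := (f_t(x) - \mu_{t-1}(x))/(\nu_t \sigma_{t-1}(x)) \sim \mathcal{N}(0,1)$ conditional on $\mathcal{F}_{t-1}$, since $\mu_{t-1}(x)$, $\sigma_{t-1}(x)$, and $\nu_t$ are all $\mathcal{F}_{t-1}$-measurable (the censoring indicators up to time $t-1$ are fixed given the history).

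The main reduction is to rewrite the probability of interest as
\[
\mathbb{P}\bigl(f_t(x) > \rho_m f(x) \mid \mathcal{F}_{t-1}\bigr) = \mathbb{P}\!\left(Z > \frac{\rho_m f(x) - \mu_{t-1}(x)}{\nu_t \sigma_{t-1}(x)} \;\middle|\; \mathcal{F}_{t-1}\right).
\]
Under the event $E^f(t)$ supplied by \cref{lemma:uniform_bound}, we have $|\mu_{t-1}(x) - \rho_m f(x)| \leq \nu_t \sigma_{t-1}(x)$, so the threshold on the right-hand side is at most $1$. Monotonicity of the Gaussian tail then yields
\[
\mathbb{P}\bigl(f_t(x) > \rho_m f(x) \mid \mathcal{F}_{t-1}\bigr) \;\geq\; \mathbb{P}(Z > 1).
\]

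The final step is to invoke the standard Gaussian anti-concentration bound $\mathbb{P}(Z > z) \geq \tfrac{1}{4\sqrt{\pi}} e^{-z^2}$ for $z \geq 0$ (see, e.g., the Thompson sampling analyses in \citet{AISTATS13_agrawal2013further} and \citet{chowdhury2017kernelized}), evaluated at $z = 1$, which produces the constant $p = 1/(4e\sqrt{\pi})$ exactly. I do not expect any serious obstacle: the one subtlety worth being careful about is verifying that $\nu_t$, $\mu_{t-1}$, and $\sigma_{t-1}$ are indeed $\mathcal{F}_{t-1}$-measurable so that the Gaussian distribution of $f_t(x)$ is valid when conditioning on $\mathcal{F}_{t-1}$, and that the event $E^f(t)$ is itself $\mathcal{F}_{t-1}$-measurable so that the inequality on the threshold can be applied inside the conditional probability. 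Both facts follow directly from the construction of $\tilde{y}_{s,t}$, the GP posterior, and the sampling rule.
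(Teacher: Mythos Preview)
Your proposal is correct and follows essentially the same route as the paper: standardise $f_t(x)$, use the event $E^f(t)$ to bound the resulting threshold by $1$, and then apply a Gaussian anti-concentration inequality at $z=1$ to obtain $p=1/(4e\sqrt{\pi})$. The only cosmetic difference is that the paper states the anti-concentration bound in the form $\mathbb{P}(Z>a)\geq e^{-a^2}/(4\sqrt{\pi}a)$, whereas you cite the version without the $a$ in the denominator; at $a=1$ both yield the same constant, so this is immaterial.
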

\begin{proof}
    Adding and subtracting $\frac{\mu_{t-1}(x)}{\nu_t\sigma_{t-1}(x)}$ both sides of $\mathbb{P}\left(f_t(x) > \rho_m f(x) | \mathcal{F}_{t-1}\right)$, we get
    \eq{
        \begin{split}
            \mathbb{P}\left(f_t(x) > \rho_m f(x) | \mathcal{F}_{t-1}\right) &= \mathbb{P}\left(\frac{f_t(x)-\mu_{t-1}(x)}{\nu_t\sigma_{t-1}(x)} > \frac{\rho_m f(x)-\mu_{t-1}(x)}{\nu_t\sigma_{t-1}(x)} | \mathcal{F}_{t-1}\right)\\
            &\geq \mathbb{P}\left(\frac{f_t(x)-\mu_{t-1}(x)}{\nu_t\sigma_{t-1}(x)} > \frac{|\rho_m f(x)-\mu_{t-1}(x)|}{\nu_t\sigma_{t-1}(x)} | \mathcal{F}_{t-1}\right)\\
            &\geq \mathbb{P}\left(\frac{f_t(x)-\mu_{t-1}(x)}{\nu_t\sigma_{t-1}(x)} > 1 | \mathcal{F}_{t-1}\right)\\
            &\geq \frac{1}{4e\sqrt{\pi}},
    \end{split}
    }
    in which the second inequality makes use of \cref{lemma:uniform_bound} (note that we have conditioned on the event $E^f(t)$ here), and the last inequality follows from the Gaussian anti-concentration inequality: $\bP(z>a) \geq \exp(-a^2) / (4\sqrt{\pi}a)$ where $z\sim\mathcal{N}(0,1)$.
\end{proof}

The next lemma proves a lower bound on the probability that the selected input $x_t$ is unsaturated.
\begin{lem}
    \label{lemma:prob_unsaturated}
    For any filtration $\mathcal{F}_{t-1}$, conditioned on the event $E^f(t)$, we have that,
    \eqs{
        \mathbb{P}\left(x_t \in \mathcal{Q}\setminus S_t | \mathcal{F}_{t-1} \right) \geq p - 1/t^2.
    }
\end{lem}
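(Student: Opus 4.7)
The plan is to combine the TS selection rule with the two high-probability confidence events from Lemmas~\ref{lemma:uniform_bound} and \ref{lemma:uniform_bound_t}, together with the anti-concentration bound in Lemma~\ref{lemma:uniform_lower_bound}, to exhibit a deterministic implication ``nice events $\Rightarrow$ $x_t$ is unsaturated'' and then take a union bound.

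First I would note that, by the TS rule \eqref{eq:ts_value}, $f_t(x_t) \geq f_t(x^\star)$. Conditioning on $E^{f_t}(t)$ applied at $x_t$, and on $E^f(t)$ applied at $x_t$, I would chain
\[
f_t(x_t) \leq \mu_{t-1}(x_t) + \nu_t\sqrt{2\log(|\mathcal{Q}|t^2)}\,\sigma_{t-1}(x_t) \leq \rho_m f(x_t) + \nu_t\bigl(1 + \sqrt{2\log(|\mathcal{Q}|t^2)}\bigr)\sigma_{t-1}(x_t),
\]
which by the definition of $c_t$ gives $f_t(x_t) \leq \rho_m f(x_t) + c_t \sigma_{t-1}(x_t)$. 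On the additional event $A(t) \eqdef \{f_t(x^\star) > \rho_m f(x^\star)\}$, we then get
\[
\rho_m f(x^\star) < f_t(x^\star) \leq f_t(x_t) \leq \rho_m f(x_t) + c_t \sigma_{t-1}(x_t),
\]
i.e.\ $\rho_m \Delta(x_t) < c_t \sigma_{t-1}(x_t)$, so $x_t \in \mathcal{Q}\setminus S_t$ by Definition~\ref{def:saturated_set}. Thus, conditioned on $E^f(t)$ and $\mathcal{F}_{t-1}$, the inclusion $E^{f_t}(t) \cap A(t) \subseteq \{x_t \in \mathcal{Q}\setminus S_t\}$ holds.

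Finally I would combine the probabilities. Lemma~\ref{lemma:uniform_lower_bound} gives $\mathbb{P}(A(t)\mid \mathcal{F}_{t-1}) \geq p$ on $E^f(t)$, and Lemma~\ref{lemma:uniform_bound_t} gives $\mathbb{P}(E^{f_t}(t)^c \mid \mathcal{F}_{t-1}) \leq 1/t^2$, so
\[
\mathbb{P}(x_t \in \mathcal{Q}\setminus S_t \mid \mathcal{F}_{t-1}) \geq \mathbb{P}(A(t) \cap E^{f_t}(t)\mid \mathcal{F}_{t-1}) \geq \mathbb{P}(A(t)\mid \mathcal{F}_{t-1}) - \mathbb{P}(E^{f_t}(t)^c \mid \mathcal{F}_{t-1}) \geq p - 1/t^2.
\]

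The main obstacle is keeping track of the $\rho_m$ factor correctly, since the ``optimism'' needed in this analysis is optimism about $\rho_m f$ rather than $f$ itself; this is why the saturated set in Definition~\ref{def:saturated_set} is defined in terms of $\rho_m \Delta(x)$, and why Lemma~\ref{lemma:uniform_lower_bound} asks for $f_t(x) > \rho_m f(x)$ instead of $f_t(x) > f(x)$. Once these $\rho_m$'s line up in the chain above, the rest is a routine union bound.
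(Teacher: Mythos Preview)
Your proof is correct and uses essentially the same ingredients as the paper (the two confidence events, the anti-concentration at $x^\star$, and the TS selection rule), with only a cosmetic difference in how the deterministic implication is phrased: the paper first shows that on $E^f(t)\cap E^{f_t}(t)$ every saturated $x$ satisfies $f_t(x)\leq \rho_m f(x^\star)$ and then uses $A(t)$ to conclude $f_t(x^\star)>f_t(x)$ for all $x\in S_t$, whereas you chain the same inequalities directly at the selected point $x_t$ to verify $\rho_m\Delta(x_t)<c_t\sigma_{t-1}(x_t)$. Both routes yield the identical union-bound conclusion $p-1/t^2$.
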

\begin{proof}
    To begin with, we have that
    \eq{
        \mathbb{P}\left(x_t \in \mathcal{Q}\setminus S_t | \mathcal{F}_{t-1} \right) \geq \mathbb{P}\left( f_t(x^*) > f_t(x),\forall x \in S_t | \mathcal{F}_{t-1} \right).
        \label{eq:lower_bound_prob_unsaturated}
    }
    This inequality can be justified because the event on the right hand side implies the event on the left hand side. Specifically, according to \cref{def:saturated_set}, $x^*$ is always unsaturated because $\Delta(x^*)=0<c_t\sigma_{t-1}(x^*)$. Therefore, because $x_t$ is selected by $x_t = {\arg\max}_{x\in\mathcal{Q}}f_t(x)$, we have that if $f_t(x^*) > f_t(x),\forall x \in S_t$, then the selected $x_t$ is guaranteed to be unsaturated. Now conditioning on both events $E^f(t)$ and $E^{f_t}(t)$, for all $x\in S_t$, we have that
    \eq{
        \begin{split}
            f_t(x) \leq \rho_m f(x) + c_t\sigma_{t-1}(x) \leq \rho_m f(x) + \rho_m \Delta(x)=\rho_mf(x) + \rho_mf(x^*) - \rho_mf(x) = \rho_mf(x^*),
        \end{split}
        \label{eq:bound_ftx_ftstar}
    }
    in which the first inequality follows from \cref{lemma:uniform_bound} and \cref{lemma:uniform_bound_t} and the second inequality makes use of \cref{def:saturated_set}.
    Next, separately considering the cases where the event $E^{f_t}(t)$ holds or not and making use of \cref{eq:lower_bound_prob_unsaturated} and \cref{eq:bound_ftx_ftstar}, we have that
    \eq{
        \begin{split}
            \mathbb{P}\left(x_t \in \mathcal{Q}\setminus S_t | \mathcal{F}_{t-1} \right) &\geq 
            \mathbb{P}\left( f_t(x^*) > f_t(x),\forall x \in S_t | \mathcal{F}_{t-1} \right)\\
            &\geq \mathbb{P}\left( f_t(x^*) > f(x^*) | \mathcal{F}_{t-1} \right) - \mathbb{P}\left(\overline{E^{f_t}(t)} | \mathcal{F}_{t-1}\right)\\
            &\geq p - 1 / t^2,
        \end{split}
        \label{eq:unsaturated_prob_plugin_2}
    }
    in which the last inequality has made use of  \cref{lemma:uniform_bound_t} and \cref{lemma:uniform_lower_bound}.
\end{proof}

Next, we use the following lemma to derive an upper bound on the expected instantaneous regret.
\begin{lem}
    \label{lemma:upper_bound_expected_regret}
    For any filtration $\mathcal{F}_{t-1}$, conditioned on the event $E^f(t)$, we have that,
    \eqs{
        \bE [r_t | \mathcal{F}_{t-1}] \leq \frac{11c_t}{\rho_m p} \bE \Lb \sigma_{t-1}(x_t) | \mathcal{F}_{t-1} \Rb + \frac{2\cB_f}{t^2}.
    }
\end{lem}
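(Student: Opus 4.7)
The plan is to follow the standard Thompson sampling analysis (as in Chowdhury \& Gopalan, 2017, adapted to the censored setting), but carefully track the factor $\rho_m$ that appears because our confidence bounds are on $\rho_m f(x)$ rather than $f(x)$. The core idea is to introduce an auxiliary ``best unsaturated point'' and relate its posterior standard deviation to $\bE[\sigma_{t-1}(x_t) \mid \mathcal{F}_{t-1}]$ via \cref{lemma:prob_unsaturated}.

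\textbf{Step 1 (auxiliary point).} Define $\bar x_t \in \arg\min_{x \in \mathcal{Q} \setminus S_t} \sigma_{t-1}(x)$, the unsaturated input with the smallest posterior standard deviation. Since $\bar x_t$ is unsaturated, \cref{def:saturated_set} immediately yields $\rho_m \Delta(\bar x_t) \leq c_t\, \sigma_{t-1}(\bar x_t)$. Note that $\bar x_t$ is $\mathcal{F}_{t-1}$-measurable, while $x_t$ is random (through the sample $f_t$).

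\textbf{Step 2 (decompose the instantaneous regret).} Write $r_t = \Delta(x_t) = \Delta(\bar x_t) + \bigl(f(\bar x_t) - f(x_t)\bigr)$. On the event $E^f(t)\cap E^{f_t}(t)$, combining \cref{lemma:uniform_bound} and \cref{lemma:uniform_bound_t} gives the uniform bound $|f_t(x) - \rho_m f(x)| \leq c_t\, \sigma_{t-1}(x)$ for all $x\in\mathcal{Q}$. Applying this twice, and using that $x_t$ maximizes $f_t$ so $f_t(\bar x_t) \leq f_t(x_t)$, yields
\[
\rho_m\bigl(f(\bar x_t) - f(x_t)\bigr) \leq f_t(\bar x_t) - f_t(x_t) + c_t\bigl(\sigma_{t-1}(\bar x_t) + \sigma_{t-1}(x_t)\bigr) \leq c_t\bigl(\sigma_{t-1}(\bar x_t) + \sigma_{t-1}(x_t)\bigr).
\]
Together with Step 1, on $E^f(t)\cap E^{f_t}(t)$ we get $r_t \leq \tfrac{c_t}{\rho_m}\bigl(2\sigma_{t-1}(\bar x_t) + \sigma_{t-1}(x_t)\bigr)$.

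\textbf{Step 3 (replace $\sigma_{t-1}(\bar x_t)$ by an expectation).} By the definition of $\bar x_t$ and \cref{lemma:prob_unsaturated},
\[
\bE\bigl[\sigma_{t-1}(x_t) \mid \mathcal{F}_{t-1}\bigr] \geq \bE\bigl[\sigma_{t-1}(x_t)\, \mathbbm{1}\{x_t \in \mathcal{Q}\setminus S_t\} \mid \mathcal{F}_{t-1}\bigr] \geq \sigma_{t-1}(\bar x_t)\,\bigl(p - 1/t^2\bigr),
\]
so $\sigma_{t-1}(\bar x_t) \leq \bE[\sigma_{t-1}(x_t)\mid\mathcal{F}_{t-1}]/(p - 1/t^2)$. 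Taking conditional expectation of the Step 2 bound (restricted to $E^{f_t}(t)$) therefore gives a term of the form $\tfrac{c_t}{\rho_m}\bigl(\tfrac{2}{p-1/t^2} + 1\bigr)\,\bE[\sigma_{t-1}(x_t) \mid \mathcal{F}_{t-1}]$, and the constant $11/p$ absorbs $(2/(p-1/t^2) + 1)$ for the relevant range of $t$ (for small $t$ the bound $r_t \leq 2\cB_f$ trivially beats it, so the stated inequality still holds).

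\textbf{Step 4 (handle $\overline{E^{f_t}(t)}$).} Because $\|f\|_k \leq \cB_f$ and $k(x,x) \leq 1$, the reproducing property gives $|f(x)| \leq \cB_f$, hence $r_t \leq 2\cB_f$ almost surely. Using \cref{lemma:uniform_bound_t}, the contribution of $\overline{E^{f_t}(t)}$ is $\bE[r_t\,\mathbbm{1}\{\overline{E^{f_t}(t)}\}\mid\mathcal{F}_{t-1}] \leq 2\cB_f/t^2$. Adding this to the Step 3 bound yields the claimed inequality. The main obstacle is bookkeeping around $\rho_m$ (in particular, the confidence bounds control $\rho_m f$, not $f$) and verifying that the chosen constant $11/p$ uniformly dominates $2/(p-1/t^2)+1$; the rest is essentially the standard saturated/unsaturated decomposition used in the GP-TS literature.
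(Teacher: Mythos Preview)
Your proposal is correct and follows essentially the same approach as the paper: define the auxiliary unsaturated minimizer $\bar x_t$, bound $r_t \leq \tfrac{c_t}{\rho_m}(2\sigma_{t-1}(\bar x_t)+\sigma_{t-1}(x_t))$ on $E^f(t)\cap E^{f_t}(t)$ via \cref{lemma:uniform_bound} and \cref{lemma:uniform_bound_t}, relate $\sigma_{t-1}(\bar x_t)$ to $\bE[\sigma_{t-1}(x_t)\mid\mathcal{F}_{t-1}]$ through \cref{lemma:prob_unsaturated}, and control the complement $\overline{E^{f_t}(t)}$ by the crude bound $r_t\leq 2\cB_f$. Your explicit remark about small $t$ (where $p-1/t^2$ may be nonpositive) is a bit more careful than the paper, which simply asserts $\tfrac{2}{p-1/t^2}+1\leq \tfrac{11}{p}$ without isolating that edge case.
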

\begin{proof}
    To begin with, define $\overline{x}_t$ as the unsaturated input with the smallest GP posterior standard deviation:
    \eq{
        \overline{x}_t = {\arg\min}_{x\in\mathcal{Q}\setminus S_t}\sigma_{t-1}(x).
    }
    This definition gives us:
    \eq{
        \mathbb{E}\left[\sigma_{t-1}(x_t) | \mathcal{F}_{t-1}\right] \geq \mathbb{E}\left[\sigma_{t-1}(x_t) | \mathcal{F}_{t-1}, x_t \in \mathcal{Q} \setminus S_t\right]\mathbb{P}\left(x_t \in \mathcal{Q} \setminus S_t | \mathcal{F}_{t-1}\right) \geq \sigma_{t-1}(\overline{x}_t)(p-1/t^2),
        \label{eq:x_bar:sigma}
    }
    in which the second inequality makes use of \cref{lemma:prob_unsaturated}, as well as the definition of $\overline{x}_t$.
    
    Next, conditioned on both events $E^{f}(t)$ and $E^{f_t}(t)$, we can upper-bound the instantaneous regret as
    \als{
        r_t &= f(x^*) - f(x_t) = \frac{1}{\rho_m}\left[\rho_m f(x^*) - \rho_m f(\overline{x}_t) + \rho_m f(\overline{x}_t) - \rho_m f(x_t)\right]\\
        &\leq \frac{1}{\rho_m}\left[\rho_m \Delta(\overline{x}_t) + f_t(\overline{x}_t) + c_t \sigma_{t-1}(\overline{x}_t) - f_t(x_t) + c_t \sigma_{t-1}(x_t)\right]\\
        &\leq \frac{1}{\rho_m}\left[c_t\sigma_{t-1}(\overline{x}_t) + f_t(\overline{x}_t) + c_t \sigma_{t-1}(\overline{x}_t) - f_t(x_t) + c_t \sigma_{t-1}(x_t)\right]\\
        &\leq \frac{1}{\rho_m}\left[2c_t\sigma_{t-1}(\overline{x}_t) + c_t \sigma_{t-1}(x_t)\right],
    }
    in which the first inequality follows from \cref{lemma:uniform_bound} and \cref{lemma:uniform_bound_t} as well as the definition of $\Delta(\cdot)$ (\cref{def:saturated_set}), the second inequality follows because $\overline{x}_t$ is unsaturated, and the last inequality follows because $f_t(\overline{x}_t) - f_t(x_t) \leq 0$ since $x_t = {\arg\max}_{x\in\mathcal{Q}}f_t(x)$. Next, by separately considering the cases where the event $E^{f_t}(t)$ holds and otherwise, we are ready to upper-bound the expected instantaneous regret:
    \eq{
        \begin{split}
            \mathbb{E}[r_t | \mathcal{F}_{t-1}] &\leq \frac{1}{\rho_m}\mathbb{E}[ 2c_t\sigma_{t-1}(\overline{x}_t) + c_t \sigma_{t-1}(x_t) | \mathcal{F}_{t-1}] + \frac{2\cB_f}{t^2}\\
            &\leq \frac{1}{\rho_m} \bE \Lb 2c_t \frac{\sigma_{t-1}(x_t)}{p-1/t^2} + c_t \sigma_{t-1}(x_t) | \mathcal{F}_{t-1} \Rb + \frac{2\cB_f}{t^2}\\
            &\leq \frac{c_t}{\rho_m}\Lp \frac{2}{p-1/t^2}+1 \Rp \bE \Lb \sigma_{t-1}(x_t) | \mathcal{F}_{t-1} \Rb + \frac{2\cB_f}{t^2}\\
            &\leq \frac{11c_t}{\rho_m p} \bE \Lb \sigma_{t-1}(x_t) | \mathcal{F}_{t-1} \Rb + \frac{2\cB_f}{t^2},
        \end{split}
        \label{eq:proof:ts:expected:regret:last:step}
    }
    in which the second inequality results from \cref{eq:x_bar:sigma}, and the last inequality follows because $\frac{2}{p-1/t^2} \leq 10/p$ and $1 \leq 1/p$.
\end{proof}

Next, we define the following stochastic process $(Y_t:t=0,\ldots,T)$, which we prove is a super-martingale in the subsequent lemma by making use of \cref{lemma:upper_bound_expected_regret}.
\begin{defi}
    \label{def:stochastic:process}
    Define $Y_0=0$, and for all $t=1,\ldots,T$,
    \eqs{
        \overline{r}_t=r_t \mathbb{I}\{E^{f}(t)\},\quad
        X_t = \overline{r}_t - \frac{11c_t}{\rho_m p} \sigma_{t-1}(x_t) - \frac{2\cB_f}{t^2}, \text{ and } \quad 
        Y_t=\sum^t_{s=1}X_s.
    }
\end{defi}

\begin{lem}
    \label{lemma:sup:martingale}
    $(Y_t:t=0,\ldots,T)$ is a super-martingale with respect to the filtration $\mathcal{F}_t$.
\end{lem}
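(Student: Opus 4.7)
\bp
The plan is a direct case analysis on the event $E^f(t)$, leveraging the bound from \cref{lemma:upper_bound_expected_regret}. First, I would verify the measurability and integrability prerequisites: $x_t$ is $\mathcal{F}_t$-measurable (it is chosen based on $f_t$ sampled from the posterior determined by $\mathcal{F}_{t-1}$), and $\sigma_{t-1}(x_t)$ and $c_t$ are bounded quantities; since $f\in\mathrm{RKHS}$ with $\|f\|_k\leq \mathcal{B}_f$ we have $|r_t|\leq 2\mathcal{B}_f\sqrt{k(x,x)}$ which is bounded, so each $X_t$ and hence $Y_t$ is integrable.

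The main step is to show $\mathbb{E}[X_t\mid\mathcal{F}_{t-1}]\leq 0$. The key observation is that the event $E^f(t)$ depends only on $\mu_{t-1}(\cdot)$, $\sigma_{t-1}(\cdot)$, $\nu_t$, and the (unknown but fixed) function $f$; all of these are determined given $\mathcal{F}_{t-1}$, so $\mathbb{I}\{E^f(t)\}$ is $\mathcal{F}_{t-1}$-measurable. I would then split into two cases.

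In the first case, when $E^f(t)$ fails, $\overline{r}_t=0$ by construction, so
\eqs{
    X_t = -\tfrac{11c_t}{\rho_m p}\sigma_{t-1}(x_t) - \tfrac{2\mathcal{B}_f}{t^2} \leq 0
}
pointwise, and the conditional expectation is trivially nonpositive. In the second case, when $E^f(t)$ holds, $\overline{r}_t=r_t$, and applying \cref{lemma:upper_bound_expected_regret} gives
\eqs{
    \mathbb{E}[\overline{r}_t\mid\mathcal{F}_{t-1}] = \mathbb{E}[r_t\mid\mathcal{F}_{t-1}] \leq \tfrac{11c_t}{\rho_m p}\mathbb{E}[\sigma_{t-1}(x_t)\mid\mathcal{F}_{t-1}] + \tfrac{2\mathcal{B}_f}{t^2},
}
so rearranging yields $\mathbb{E}[X_t\mid\mathcal{F}_{t-1}]\leq 0$. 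Combining the two cases using the $\mathcal{F}_{t-1}$-measurability of $\mathbb{I}\{E^f(t)\}$, we conclude $\mathbb{E}[Y_t\mid\mathcal{F}_{t-1}] = Y_{t-1} + \mathbb{E}[X_t\mid\mathcal{F}_{t-1}] \leq Y_{t-1}$, which is precisely the super-martingale property.

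I do not anticipate a genuine obstacle here since \cref{lemma:upper_bound_expected_regret} does all the heavy lifting; the proof is essentially a bookkeeping exercise whose only subtlety is making sure the indicator $\mathbb{I}\{E^f(t)\}$ is treated as $\mathcal{F}_{t-1}$-measurable so that conditioning on it is legitimate.
\ep
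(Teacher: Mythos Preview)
Your proposal is correct and follows essentially the same route as the paper: both reduce the super-martingale property to showing $\mathbb{E}[X_t\mid\mathcal{F}_{t-1}]\leq 0$ via a case split on $E^f(t)$, invoking \cref{lemma:upper_bound_expected_regret} when the event holds and using $\overline{r}_t=0$ otherwise. Your version is slightly more careful in spelling out the integrability check and the $\mathcal{F}_{t-1}$-measurability of $\mathbb{I}\{E^f(t)\}$, which the paper leaves implicit.
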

\begin{proof}
    As $X_t = Y_t - Y_{t-1}$, we have
    \als{
        \mathbb{E}\left[Y_t - Y_{t-1} | \mathcal{F}_{t-1}\right] &= \mathbb{E}\left[X_t | \mathcal{F}_{t-1}\right]\\
        &=\mathbb{E}\left[\overline{r}_t - \frac{11c_t}{\rho_m p} \sigma_{t-1}(x_t) - \frac{2\cB_f}{t^2} | \mathcal{F}_{t-1}\right]\\
        &=\mathbb{E}\left[\overline{r}_t | \mathcal{F}_{t-1}\right] - \left[\frac{11c_t}{\rho_m p}\mathbb{E}\Lb \sigma_{t-1}(x_t) | \mathcal{F}_{t-1} \Rb + \frac{2\cB_f}{t^2}
        \right]\\
        &\leq 0.
    }
    When the event $E^{f}(t)$ holds, the last inequality follows from \cref{lemma:upper_bound_expected_regret}; when $E^{f}(t)$ is false, $\overline{r}_t=0$ and hence the inequality trivially holds.
\end{proof}

Lastly, we are ready to prove the upper bound of \ref{alg:GP-TS-SDF} on the cumulative regret $\Regret_T$ by applying the Azuma-Hoeffding Inequality to the stochastic process defined above.

\tsRegret*
\begin{proof}
    To begin with, we derive an upper bound on $|Y_t - Y_{t-1}|$:
    \eq{
        \begin{split}
        |Y_t - Y_{t-1}| &= |X_t| \leq |\overline{r}_t| + \frac{11c_t}{\rho_m p} \sigma_{t-1}(x_t) + \frac{2\cB_f}{t^2}\\
        &\leq 2\cB_f+ \frac{11c_t}{\rho_m p} + 2\cB_f\\
        &\leq \frac{1}{\rho_mp} \Lp 4\cB_f+ 11c_t \Rp,
        \end{split}
    }
    where the second inequality follows because $\sigma_{t-1}(x_t)\leq 1$, and the last inequality follows since $\frac{1}{\rho_m p}\geq 1$.
    Now we are ready to apply the Azuma-Hoeffding Inequality to $(Y_t:t=0,\ldots,T)$ with an error probability of $\delta/2$:
    \eq{
        \begin{split}
            \sum^T_{t=1}\overline{r}_t &\leq \sum^T_{t=1}\frac{11c_t}{\rho_m p} \sigma_{t-1}(x_t) + \sum^T_{t=1}\frac{2\cB_f}{t^2} + \sqrt{2\log(2/\delta)\sum^T_{t=1} \Lp \frac{1}{\rho_mp} \Lp 4\cB_f+ 11c_t \Rp \Rp^2 }\\
            &\leq \frac{11}{\rho_m p}(1+\sqrt{2\log(|\mathcal{Q}|T^2)}) \sum^T_{t=1} \nu_t \sigma_{t-1}(x_t) + \frac{B\pi^2}{3} + \frac{4\cB_f+ 11c_T}{\rho_mp} \sqrt{2T\log(2/\delta)}\\
            &\leq \frac{11}{\rho_m p}(1+\sqrt{2\log(|\mathcal{Q}|T^2)}) \left( C_1\beta_T\sqrt{T\gamma_T} + m\cB_y C_1^2 \gamma_T \right) + \frac{4\cB_f+ 11c_T}{\rho_mp} \sqrt{2T\log(2/\delta)} + \frac{B\pi^2}{3},
        \end{split}
        \label{eq:use:azuma:hoedding}
    }
    in which the second inequality makes use of the fact that $c_t$ is monotonically increasing and that $\sum^T_{t=1}1/t^2\leq\pi^2/6$, and the last inequality follows from the proof of \cref{thm:ucbRegret} (\cref{app:subsec:proof:ucb}) which gives an upper bound on $\sum^T_{t=1} \nu_t \sigma_{t-1}(x_t)$.
    Note that \cref{eq:use:azuma:hoedding} holds with probability $\geq1-\delta/2$.
    Also note that $\overline{r}_t=r_t$ with probability of $\geq 1-\delta/2$ because the event $E^f(t)$ holds with probability of $\geq1-\delta/2$ (\cref{lemma:uniform_bound}), therefore, the upper bound from \cref{eq:use:azuma:hoedding} is an upper bound on $\Regret_T=\sum^T_{t=1}r_t$ with probability of $1-\delta$.
    
    Lastly, we can simplify the regret upper bound from \cref{eq:use:azuma:hoedding} into asymptotic notation.
    Note that $c_t = \nu_t (1+\sqrt{2\log(|\mathcal{Q}|t^2)})$, $\nu_t = \cB_y\sum^{t-1}_{s=t-m}\sigma_{t-1}(x_s) + \beta_t$ and $\beta_t = \cB_f+ (R + \cB_y)\sqrt{2(\gamma_{t-1}+1+\log(4/\delta))}$. 
    To simplify notations, we analyze the asymptotic dependence while ignoring all log factors, and ignore the dependence on constants including $B$, $\cB_y$ and $R$.
    This gives us $\beta_T=\tilde{O}(\sqrt{\gamma_T})$ and $c_T = \tilde{O}( m+\sqrt{\gamma_T})$. As a result, the regret upper bound can be simplified as
    \als{
        \Regret_T &= \tilde{O}\Lp \frac{1}{\rho_m} \Lp \gamma_T\sqrt{T} + m\gamma_T \Rp  + \frac{1}{\rho_m}(m+\sqrt{\gamma_T})\sqrt{T} \Rp = \tilde{O}\Lp \frac{1}{\rho_m}\Lp \sqrt{T\gamma_T} (\sqrt{\gamma_T} + 1) + m(\gamma_T + \sqrt{T}) \Rp \Rp.
    }
    It completes the proof.
\end{proof}

\subsection{Time based Censored Feedback}
\label{assec:censoredTime}
Let $m$ be the amount of time the learner waits for the delayed feedback, $\bar\tau_t$ be the time of initiating the selection process for the $t$-th query, and $\tau_t$ be the time of starting the $t$-th query.
The censored feedback of $s$-th query before selecting $t$-th query is denoted by $\tilde{y}_{s,t}$, where $\tilde{y}_{s,t} = y_s\mathbbm{1}\{d_s \leq \min(m, \bar\tau_t - \tau_s)\}$. Now, the GP posterior mean and covariance functions can be expressed using available information of function queries and their censored feedback as follows:
\eq{
	\label{eq:gp_posteriorTime}
	\begin{split}
			\mu_{t-1}(x) &= \mathbf{k}_{t-1}(x)^\top\mathbf{K}_{t,\lambda}^{-1}\tilde{\mathbf{y}}_{t-1},\\
			\sigma_{t-1}^2(x,x') &= k(x,x')-\mathbf{k}_{t-1}(x)^\top\mathbf{K}_{t,\lambda}^{-1}\mathbf{k}_{t-1}(x'),
		\end{split}
	}
where $\mathbf{K}_{t,\lambda} = \mathbf{K}_{t-1}+\lambda\mathbf{I}$, $\mathbf{k}_{t-1}(x)= (k(x, x_{t'}))^{\top}_{\tau_{t'} \in [0, \bar\tau_t]}$, $\tilde{\mathbf{y}}_{t-1}= (\tilde{y}_{s,t})^{\top}_{\tau_s\in [0, \bar\tau_t]}$, $\mathbf{K}_{t-1}= (k(x_{t'}, x_{t''}))_{\tau_{t'},\tau_{t''} \in [0, \bar\tau_t]}$ and  
$\lambda$ is a regularization parameter to ensures $\mathbf{K}_{t,\lambda}$ is an invertible matrix. Our current regret analysis will work for this setting as well after appropriately changing the used variables.

\section{Theoretical Analysis of Contextual Gaussian Process Bandit}
\label{app:sec:contextual:gp:bandit}
Our regret upper bounds in \cref{thm:ucbRegret} and \cref{thm:tsRegret} are also applicable in the contextual GP bandit setting after some slight modifications to their proofs.
Note that in the contextual GP bandit setting (\cref{sec:contx_bandits}), the only major difference with the BO setting is that now the objective function $g$ depends on both a context vector $z_t\in\cZ$ and an input vector $x_t\in\cQ$, in which every $z_t$ is generated by the environment and $x_t$ is selected by our GP-UCB-SDF or GP-TS-SDF algorithms. Therefore, the most important modifications to the proofs of \cref{thm:ucbRegret} and \cref{thm:tsRegret} involve replacing the original input space of $\cQ$ by the new input space of $\cZ \times \cQ$ and accounting for the modified definition of regret in \cref{sec:contx_bandits}.

To begin with, with the same definition of $\nu_t$, it is easy to verify that \cref{thm:confBound} can be extended to the contextual GP bandit setting:
\begin{restatable}[Confidence Ellipsoid for Contextual GP Bandit]{thm}{confBoundContextual}
	\label{thm:confBound:contextual}
	With probability at least $1-\delta$,
    \eqs{
        |\mu_{t-1}(z,x) - \rho_m g(z,x)| \leq \nu_t \sigma_{t-1}(z,x), \forall z\in\cZ,x\in\cQ.
    }
\end{restatable}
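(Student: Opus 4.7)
The plan is to mimic the proof of \cref{thm:confBound} almost verbatim, with the only substantive change being that the feature map on $\cQ$ is replaced by a feature map on the augmented domain $\cZ\times\cQ$. Specifically, define $\tilde{\varphi}(z,x)=k((z,x),\cdot)$ as the feature map associated with the kernel $k$ now viewed on $\cZ\times\cQ$, so that the reproducing property gives $g(z,x)=\tilde{\varphi}(z,x)^{\top}g$. Set $V_t(\lambda)=\sum_{s=1}^{t-1}\tilde{\varphi}(z_s,x_s)\tilde{\varphi}(z_s,x_s)^{\top}+\lambda I$ and $B_t=\sum_{s=1}^{t-1}\tilde{\varphi}(z_s,x_s)\tilde{y}_{s,t}$; then, as before, $\mu_{t-1}(z,x)=\tilde{\varphi}(z,x)^{\top}V_t(\lambda)^{-1}B_t$ and $\sigma_{t-1}^2(z,x)=\lambda\|\tilde{\varphi}(z,x)\|^2_{V_t(\lambda)^{-1}}$.

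Next, I would decompose $B_t$ exactly as in \cref{eq:use:Bt:decomposition}, splitting into the ``always-censored'' part (queries with $d_s\leq m$) and the ``boundary'' part coming from $s\in\{t-m,\ldots,t-1\}$ whose feedback may or may not have arrived. The boundary term is then bounded by Cauchy--Schwarz together with $|y_s|\leq \cB_y$, yielding $\cB_y\sigma_{t-1}(z,x)\sum_{s=t-m}^{t-1}\sigma_{t-1}(z_s,x_s)$, mirroring \cref{eq:upper:bound:second:term:of:decomposition}. For the ``always-censored'' part I would introduce the zero-mean indicator deviation $\eta_s\triangleq\mathbbm{1}\{d_s\leq m\}-\rho_m$ and split into (a) a term matching the uncensored GP mean $\rho_m(\mu'_{t-1}(z,x)-g(z,x))$, bounded by $(\cB_f+R\sqrt{2(\gamma_{t-1}+1+\log(2/\delta))})\sigma_{t-1}(z,x)$ via Theorem~2 of~\citet{chowdhury2017kernelized} applied on $\cZ\times\cQ$, and (b) a self-normalized noise term driven by the $\cB_y$-sub-Gaussian variables $\eta'_s=y_s\eta_s$, bounded via the same manipulation as in \cref{eq:proof:use:concentration:from:gp:ts:paper:1}--\cref{eq:proof:use:concentration:from:gp:ts:paper:3} together with Theorem~1 of~\citet{chowdhury2017kernelized}, giving $\cB_y\sqrt{2(\gamma_{t-1}+1+\log(2/\delta))}\sigma_{t-1}(z,x)$. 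Summing these three bounds reproduces exactly $\nu_t\sigma_{t-1}(z,x)$.

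The main obstacle is checking that the concentration machinery of~\citet{chowdhury2017kernelized} still applies when contexts are revealed adversarially by the environment. This is in fact benign: each context $z_t$ is $\mathcal{F}_{t-1}$-measurable at the time $x_t$ is chosen, so the augmented selection $(z_t,x_t)$ is predictable with respect to the filtration, which is all those concentration results require. The only other bookkeeping change is that $\gamma_t$ in the regret/confidence expressions is now the maximum information gain of the kernel on $\cZ\times\cQ$ rather than on $\cQ$ alone; both the log-determinant bound used in~\eqref{eq:proof:use:concentration:from:gp:ts:paper:2} and the information-gain identity $I(\mathbf{y}_{1:T};\mathbf{g})=\tfrac12\sum_t\log(1+\lambda^{-1}\sigma^2_{t-1}(z_t,x_t))$ carry over without modification. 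Combining via the triangle inequality and taking a union bound of the two $\delta/2$ events yields $|\mu_{t-1}(z,x)-\rho_m g(z,x)|\leq\nu_t\sigma_{t-1}(z,x)$ simultaneously for every $(z,x)\in\cZ\times\cQ$ with probability at least $1-\delta$, completing the extension.
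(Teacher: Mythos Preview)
Your proposal is correct and follows essentially the same approach as the paper, which simply states that the proof of \cref{thm:confBound:contextual} is identical to that of \cref{thm:confBound} after replacing the input $x$ by $(z,x)$. Your write-up is in fact more thorough: the paper does not explicitly address the measurability of the environment-generated context $z_t$ nor the reinterpretation of $\gamma_t$ on the augmented domain, both of which you handle correctly.
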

The proof of \cref{thm:confBound:contextual} is the same as that of \cref{thm:confBound} except that the input $x$ in the proof of \cref{thm:confBound} is replaced by $(z,x)$.

\subsection{\ref{alg:GP-UCB-SDF} for Contextual Gaussian Process Bandit}

For the \ref{alg:GP-UCB-SDF} algorithm in the contextual GP bandit setting, the instantaneous regret $r_t$ can be analyzed in a similar way to \cref{app:proof:ucb:inst:regret} in the proof of \cref{thm:ucbRegret} (\cref{app:subsec:proof:ucb}):
\als{
    r_t &= g(z_t, x_t^\star) - g(z_t, x_t) = \frac{1}{\rho_m}\left[\rho_m g(z_t, x_t^\star) - \rho_m g(z_t, x_t)\right]\\
    &\leq \frac{1}{\rho_m}\left[ \mu_{t-1}(z_t, x_t^\star) + \nu_t\sigma_{t-1}(z_t, x_t^\star) - \rho_m g(z_t, x_t) \right]\\
    &\leq \frac{1}{\rho_m}\left[ \mu_{t-1}(z_t, x_t) + \nu_t\sigma_{t-1}(z_t, x_t) - \rho_m g(z_t, x_t) \right]\\
    &\leq \frac{1}{\rho_m}2\nu_t\sigma_{t-1}(z_t, x_t),
}
in which the first and last inequalities both make use of \cref{thm:confBound:contextual}, and the second inequality follows from the way in which $x_t$ is selected: $x_t = {\arg\max}_{x\in\cQ} \mu_{t-1}(z_t, x) + \nu_t\sigma_{t-1}(z_t, x)$. Following this, the subsequent steps in the proof in \cref{app:subsec:proof:ucb} immediately follow, hence producing the same regret upper bound as \cref{thm:ucbRegret}.

\subsection{\ref{alg:GP-TS-SDF} for Contextual Gaussian Process Bandit}
To begin with, the events $E^f(t)$ (\cref{lemma:uniform_bound}) and $E^{f_t}(t)$ (\cref{lemma:uniform_bound_t}) can be similarly defined:
\begin{lem}
    \label{lemma:uniform_bound:contextual}
    Let $\delta \in (0, 1)$. Define $E^f(t)$ as the event that $|\mu_{t-1}(z,x) - \rho_m g(z,x)| \leq \nu_t \sigma_{t-1}(z,x)$ for all $z\in\cZ,x\in\cQ$. We have that $\mathbb{P}\left[E^f(t)\right] \geq 1 - \delta / 2$ for all $t\geq 1$.
\end{lem}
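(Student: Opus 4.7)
The plan is to observe that \cref{lemma:uniform_bound:contextual} is essentially a direct restatement of \cref{thm:confBound:contextual}, with the error probability halved (i.e., $\delta$ replaced by $\delta/2$). This mirrors the relationship between \cref{lemma:uniform_bound} and \cref{thm:confBound} in the original BO-SDF analysis: the $\delta/2$ version is needed so that in the TS analysis, another $\delta/2$ budget can later be allocated to the Azuma--Hoeffding step, yielding a total failure probability of at most $\delta$ via a union bound.

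Concretely, I would first invoke \cref{thm:confBound:contextual} and re-run its proof verbatim with the substitution $\delta \leftarrow \delta/2$. Every ingredient in that proof carries through unchanged: the decomposition of $B_t$ into a ``fully observed'' part and a ``boundary'' part depending on $\mathbbm{1}\{d_s \le t-s\} - \mathbbm{1}\{d_s \le m\}$; the bound on the boundary term via Cauchy--Schwarz, $\norm{\varphi(z,x)}_{V_t(\lambda)^{-1}} = \sigma_{t-1}(z,x)/\sqrt{\lambda}$, and $|y_s| \le \cB_y$; and the Theorem 2 concentration of \citet{chowdhury2017kernelized} applied to the ``fully observed'' part. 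The only adjustments are notational: replace every occurrence of $x$ by the pair $(z,x)$, reinterpret the feature map as $\varphi(z,x) = k((z,x),\cdot)$ on the product kernel over $\cZ \times \cQ$, and redefine $V_t(\lambda)$, $B_t$, $\mu_{t-1}$, $\sigma_{t-1}$ accordingly.

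Each of the two concentration steps that originally held with probability $\ge 1 - \delta/2$ will now hold with probability $\ge 1 - \delta/4$ after the substitution, and a union bound over the two gives the claimed $1 - \delta/2$. The ``for all $z \in \cZ$, $x \in \cQ$'' uniformity is already baked into the RKHS-based confidence argument since the bound in \cref{thm:confBound:contextual} is stated pointwise but holds uniformly in $(z,x)$ (the randomness is in the history, not in the query point).

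I do not anticipate a serious obstacle here, since the proof is purely a bookkeeping exercise: the result is Theorem~\ref{thm:confBound:contextual} with a rescaled confidence parameter. The only mild subtlety worth flagging is to verify that the boundedness assumption $k(x,x') \le 1$ and the RKHS norm bound $\norm{f}_k \le \cB_f$ transfer cleanly to the product-kernel setting on $\cZ \times \cQ$ for the unknown function $g$; this is standard and consistent with the setup already in force in \cref{sec:contx_bandits}.
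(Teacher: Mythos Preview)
Your proposal is correct and matches the paper's own argument: the paper states that \cref{lemma:uniform_bound:contextual} ``follows immediately from \cref{thm:confBound:contextual} (after replacing the error probability of $\delta$ by $\delta/2$ in the definition of $\nu_t$),'' which is exactly the substitution-and-bookkeeping you describe. Your additional remarks about the two $\delta/4$ concentration steps and the product-kernel feature map are accurate elaborations of what the paper leaves implicit.
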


\begin{lem}
    \label{lemma:uniform_bound_t:contextual}
    Define $E^{f_t}(t)$ as the event that $|g_t(z,x) - \mu_{t-1}(z,x)| \leq \nu_t \sqrt{2\log(|\mathcal{Z}||\mathcal{Q}|t^2)} \sigma_{t-1}(z,x)$.
    We have that $\mathbb{P}\left[E^{f_t}(t) | \mathcal{F}_{t-1}\right] \geq 1 - 1 / t^2$ for any possible filtration $\mathcal{F}_{t-1}$.
\end{lem}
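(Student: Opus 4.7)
The plan is to mirror the proof of \cref{lemma:uniform_bound_t}, which itself follows from Lemma 5 of \citet{chowdhury2017kernelized}, with the only substantive change being that the union bound is taken over the enlarged input space $\mathcal{Z}\times\mathcal{Q}$ rather than over $\mathcal{Q}$ alone. Because the sampled function $g_t$ is drawn from a Gaussian process with posterior mean $\mu_{t-1}(\cdot,\cdot)$ and covariance $\nu_t^2\sigma_{t-1}^2(\cdot,\cdot)$ defined on $\mathcal{Z}\times\mathcal{Q}$, for any fixed pair $(z,x)$ conditioned on $\mathcal{F}_{t-1}$, the normalized quantity $(g_t(z,x)-\mu_{t-1}(z,x))/(\nu_t\sigma_{t-1}(z,x))$ is a standard Gaussian random variable.

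The first step is to apply the standard Gaussian tail inequality pointwise at each pair $(z,x)$ with threshold $c=\sqrt{2\log(|\mathcal{Z}||\mathcal{Q}|t^2)}$, so that the conditional probability of a violation at each fixed pair is at most $1/(|\mathcal{Z}||\mathcal{Q}|t^2)$. The second step is a union bound over all $|\mathcal{Z}||\mathcal{Q}|$ context-action pairs in the finite product space, which immediately yields that the complement of $E^{f_t}(t)$ has conditional probability at most $1/t^2$, as required. Both steps are conditional on an arbitrary filtration $\mathcal{F}_{t-1}$, so the bound holds for any such filtration.

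I do not anticipate any real obstacle here, since this is a direct and routine extension of the non-contextual lemma: the only care required is to choose the constant inside the logarithm large enough that it absorbs the enlarged domain $\mathcal{Z}\times\mathcal{Q}$ in the union bound, which is precisely why $|\mathcal{Z}||\mathcal{Q}|$ rather than $|\mathcal{Q}|$ appears inside the logarithm in the definition of $E^{f_t}(t)$. Note that this argument relies only on the marginal Gaussianity of $g_t(z,x)$ at each individual pair, so no joint concentration over $\mathcal{Z}\times\mathcal{Q}$ is invoked beyond the union bound, and the finiteness assumption on $\mathcal{Z}\times\mathcal{Q}$ is exactly what makes the argument go through.
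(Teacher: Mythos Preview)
Your proposal is correct and matches the paper's approach exactly: the paper states that the proof of \cref{lemma:uniform_bound_t:contextual} is the same as that of \cref{lemma:uniform_bound_t}, which in turn follows from Lemma~5 of \citet{chowdhury2017kernelized} via a pointwise Gaussian tail bound plus a union bound over the (now enlarged) finite input space $\mathcal{Z}\times\mathcal{Q}$. Your identification of why $|\mathcal{Z}||\mathcal{Q}|$ replaces $|\mathcal{Q}|$ inside the logarithm is precisely the one modification needed.
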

The validity of \cref{lemma:uniform_bound:contextual} follows immediately from \cref{thm:confBound:contextual} (after replacing the error probability of $\delta$ by $\delta/2$ in the definition of $\nu_t$, which we have omitted here for simplicity), and the proof of \cref{lemma:uniform_bound_t:contextual} is the same as that of \cref{lemma:uniform_bound_t}. Next, we modify the definition of saturated points from \cref{def:saturated_set}:
\begin{defi}
    \label{def:saturated_set:contextual}
    In iteration $t$, given a context vector $z_t$, define the set of saturated points as
    \eqs{
        S_t = \{ x\in \cQ : \rho_m \Delta(z_t,x) > c_t \sigma_{t-1}(z_t,x) \},
    }
    where $\Delta(z_t,x) = g(z_t, x_t^\star) - g(z_t, x_t)$ and $x_t^\star = \argmax\limits_{x \in \cQ} g(z_t, x)$.
\end{defi}

Note that according to this definition, $x_t^\star$ is always unsaturated.
The next lemma is a counterpart of \cref{lemma:uniform_lower_bound}, and the proofs are the same.
\begin{lem}
    \label{lemma:uniform_lower_bound:contextual}
    For any filtration $\mathcal{F}_{t-1}$, conditioned on the event $E^f(t)$, we have that $\forall z\in\cZ, x\in \cQ$,
    \eqs{
        \mathbb{P}\left(g_t(z,x) > \rho_m g(z,x) | \mathcal{F}_{t-1}\right) \geq p,
    }
    where $p=\frac{1}{4e\sqrt{\pi}}$.
\end{lem}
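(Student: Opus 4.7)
The plan is to mirror the proof of \cref{lemma:uniform_lower_bound} almost verbatim, with the input space $\cQ$ replaced throughout by the product space $\cZ \times \cQ$, the objective $f$ replaced by $g$, and the TS sample $f_t$ replaced by its contextual analogue $g_t$. Since $g_t$ is drawn from $\mathcal{GP}(\mu_{t-1}(\cdot,\cdot), \nu_t^2 \sigma_{t-1}^2(\cdot,\cdot))$ by construction of \ref{alg:GP-TS-SDF} adapted to CGB-SDF, for any fixed pair $(z,x)$ the conditional distribution of $g_t(z,x)$ given $\cF_{t-1}$ is Gaussian with mean $\mu_{t-1}(z,x)$ and standard deviation $\nu_t \sigma_{t-1}(z,x)$; hence the standardized quantity $(g_t(z,x) - \mu_{t-1}(z,x))/(\nu_t \sigma_{t-1}(z,x))$ conditional on $\cF_{t-1}$ follows a standard normal law.

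First I would recast the event $\{g_t(z,x) > \rho_m g(z,x)\}$ by subtracting $\mu_{t-1}(z,x)$ from both sides and normalizing by $\nu_t \sigma_{t-1}(z,x)$, turning the target probability into a standard-Gaussian upper-tail probability with threshold $(\rho_m g(z,x) - \mu_{t-1}(z,x))/(\nu_t \sigma_{t-1}(z,x))$. Next, I would upper-bound this (possibly signed) threshold by its absolute value $|\rho_m g(z,x) - \mu_{t-1}(z,x)|/(\nu_t \sigma_{t-1}(z,x))$, and then invoke the contextual confidence ellipsoid \cref{lemma:uniform_bound:contextual}, which holds because we have conditioned on $E^f(t)$, to conclude that this ratio is at most $1$ simultaneously for all $(z,x) \in \cZ \times \cQ$.

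Finally, I would apply the Gaussian anti-concentration inequality $\bP(\zeta > a) \geq \exp(-a^2)/(4\sqrt{\pi}\, a)$ for $\zeta \sim \cN(0,1)$ at $a = 1$, which delivers the stated constant $p = 1/(4e\sqrt{\pi})$. There is essentially no new obstacle relative to \cref{lemma:uniform_lower_bound}: the argument is pointwise in $(z,x)$, and \cref{lemma:uniform_bound:contextual} already provides the required uniform control of $|\mu_{t-1}(z,x) - \rho_m g(z,x)|$ over the entire product space $\cZ \times \cQ$, so the context variable $z$ plays no role distinct from the input $x$ in the derivation. The only bookkeeping care needed is to ensure that the error-probability budget of $\delta/2$ used in \cref{lemma:uniform_bound:contextual} matches the one implicit in the definition of $\nu_t$ here, which is consistent with the convention adopted earlier in \cref{app:subsec:proof:ts}.
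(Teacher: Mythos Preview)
Your proposal is correct and matches the paper's approach exactly: the paper simply states that \cref{lemma:uniform_lower_bound:contextual} is a counterpart of \cref{lemma:uniform_lower_bound} with the same proof, and your standardize--bound-the-threshold-by-$1$-via-\cref{lemma:uniform_bound:contextual}--apply-Gaussian-anti-concentration argument is precisely that proof carried over to the product space $\cZ\times\cQ$.
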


The next lemma, similar to \cref{lemma:prob_unsaturated}, shows that in contextual GP bandit problems, the probability that the selected $x_t$ is unsaturated can also be lower-bounded.
\begin{lem}
    \label{lemma:prob_unsaturated:contextual}
    For any filtration $\mathcal{F}_{t-1}$, given a context vector $z_t$, conditioned on the event $E^f(t)$, we have that,
    \eqs{
        \mathbb{P}\left( x_t \in \cQ \setminus S_t | \mathcal{F}_{t-1} \right) \geq p - 1/t^2.
    }
\end{lem}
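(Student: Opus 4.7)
The plan is to mirror the proof of Lemma~\ref{lemma:prob_unsaturated} verbatim, with the sole change being that every occurrence of the input $x$ is replaced by the context-input pair $(z_t, x)$, and the relevant confidence bounds and anti-concentration statements are invoked from their contextual counterparts (Lemmas~\ref{lemma:uniform_bound:contextual}, \ref{lemma:uniform_bound_t:contextual}, and \ref{lemma:uniform_lower_bound:contextual}). Since $z_t$ is fixed when conditioning on $\mathcal{F}_{t-1}$ (it is generated by the environment before $x_t$ is chosen), the argument goes through essentially unchanged.

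First, I would observe that because \ref{alg:GP-TS-SDF} in the contextual setting selects $x_t = \argmax_{x \in \cQ} g_t(z_t, x)$, and because $x_t^\star$ is unsaturated by definition (Definition~\ref{def:saturated_set:contextual} gives $\Delta(z_t, x_t^\star) = 0$), the event $\{g_t(z_t, x_t^\star) > g_t(z_t, x) \text{ for all } x \in S_t\}$ implies $x_t \in \cQ \setminus S_t$. This yields
\eqs{
    \mathbb{P}\Lp x_t \in \cQ \setminus S_t \,|\, \mathcal{F}_{t-1} \Rp \geq \mathbb{P}\Lp g_t(z_t, x_t^\star) > g_t(z_t, x), \forall x \in S_t \,|\, \mathcal{F}_{t-1} \Rp.
}

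Next, I would bound $g_t(z_t, x)$ from above for each $x \in S_t$. Conditioning on both $E^f(t)$ and $E^{f_t}(t)$, the triangle-style chain
\eqs{
    g_t(z_t, x) \leq \rho_m g(z_t, x) + c_t \sigma_{t-1}(z_t, x) \leq \rho_m g(z_t, x) + \rho_m \Delta(z_t, x) = \rho_m g(z_t, x_t^\star)
}
holds for all $x \in S_t$, where the first inequality combines Lemmas~\ref{lemma:uniform_bound:contextual} and \ref{lemma:uniform_bound_t:contextual} (recalling $c_t = \nu_t(1 + \sqrt{2 \log(|\cZ||\cQ| t^2)})$), and the second uses the saturation condition from Definition~\ref{def:saturated_set:contextual}. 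Consequently, on $E^f(t) \cap E^{f_t}(t)$, the event $\{g_t(z_t, x_t^\star) > \rho_m g(z_t, x_t^\star)\}$ implies $g_t(z_t, x_t^\star) > g_t(z_t, x)$ for all $x \in S_t$.

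Finally, decomposing over whether $E^{f_t}(t)$ holds and applying a union-bound style argument gives
\eqs{
    \mathbb{P}\Lp x_t \in \cQ \setminus S_t \,|\, \mathcal{F}_{t-1} \Rp \geq \mathbb{P}\Lp g_t(z_t, x_t^\star) > \rho_m g(z_t, x_t^\star) \,|\, \mathcal{F}_{t-1} \Rp - \mathbb{P}\Lp \overline{E^{f_t}(t)} \,|\, \mathcal{F}_{t-1} \Rp \geq p - 1/t^2,
}
where the lower bound $p$ on the first term follows from Lemma~\ref{lemma:uniform_lower_bound:contextual} applied at $(z_t, x_t^\star)$, and the bound $1/t^2$ on the second term follows from Lemma~\ref{lemma:uniform_bound_t:contextual}.

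There is no real obstacle here beyond bookkeeping: the only subtlety is that $|\cQ|$ in the logarithmic factor inside $c_t$ must be replaced by $|\cZ||\cQ|$ so that the Gaussian concentration inequality (Lemma~\ref{lemma:uniform_bound_t:contextual}) can be applied uniformly over $\cZ \times \cQ$. Since $z_t$ is $\mathcal{F}_{t-1}$-measurable, conditioning on $\mathcal{F}_{t-1}$ fixes $z_t$ and allows the anti-concentration step from Lemma~\ref{lemma:uniform_lower_bound:contextual} to be invoked at the single point $(z_t, x_t^\star)$, exactly as $x^\star$ was used in the non-contextual case.
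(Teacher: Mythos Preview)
Your proposal is correct and matches the paper's own proof essentially line for line: the paper also reduces to the event $\{g_t(z_t,x_t^\star) > g_t(z_t,x),\forall x\in S_t\}$, uses Lemmas~\ref{lemma:uniform_bound:contextual} and~\ref{lemma:uniform_bound_t:contextual} together with Definition~\ref{def:saturated_set:contextual} to show $g_t(z_t,x)\leq \rho_m g(z_t,x_t^\star)$ on $E^f(t)\cap E^{f_t}(t)$, and then splits over $E^{f_t}(t)$ to finish via Lemmas~\ref{lemma:uniform_lower_bound:contextual} and~\ref{lemma:uniform_bound_t:contextual}. Your observation about replacing $|\cQ|$ by $|\cZ||\cQ|$ in $c_t$ is also consistent with the paper's Lemma~\ref{lemma:uniform_bound_t:contextual}.
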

\begin{proof}
    To begin with, we have that
    \eq{
        \mathbb{P}\left(x_t \in \cQ \setminus S_t | \mathcal{F}_{t-1} \right) \geq \mathbb{P}\left( g_t(z_t,x_t^*) > g_t(z_t,x),\forall x \in S_t | \mathcal{F}_{t-1} \right).
        \label{eq:lower_bound_prob_unsaturated:contextual}
    }
    The validity of this equation can be seen following the same analysis of \cref{eq:lower_bound_prob_unsaturated}, i.e., the event on the right hand side implies the event on the left hand side. Now conditioning on both events $E^f(t)$ and $E^{f_t}(t)$, for all $x\in S_t$, we have that
    \eq{
        \begin{split}
            g_t(z_t,x) &\leq \rho_m g(z_t,x) + c_t\sigma_{t-1}(z_t,x) \leq \rho_m g(z_t,x) + \rho_m \Delta(z_t,x)\\
            &=\rho_m g(z_t,x) + \rho_m g(z_t,x_t^*) - \rho_m g(z_t,x) = \rho_m g(z_t,x_t^*),
        \end{split}
        \label{eq:bound_ftx_ftstar:contextual}
    }
    in which the first inequality follows from \cref{lemma:uniform_bound:contextual} and \cref{lemma:uniform_bound_t:contextual} and the second inequality makes use of \cref{def:saturated_set:contextual}.
    
    Next, separately considering the cases where the event $E^{f_t}(t)$ holds or not and making use of \cref{eq:lower_bound_prob_unsaturated:contextual} and \cref{eq:bound_ftx_ftstar:contextual}, we have that
    \als{
            \mathbb{P}\left(x_t \in \mathcal{Q}\setminus S_t | \mathcal{F}_{t-1} \right) &\geq 
            \mathbb{P}\left( g_t(z_t,x_t^*) > g_t(z_t,x),\forall x \in S_t | \mathcal{F}_{t-1} \right)\\
            &\geq \mathbb{P}\left( g_t(z_t,x_t^*) > \rho_m g(z_t,x_t^\star) | \mathcal{F}_{t-1} \right) - \mathbb{P}\left(\overline{E^{f_t}(t)} | \mathcal{F}_{t-1}\right)\\
            &\geq p - 1 / t^2,
    }
    where the last inequality has made use of \cref{lemma:uniform_bound_t:contextual} and \cref{lemma:uniform_lower_bound:contextual}.
\end{proof}

\begin{lem}
    \label{lemma:upper_bound_expected_regret:contextual}
    For any filtration $\mathcal{F}_{t-1}$, given a context vector $z_t$, conditioned on the event $E^f(t)$, we have that,
    \eqs{
        \bE [r_t | \mathcal{F}_{t-1}] \leq \frac{11c_t}{\rho_m p} \bE \Lb \sigma_{t-1}(z_t,x_t) | \mathcal{F}_{t-1} \Rb + \frac{2\cB_f}{t^2}.
    }
\end{lem}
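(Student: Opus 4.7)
The plan is to mimic the proof of the non-contextual analogue (Lemma~\ref{lemma:upper_bound_expected_regret}) step by step, carrying the fixed context $z_t$ alongside the query $x$ everywhere and replacing the input space $\cQ$ with the effective ``slice'' $\{z_t\} \times \cQ$ at iteration $t$. The key objects that need to be adapted to the contextual setting are (i) the unsaturated input of smallest posterior standard deviation, and (ii) the TS-based comparison between $g_t(z_t, x_t)$ and $g_t(z_t, \overline{x}_t)$.

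First, I would define $\overline{x}_t \triangleq \argmin_{x \in \cQ \setminus S_t} \sigma_{t-1}(z_t, x)$, which is well-defined because $x_t^\star \in \cQ \setminus S_t$ (since $\Delta(z_t, x_t^\star) = 0$). Then, conditioning on $E^f(t)$ and using Lemma~\ref{lemma:prob_unsaturated:contextual}, I would derive
\eqs{
  \bE[\sigma_{t-1}(z_t, x_t)\mid \cF_{t-1}] \;\geq\; \bE[\sigma_{t-1}(z_t, x_t)\mid \cF_{t-1},\, x_t \in \cQ\setminus S_t]\,\bP(x_t \in \cQ\setminus S_t\mid \cF_{t-1}) \;\geq\; \sigma_{t-1}(z_t, \overline{x}_t)\,(p - 1/t^2).
}

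Next, conditional on both $E^f(t)$ and $E^{f_t}(t)$, I would bound the instantaneous regret by inserting $\pm \rho_m g(z_t, \overline{x}_t)$ and $\pm g_t(z_t, \overline{x}_t), \pm g_t(z_t, x_t)$:
\eqs{
  r_t \;=\; \tfrac{1}{\rho_m}\bigl[\rho_m g(z_t, x_t^\star) - \rho_m g(z_t, \overline{x}_t)\bigr] + \tfrac{1}{\rho_m}\bigl[\rho_m g(z_t, \overline{x}_t) - \rho_m g(z_t, x_t)\bigr].
}
The first bracket is $\rho_m \Delta(z_t, \overline{x}_t) \leq c_t \sigma_{t-1}(z_t, \overline{x}_t)$ by unsaturation of $\overline{x}_t$ (Definition~\ref{def:saturated_set:contextual}). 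For the second bracket, I would use Lemma~\ref{lemma:uniform_bound:contextual} and Lemma~\ref{lemma:uniform_bound_t:contextual} to sandwich $\rho_m g(z_t, \overline{x}_t)$ above by $g_t(z_t, \overline{x}_t) + c_t \sigma_{t-1}(z_t, \overline{x}_t)$ and $\rho_m g(z_t, x_t)$ below by $g_t(z_t, x_t) - c_t \sigma_{t-1}(z_t, x_t)$, and finally use the TS selection rule $g_t(z_t, \overline{x}_t) - g_t(z_t, x_t) \leq 0$. Combining these yields $r_t \leq \tfrac{1}{\rho_m}\bigl[2 c_t \sigma_{t-1}(z_t, \overline{x}_t) + c_t \sigma_{t-1}(z_t, x_t)\bigr]$.

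Finally, I would take the conditional expectation given $\cF_{t-1}$, splitting on whether $E^{f_t}(t)$ holds (probability $\geq 1 - 1/t^2$ by Lemma~\ref{lemma:uniform_bound_t:contextual}) or not (contributing at most $2\cB_f/t^2$ using $|g| \leq \cB_f$), then apply the inequality $\sigma_{t-1}(z_t, \overline{x}_t) \leq \bE[\sigma_{t-1}(z_t, x_t)\mid \cF_{t-1}]/(p - 1/t^2)$ derived above, and collect constants using $\tfrac{2}{p-1/t^2} + 1 \leq \tfrac{11}{p}$ (which holds for $t \geq 1$ since $p = 1/(4e\sqrt{\pi})$ is small). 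I expect no real obstacle: the argument is a routine lift of the non-contextual proof, and the only point requiring slight care is ensuring that the reference input $\overline{x}_t$ is defined relative to the current context $z_t$ (which is measurable with respect to $\cF_{t-1}$ under the standard contextual-bandit convention, so conditioning passes through cleanly).
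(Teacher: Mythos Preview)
Your proposal is correct and follows essentially the same approach as the paper: define $\overline{x}_t = \argmin_{x \in \cQ \setminus S_t}\sigma_{t-1}(z_t,x)$, lower-bound $\bE[\sigma_{t-1}(z_t,x_t)\mid\cF_{t-1}]$ via Lemma~\ref{lemma:prob_unsaturated:contextual}, bound $r_t$ on $E^f(t)\cap E^{f_t}(t)$ using unsaturation of $\overline{x}_t$ and the TS selection rule to get $r_t \le \tfrac{1}{\rho_m}[2c_t\sigma_{t-1}(z_t,\overline{x}_t)+c_t\sigma_{t-1}(z_t,x_t)]$, then split on $E^{f_t}(t)$ and collect constants exactly as in \eqref{eq:proof:ts:expected:regret:last:step}. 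The extra remarks you make (well-definedness of $\overline{x}_t$ via $x_t^\star\in\cQ\setminus S_t$, and $\cF_{t-1}$-measurability of $z_t$) are sound and not explicitly spelled out in the paper.
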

\begin{proof}
    To begin with, given a context vector $z_t$, define $\overline{x}_t$ as the unsaturated input with the smallest GP posterior standard deviation:
    \eq{
        \overline{x}_t = {\arg\min}_{x\in\mathcal{Q}\setminus S_t}\sigma_{t-1}(z_t,x).
    }
    This definition gives us:
    \eq{
        \begin{split}
            \mathbb{E}\left[\sigma_{t-1}(z_t,x_t) | \mathcal{F}_{t-1}\right] &\geq \mathbb{E}\left[\sigma_{t-1}(z_t,x_t) | \mathcal{F}_{t-1}, x_t \in \mathcal{Q} \setminus S_t\right]\mathbb{P}\left(x_t \in \mathcal{Q} \setminus S_t | \mathcal{F}_{t-1}\right)\\
            &\geq \sigma_{t-1}(z_t,\overline{x}_t)(p-1/t^2),
        \end{split}
        \label{eq:x_bar:sigma:contextual}
    }
    in which the second inequality makes use of \cref{lemma:prob_unsaturated}, as well as the definition of $\overline{x}_t$. Next, conditioned on both events $E^{f}(t)$ and $E^{f_t}(t)$, we can upper-bound the instantaneous regret as
    \als{
        r_t &= g(z_t, x_t^\star) - g(z_t, x_t) \\
        &= \frac{1}{\rho_m}\left[\rho_m g(z_t, x_t^\star) - \rho_m g(z_t,\overline{x}_t) + \rho_m g(z_t,\overline{x}_t) - \rho_m g(z_t,x_t)\right]\\
        &\leq \frac{1}{\rho_m}\left[\rho_m \Delta(z_t,\overline{x}_t) + g_t(z_t,\overline{x}_t) + c_t \sigma_{t-1}(z_t,\overline{x}_t) - g_t(z_t,x_t) + c_t \sigma_{t-1}(z_t,x_t)\right]\\
        &\leq \frac{1}{\rho_m}\left[c_t\sigma_{t-1}(z_t,\overline{x}_t) + g_t(z_t,\overline{x}_t) + c_t \sigma_{t-1}(z_t,\overline{x}_t) - g_t(z_t,x_t) + c_t \sigma_{t-1}(z_t,x_t)\right]\\
        &\leq \frac{1}{\rho_m}\left[2c_t\sigma_{t-1}(z_t,\overline{x}_t) + c_t \sigma_{t-1}(z_t,x_t)\right],
    }
    where the first inequality follows from \cref{lemma:uniform_bound:contextual} and \cref{lemma:uniform_bound_t:contextual} as well as the definition of $\Delta(\cdot)$ (\cref{def:saturated_set:contextual}), the second inequality follows because $\overline{x}_t$ is unsaturated, and the last inequality follows because $g_t(z_t,\overline{x}_t) - g_t(z_t,x_t) \leq 0$ since $x_t = {\arg\max}_{x\in\mathcal{Q}}g_t(z_t,x)$.
    Next, the proof in \cref{eq:proof:ts:expected:regret:last:step} can be immediately applied, hence producing the upper bound on the instantaneous regret in this lemma.
\end{proof}

Now the remaining steps in the proof of \cref{thm:tsRegret} in \cref{app:subsec:proof:ts} immediately follow. Specifically, we can first define a stochastic process $(Y_t:t=0,\ldots,T)$ in the same way as \cref{def:stochastic:process}, and then use \cref{lemma:sup:martingale} to show that $(Y_t:t=0,\ldots,T)$ is super-martingale. Lastly, we can apply the Azuma Hoeffding Inequality to $(Y_t:t=0,\ldots,T)$ in the same way as \cref{thm:tsRegret}, which completes the proof.

\section{More Experimental Details}
\label{app:sec:more:experimenta:detail}
In our experiments, since it has been repeatedly observed that the theoretical value of $\beta_t$ is overly conservative~\cite{ICML10_srinival2010gaussian}, we set $\beta_t$ to be a constant ($\beta_t=1$) for all methods. In all experiments and for all methods, we use the SE kernel for the GP and optimize the GP hyperparameters by maximizing the marginal likelihood after every 10 iterations. In all experiments where we report the simple regret (e.g., \cref{fig:exp:synth}, \cref{fig:exp:automl}, etc.), we calculate the simple regret in an iteration using only those function evaluations which have converted (i.e., we ignore all pending observations). 

\subsection{Synthetic Experiment}
In the synthetic experiment, the objective function $f$ is sampled from a GP using the SE kernel with a lengthscale of $0.02$ defined on a 1-dimensional domain. The domain is an equally spaced grid within the range $[0,1]$ with a size $1000$.
The sampled function is normalized into the range of $[0,1]$.
\begin{figure}
     \centering
     \begin{tabular}{cc}
         \includegraphics[width=0.45\linewidth]{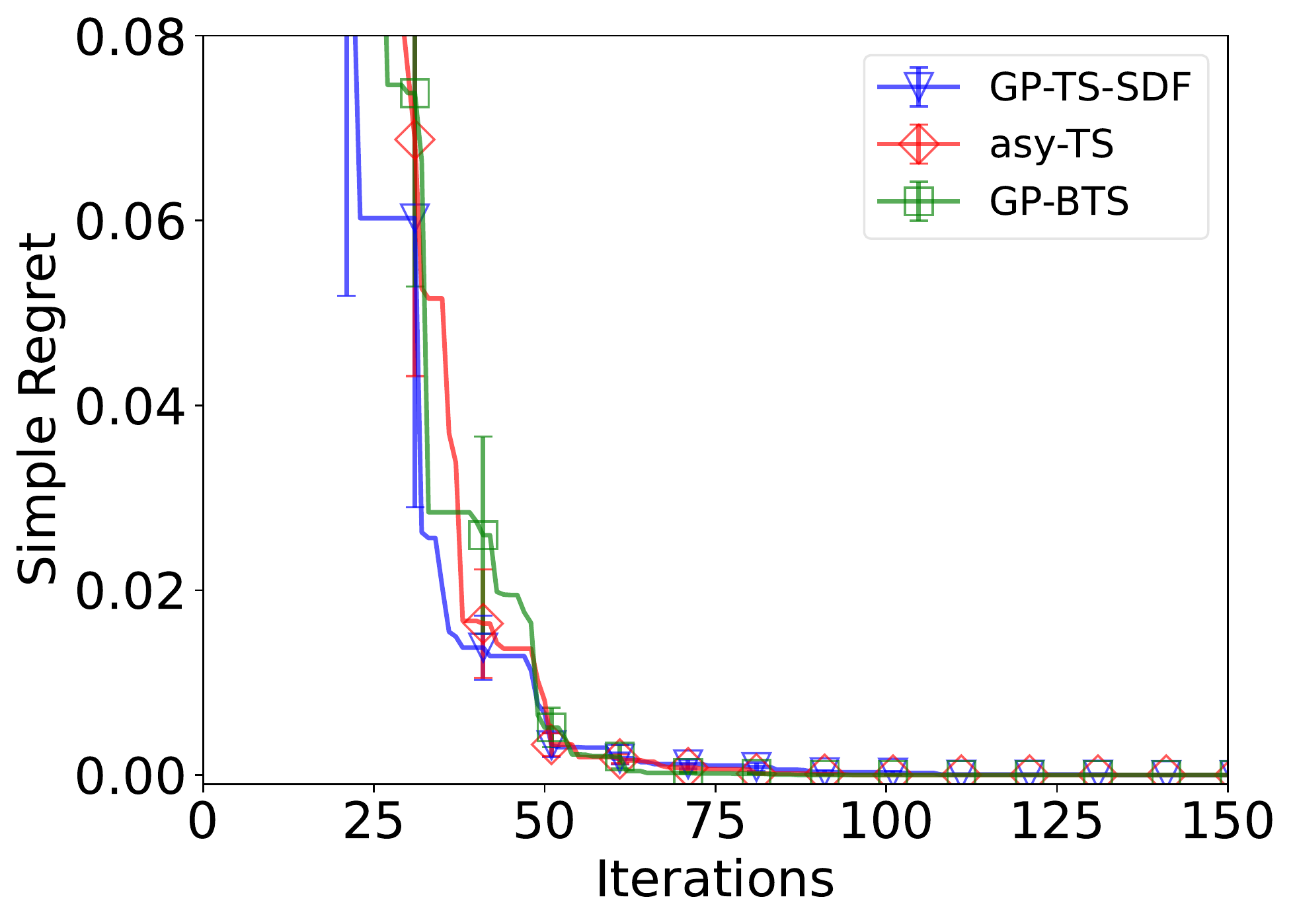}& 
         \includegraphics[width=0.45\linewidth]{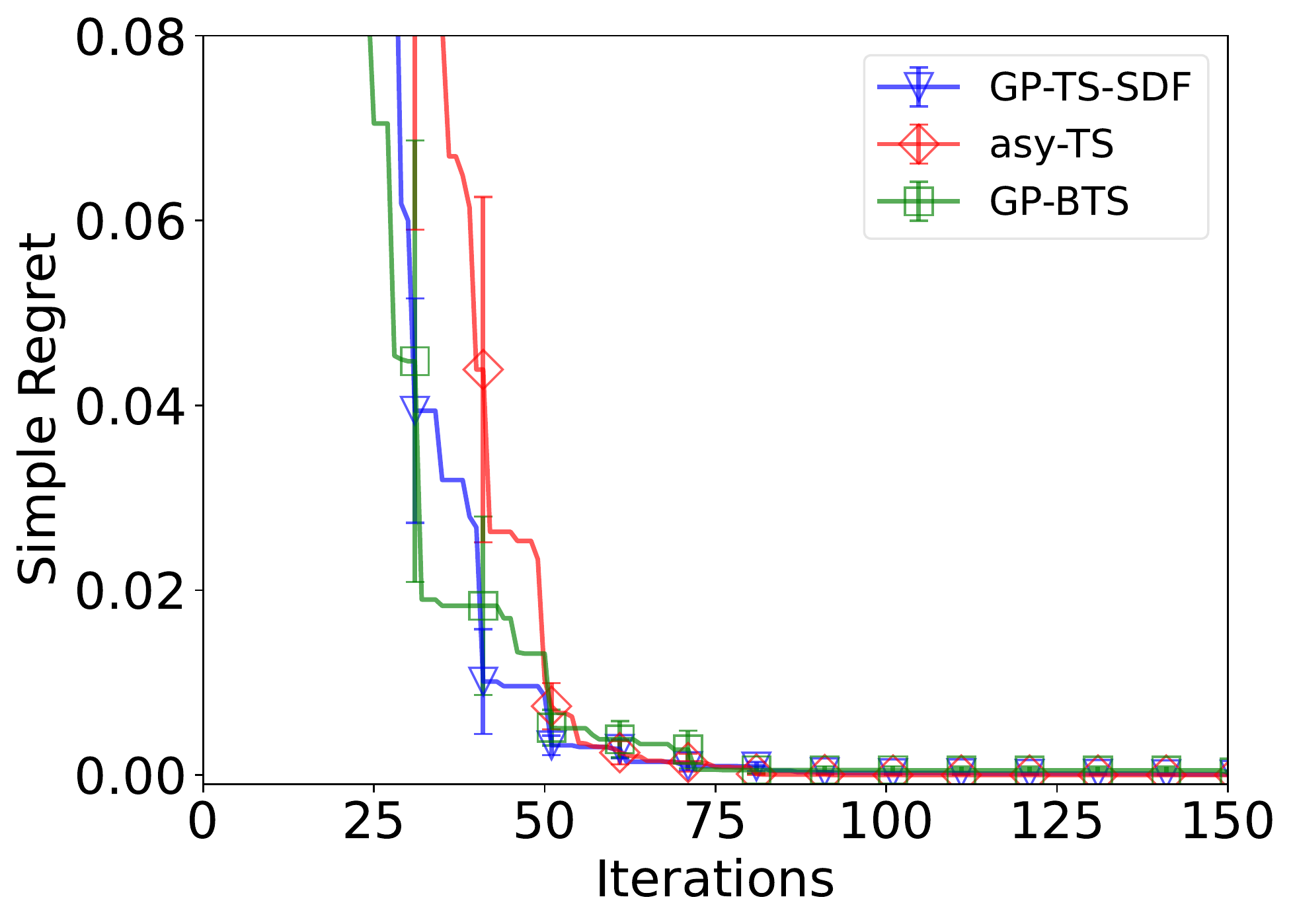}\\
         {(a)} & {(b)}
     \end{tabular}
\vspace{-2mm}
     \caption{
		Performances of GP-TS-SDF and other TS-based baseline methods for (a) stochastic and (b) deterministic delay distributions in the synthetic experiment (\cref{subsec:exp:synth}).
     }
     \label{fig:exp:synth:ts}
\vspace{-4mm}
\end{figure}

\subsection{Real-world Experiments}
\label{app:subsec:real:experiments}
In the SVM hyperparameter tuning experiment, the diabetes diagnosis dataset can be found at \url{https://www.kaggle.com/uciml/pima-indians-diabetes-database}. We use $70\%$ of the dataset as the training set and the remaining $30\%$ as the validation set. For every evaluated hyperparameter configuration, we train the SVM using the training set with the particular hyperparameter configuration and then evaluate the learned SVM on the validation set, whose validation accuracy is reported as the observations. We tune the penalty parameter within the range of $[10^{-4},100]$ and the RBF kernel parameter within $[10^{-4},10]$.

In the experiment on hyperparameter tuning for CNN, we use the MNIST dataset and follow the default training-testing sets partitions given by the PyTorch package. The CNN consists of one convolutional layer (with 8 channels and convolutional kernels of size 3), followed by a max-pooling layer (with a pooling size of 3), and then followed by a fully connected layer (with 8 nodes). We use the ReLU activation function and the Adam optimizer. We tune the batch size (within $[128,512]$), the learning rate (within $[10^{-6},1]$) and the learning rate decay (within $[10^{-6},1]$).

In the hyperparameter tuning experiment for LR, the breast cancer dataset we have adopted can be found at \url{https://archive.ics.uci.edu/ml/datasets/Breast+Cancer+Wisconsin+(Diagnostic)}. Similar to the SVM experiment, we use $70\%$ of the dataset as the training set and the remaining $30\%$ as the testing set. Here we tune three hyperparameters of the LR model: the batch size (within $[32, 128]$), the learning rate (within $[10^{-6},1]$) and learning rate decay (within $[10^{-6},1]$).

\subsection{Real-world Experiments on Contextual Gaussian Process Bandits}
For the multi-task BO experiment, the tabular benchmark on hyperparameter tuning of SVM is introduced by the work of~\cite{wistuba2015learning} and can be found at \url{https://github.com/wistuba/TST}. The dataset consists of 50 tasks, and each task corresponds to a different classification problem with a different dataset. The domain of hyperparameter configurations in this benchmark (which is shared by all 50 tasks) consists of discrete values for six hyperparameters: 3 binary parameters indicating (via one-hot encoding) whether the linear, polynomial, or RBF kernel is used, and the penalty parameter, the degree for the polynomial kernel, and the RBF kernel parameter. The size of the domain is 288. As a result, for each task, the validation accuracy for each one of the 288 hyperparameter configurations is recorded as the observation. As we have mentioned in the main text (\cref{subsec:exp:automl}), each of the 50 tasks is associated with some meta-features, and here we use the first six meta-features as the contexts. Each of the 50 tasks is associated with a 6-dimensional context vector $z_t$, which is used to characterize this particular task. For the non-stationary BO task, the dataset and the other experimental settings (e.g., the tuned hyperparameters, the range of the hyperparameters, etc.) are the same as the SVM hyperparameter tuning experiment in \cref{subsec:exp:automl}. Refer to \cref{app:subsec:real:experiments} for more details.

\vskip .15in
\hrule height1pt

\end{document}